\documentclass{article}

\usepackage{microtype}
\usepackage{graphicx}
\usepackage{subcaption}
\usepackage{booktabs} 

\usepackage{hyperref}
\hypersetup{
    colorlinks=true,
    linkcolor=red
}

\usepackage[preprint]{icml2026}

\usepackage{amsmath}
\usepackage{amssymb}
\usepackage{mathtools}
\usepackage{amsthm}
\usepackage{xcolor}

\usepackage{titletoc}
\usepackage{tocloft}
\usepackage{bbm}
\usepackage{bm}

\usepackage{multirow}
\usepackage[table]{xcolor}
\usepackage{adjustbox}
\usepackage{arydshln} 
\usepackage{pifont}
\usepackage{hyperref}
\usepackage{xcolor}

\definecolor{royalblue}{rgb}{0.21,0.49,0.74}

\definecolor{lightpurple}{HTML}{9999FF}

\usepackage[capitalize,noabbrev]{cleveref}

\theoremstyle{plain}
\newtheorem{theorem}{Theorem}[section]

\newtheorem{definition}[theorem]{Definition}
\newtheorem{assumption}[theorem]{Assumption}
\newtheorem{proposition}[theorem]{Proposition}

\theoremstyle{remark}
\newtheorem{remark}{Remark}[section]

\newcommand{\epsM}{\epsilon_{\mathrm{M}}}
\newcommand{\epsR}{\epsilon_{\mathrm{R}}}
\newcommand{\E}{\mathbb{E}}

\newcommand{\Prob}{\mathbb{P}}
\newcommand{\Correct}{\mathbb{I}}

\usepackage[textsize=tiny]{todonotes}

\icmltitlerunning{Knowing When to Answer: Reliable Audio-Visual Question Answering}

\begin{document}

\twocolumn[
  \icmltitle{Knowing When to Answer: Adaptive Confidence Refinement for \\ Reliable Audio-Visual Question Answering}



  \icmlsetsymbol{equal}{*}
  \icmlsetsymbol{equal2}{$\dagger$}

  \begin{icmlauthorlist}
    \icmlauthor{Dinh Phu Tran}{equal,yyy}
    \icmlauthor{Jihoon Jeong}{equal,comp,comp3}
    \icmlauthor{Saad Wazir}{yyy}
    \icmlauthor{Seongah Kim}{yyy}
    \icmlauthor{Thao Do}{yyy}
    \icmlauthor{Cem Subakan}{equal2,comp,comp3}
    \icmlauthor{Daeyoung Kim}{equal2,yyy}
  \end{icmlauthorlist}

  \icmlaffiliation{yyy}{School of Computing, KAIST, Republic of Korea}
  \icmlaffiliation{comp}{Laval University}
  \icmlaffiliation{comp3}{Mila-Quebec AI Institute}

  \icmlcorrespondingauthor{Dinh Phu Tran}{phutx2000@kaist.ac.kr}

  \icmlkeywords{Machine Learning, ICML}

  \vskip 0.3in
]



\printAffiliationsAndNotice{}  

\begin{abstract}

We present a formal problem formulation for \textit{Reliable} Audio-Visual Question Answering ($\mathcal{R}$-AVQA), where we prefer abstention over answering incorrectly. 
While recent AVQA models have high accuracy, their ability to identify when they are likely wrong and their consequent abstention from answering remain underexplored areas of research. To fill this gap, we explore several approaches and then propose Adaptive Confidence Refinement (ACR), a lightweight method to further enhance the performance of $\mathcal{R}$-AVQA. Our key insight is that the Maximum Softmax Probability (MSP) is Bayes-optimal only under strong calibration, a condition usually not met in deep neural networks, particularly in multimodal models. Instead of replacing MSP, our ACR maintains it as a primary confidence signal and applies input-adaptive residual corrections when MSP is deemed unreliable. ACR introduces two learned heads: i) a Residual Risk Head that predicts low-magnitude correctness residuals that MSP does not capture, and ii) a Confidence Gating Head to determine MSP trustworthiness. Our experiments and theoretical analysis show that ACR consistently outperforms existing methods on in- and out-of-disrtibution, and data bias settings across three different AVQA architectures, establishing a solid foundation for $\mathcal{R}$-AVQA task. The code and checkpoints will be available upon acceptance \href{https://github.com/PhuTran1005/R-AVQA}{at here}

\end{abstract}    
\section{Introduction}

\begin{figure}[t]
    \centering
    \includegraphics[width=1.0\linewidth]{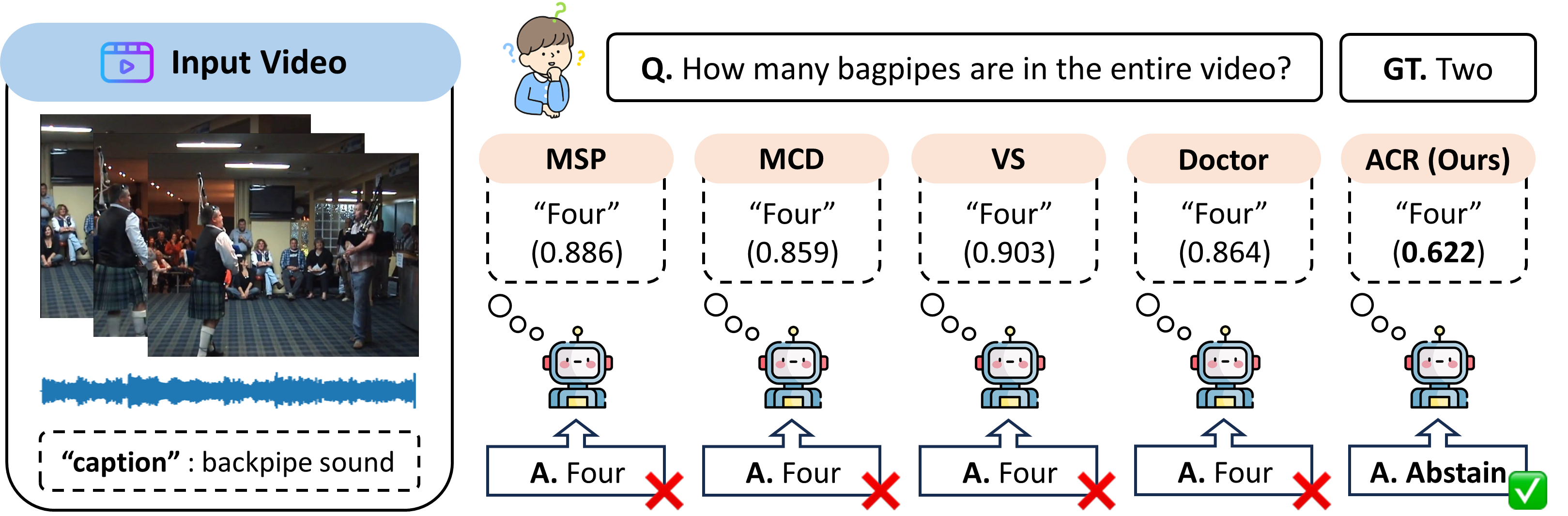}
    \caption{$\mathcal{R}$-AVQA requires knowing \emph{when to answer}. In an illustrative example from MUSIC-AVQA dataset, while standard confidence-based baselines (MSP, MCD, VS, and Doctor) produce incorrect answers with high confidence, our ACR effectively identifies unreliable predictions with a lower confidence score.}
    \label{fig:teaser}
    \vspace{-2mm}
\end{figure}

Audio-Visual Question Answering (AVQA) is a crucial multimodal reasoning task for developing assistants that help users, including those with sensory impairments, in daily activities. \cite{kumar2022deep, patel2025enhancing}. To be truly useful in these contexts, AVQA models must exhibit high reliability \cite{lutkenhoner2013predictive, asan2020artificial}.
Improving accuracy is crucial for reliable AVQA systems; however, accuracy alone is insufficient. All models have inherent limitations and are prone to making incorrect predictions. 
In practice, providing an incorrect answer can have significant consequences, particularly for users with sensory impairments, who cannot verify responses independently.

An approach to avoid giving incorrect answers is to allow the model to abstain from making a prediction, as outlined in the selective prediction studies \cite{geifman2017selective, geifman2019selectivenet, kamath-etal-2020-selective, xin2021art}. As shown in Fig. \ref{fig:teaser}, standard confidence-based baselines (MSP, MCD, VS, Doctor) produce incorrect answers with high confidence, indicating an overconfidence issue. In contrast, our ACR effectively lowers confidence levels to abstain from answering when the model lacks sufficient evidence, such as when counting bagpipes is uncertain, prompting it to respond with ``I don't know," similar to human decision-making.
Moreover, prior works on selective prediction have mainly focused on unimodal \cite{geifman2019selectivenet} or bimodal \cite{whitehead2022reliable, dancette2023improving} settings. In contrast, AVQA involves a trimodal interaction among audio, visual, and linguistic inputs. Each modality may be valid or in-distribution when considered independently, but their combination can be challenging, especially when audio and visual signals are weakly aligned or even conflicting, making $\mathcal{R}$-AVQA more difficult.

In this work, we formalize the concept of reliability in AVQA by framing it as a selective prediction problem \cite{geifman2017selective}, where a model must either provide an answer or choose not to respond. This framework requires two capabilities that remain underexplored in AVQA: 1) the ability to estimate predictive uncertainty accurately, and 2) the effective answer abstention strategies.
Under this framework, we show that existing AVQA models have substantial room for improvement in reliability. Using the widely adopted maximum softmax probability (MSP) \cite{geifman2017selective} method, we observe severely low coverage under strict low-risk constraints (abstain from too many questions to maintain low risk).
For instance, on the MUSIC-AVQA dataset, the state-of-the-art QA-TIGER method can answer only about \textbf{14\%} of questions to maintain a 1\% error rate, despite the overall AVQA accuracy of over \textbf{77\%}. Similar limitations are observed across other selective prediction methods, highlighting the limited practical utility of current AVQA models when reliability is critical.

Finally, we propose \emph{Adaptive Confidence Refinement (ACR)}, a learnable confidence estimation framework for $\mathcal{R}$-AVQA. To the best of our knowledge, this work is the first to propose a learnable confidence estimation scheme in a trimodal AVQA setting, explicitly designed to determine when a model should answer and when should abstain. Unlike prior work on selective prediction that relies on fixed heuristics or post-hoc confidence estimation, ACR learns to refine confidence scores of the MSP baseline by utilizing multimodal features and pre-softmax logits to capture uncertainty signals that MSP often overlooks, enabling more reliable confidence estimation under complex cross-modal interactions. We extensively evaluate ACR against widely used baselines, including MSP \cite{geifman2017selective}, Monte Carlo Dropout (MCD) \cite{gal2016dropout}, calibration-based methods (VS) \cite{guo2017calibration}, and Doctor \cite{granese2021doctor}.
Across three AVQA datasets and three representative backbone models, ACR consistently achieves state-of-the-art risk-coverage trade-offs, showing its effectiveness and robustness for developing reliable AVQA systems.

In summary, our main contributions are:
\begin{enumerate}
    \setlength{\itemsep}{0.001em}
    \item To the best of our knowledge, this is the first study to analyze and evaluate the reliability of AVQA models under a selective prediction framework.

    \item We explore various methods for incorporating abstention, then propose \emph{Adaptive Confidence Refinement (ACR)}, a learnable confidence estimation approach that integrates MSP with a learned residual risk signal through a simple linear fusion, modulated by an input-adaptive weighting mechanism. 


     \item We establish new benchmarks and strong baselines for the $\mathcal{R}$-AVQA task, and demonstrate that ACR consistently outperforms existing methods across three representative AVQA backbone models, both in- and under distribution shifts, including out-of-distribution generalization and data bias scenarios.

    
\end{enumerate}

\section{Related Work}
\vspace{-1mm}

\textbf{Audio-Visual Question Answering}. AVQA has garnered significant attention in recent years with various methods aimed at enhancing answer accuracy \cite{schwartz2019simple,yun2021pano,li2022learning,li2023progressive,li2024boosting,li2024object,kim2025question}. Recent methods \cite{li2024boosting, kim2025question} have shown strong results on several standard benchmarks, including MUSIC-AVQA \cite{li2022learning}, MUSIC-AVQA-R \cite{ma2024look}, and MUSIC-AVQA-v2.0 \cite{liu2024tackling} datasets. However, existing AVQA models have no explicit abstention mechanism and are designed to answer for every input question given audio-visual data.
To fill this gap, in this work we explore and formulate AVQA models through the lens of reliability by framing the task as a selective prediction problem and equipping several representative AVQA models \cite{li2022learning, li2024boosting, kim2025question} with the ability to abstain from providing an answer. Our findings reveal that, despite their high accuracy, recent AVQA models perform poorly in a reliability setting. This highlights substantial room for improving their reliability.

\textbf{Selective Prediction}. In the selective prediction \cite{geifman2017selective, liu2019deep, kamath-etal-2020-selective}, a selection function assigns a confidence score to each model prediction. A common choice for selective prediction is the maximum softmax probability (MSP) \cite{geifman2017selective}. An abstention decision is made by comparing this score to a predefined threshold: predictions with confidence below the threshold are rejected, while those above are accepted.
Selective prediction can be divided into external and integrated methods. External methods create a selector on top of a fixed predictor, such as auxiliary prediction head \cite{mielke2022reducing}, LoRA-based adaptations \cite{chen2023adaptation}, or auxiliary vision tools \cite{srinivasan2024selective}. Integrated methods jointly train predictor and selector during shared optimization phases, using mechanisms like SelectiveNet \cite{geifman2019selectivenet} or an abstention class \cite{liu2019deep}. Yet, joint training from scratch often leads to optimization instabilities requiring a specialized training scheme to optimize both objectives. Moreover, AVQA models typically use pretrained encoders for audio, video, and text that make these methods unsuitable and ineffective for $\mathcal{R}$-AVQA.
Unlike prior work, we do not replace MSP with a new confidence estimator. 
We propose an input-adaptive correction mechanism that focuses on refining MSP, which is a strong baseline for selective prediction \cite{jiang2021can}. 
More related work is given in Appendix \ref{section:A_2}.
\vspace{-1mm}
\section{Reliable Audio-Visual Question Answering}
\label{sec:methodology}

\begin{figure}[t]
    \centering
    \includegraphics[width=1.0\linewidth]{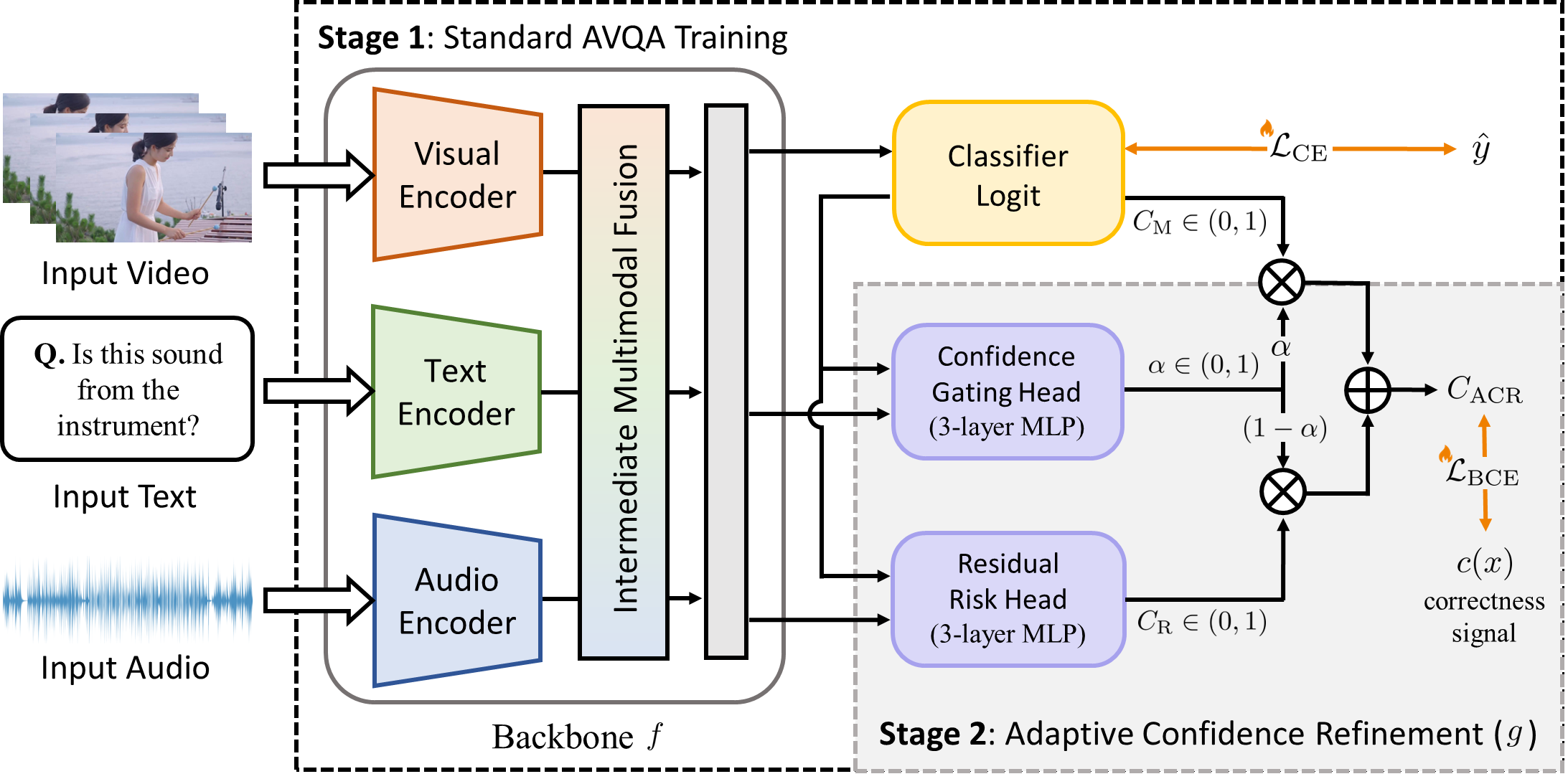}
    \caption{Overview of Adaptive Confidence Refinement framework for $\mathcal{R}$-AVQA. Stage 1 trains a standard AVQA model using cross-entropy loss. Stage 2 freezes the AVQA model and trains two additional heads (\textcolor{lightpurple}{purple blocks}) to achieve a more reliable input-adaptive confidence using binary correctness supervision.} 
    \label{fig:main_arch}
    \vspace{-4mm}
\end{figure}

\subsection{Problem Definition and Notation}

We extend the standard AVQA task to a \textit{reliable} setting, in which a model can either answer or abstain from answering. We call this setting as a \textit{Reliable AVQA} ($\mathcal{R}$-AVQA).

In a standard AVQA task, each input consists of an audio-visual data pair with a natural language question. Formally, let $\boldsymbol{x} = (\boldsymbol{v}, \boldsymbol{a}, \boldsymbol{q}) \in \mathcal{X}$ denote a video $\boldsymbol{v}$, an audio $\boldsymbol{a}$, and a question $\boldsymbol{q}$. The answer space is \(\mathcal{Y} = \{1, \ldots, K\}\), corresponding to a predefined set of candidate answers, and data are drawn from distribution $\mathcal{D}_{\mathcal{X},\mathcal{Y}}$ over $\mathcal{X} \times \mathcal{Y}$. A selective classifier is defined as a pair \((f, g)\), where \(f: \mathcal{X} \rightarrow \mathbb{R}^K\) is a base AVQA model, and \(g: \mathcal{X} \rightarrow \{0, 1\}\) is a selector function that decides whether to predict or abstain. Formally,
\begin{equation}
\label{eq:selective_classifier}
\scalebox{0.9}{$
(f, g)(\boldsymbol{x}) \triangleq 
\begin{cases} 
f(\boldsymbol{x}) & \text{if } g(\boldsymbol{x}) = 1, \\
\text{``abstain"} & \text{if } g(\boldsymbol{x}) = 0.
\end{cases}$}
\end{equation}
When \(g(\boldsymbol{x}) = 0\), the model refrains from answering input questions. This is particularly significant in AVQA, where incomplete cross-modal evidence or temporal ambiguity can result in confidently incorrect answers. In practice, the selector function $g$ is often implemented by applying a tunable threshold \(\gamma \in \mathbb{R}\) to a real-valued confidence measure:
\begin{equation}
\label{eq:selective_threshold}
g_{s, \gamma}(\boldsymbol{x}) \triangleq \mathbb{I}[s(\boldsymbol{x}) > \gamma],
\end{equation}
where \(s: \mathcal{X} \to \mathbb{R}\) is a confidence function that estimates reliability of prediction \(f(\boldsymbol{x})\). An ideal \(s(\boldsymbol{x})\) will yield high scores when \(f(\boldsymbol{x})\) is likely correct and low scores otherwise.

\subsection{Evaluation Metrics}
\label{sec:evaluation_metrics}


\textbf{Risk and Coverage}. \textit{Coverage} is the portion of input questions on which the model chooses to answer, while \textit{risk} quantifies the prediction error on this fraction of questions \cite{geifman2017selective, varshney2022investigating}.
\begin{subequations}
\begin{align}
\begin{split}
\text{Coverage:} \quad \mathcal{C}_{s,\gamma} \triangleq \mathbb{E}_{\boldsymbol{x} \sim \mathcal{D}_{\mathcal{X},\mathcal{Y}}}[g_{s,\gamma}(\boldsymbol{x})], \label{eq:coverage}
 \end{split}\\
\begin{split}
\text{Risk:} \quad \mathcal{R}_{s,\gamma} \triangleq \frac{\mathbb{E}_{(\boldsymbol{x},y) \sim \mathcal{D}_{\mathcal{X},\mathcal{Y}}}[\ell(f(\boldsymbol{x}), y) g_{s,\gamma}(\boldsymbol{x})]}{\mathcal{C}_{s,\gamma}}. \label{eq:risk}
\end{split}
\end{align}
\end{subequations}
$\ell(f(\bm{x}), y)$ equals 1 if $f(\boldsymbol{x}) \neq y$ and 0 otherwise, denoting the 0/1 loss \cite{geifman2017selective} to measure the accuracy of predicted $f(\bm{x})$ against true answer $y$. The risk $\mathcal{R}_{s,\gamma}$ represents the conditional error rate among questions the model elects to answer.
The $\mathcal{C}@\mathcal{R}$ metric is defined as the maximum coverage satisfying the risk constraint:
\begin{equation}
\label{eq:coverage_at_risk}
\scalebox{0.9}{$
\mathcal{C}@\mathcal{R} \triangleq \max \left\{ \mathcal{C}_{s,\gamma} \mid \mathcal{R}_{s,\gamma} \leq r \right\}.$}
\end{equation}
Eq. \ref{eq:coverage_at_risk} ensures that we evaluate the best possible operating point for each method at the specified risk tolerance $r$.
In practice, the risk level users are willing to tolerate varies across application contexts. 
Thus, we evaluate models using coverage at target risk levels ($\mathcal{C}@\mathcal{R}$), including 1\%, 5\%, 10\%, and 20\% risk. This metric directly captures the practical utility of a selective model: higher coverage at a fixed risk indicates a more reliable system. We also measure the Area Under Risk-Coverage curve (AURC) \cite{geifman2018bias} to assess performance across all operating points.
To mitigate sensitivity to the threshold $\gamma$ in $g$, we report the \textit{maximum achievable coverage} at each risk level, enabling direct and fair comparison of different selection functions.

\textbf{Expected Calibration Error.}
The goal of selective prediction is to optimize the balance between coverage and risk. This goal heavily relies on the selector $g$ that depends on a well-calibrated  function $s(\bm{x})$ \cite{guo2017calibration}.
To assess calibration, we use the Expected Calibration Error (ECE), which quantifies the discrepancy between a model’s predicted average confidence and its average accuracy. ECE serves as a diagnostic tool for evaluating the reliability of $s(\bm{x})$. When a model is well-calibrated, confidence scores provide reliable estimates of correctness probability, leading to predictable selective risk as a function of coverage and reducing systematic overconfidence.
Given a test set of size $N$, by dividing the predictions into $m$ bins ($m \ll N$) with $B_j$ being the set of indices in bin $j$, ECE is defined as: 
\begin{equation}
    \adjustbox{max width=0.4\textwidth}{$\displaystyle
    \mathrm{ECE} \triangleq \sum_{j=1}^{m} \frac{|B_j|}{N} \Bigg| \underbrace{\frac{\sum_{i \in B_j} \Correct[\hat{y}_f(x_i) = y_i]}{|B_j|}}_{\text{Accuracy}} - \underbrace{\frac{\sum_{i \in B_j} s(x_i)}{|B_j|}}_{\text{Confidence}} \Bigg|
    $}
    \label{eq:ece}
\end{equation}
\vspace{-6mm}
\subsection{Baseline for Selection Functions}

\textbf{Maximum Softmax Probability.}
A widely used approach for selective prediction is the maximum softmax probability (MSP) \cite{geifman2017selective} that is a simple, cost-free method that captures the peak of the predictive distribution. 
MSP confidence score is formulated as follows:
\begin{equation}
    C_{\mathrm{M}}(\boldsymbol{x}) = \max_{k \in \mathcal{Y}} \Prob(Y = k \mid \boldsymbol{x}) = \max_{k} \frac{\exp(l_k)}{\sum \limits_j \exp(l_j)}
\end{equation}
where $l_k$ is the logit for class $k$.
We now establish the theoretical conditions for MSP optimality in selective prediction.

\begin{definition}[Calibration]
\label{def:calibration}
Let $\hat{y} = \mathrm{softmax}(f(\boldsymbol{x}))$ be predicted class, $c(\boldsymbol{x}) = \Correct[\hat{y} = y^*] \in \{0, 1\}$ be correctness:

    \textbf{i) Weak (Marginal) Calibration}: A confidence score function $s(\boldsymbol{x}) \in [0, 1]$ is weakly calibrated if
    \[\Prob[c(\boldsymbol{x}) = 1 \mid s(\boldsymbol{x}) = p] = p, \quad \forall p \in [0, 1].\]
    \emph{Population-level property}: Predictions grouped by confidence $p$ have accuracy $p$ on average.
    
    \textbf{ii) Strong (Pointwise) Calibration}: A confidence score function $s(\boldsymbol{x})$ is strong calibrated if
    \[s(\boldsymbol{x}) =  \Prob[c(\boldsymbol{x}) = 1 \mid \boldsymbol{x}], \quad \forall \boldsymbol{x}.\]
    \emph{Sample-level property}: Confidence reflects the actual probability of correctness for each individual input.
\end{definition}

\begin{proposition}[MSE Reduction Implies Ranking Improvement]
\label{prop:mse_ranking}
Let $g^*(\boldsymbol{x}) = \Prob[c(\boldsymbol{x}) = 1 \mid \boldsymbol{x}]$ be the Bayes-optimal selection function. For any confidence estimator $s(\boldsymbol{x})$, define the mean squared error as $\mathrm{MSE}(s) = \E\bigl[\bigl(s(\boldsymbol{x}) - c(\boldsymbol{x}) \bigl)^2 \bigl]$.
If estimator $s_1$ has lower MSE than estimator $s_2$, then $s_1$ is a closer approximation to $g^*$ in terms of MSE:
\[
\E\bigl[\bigl(s_1\bigl(\boldsymbol{x}) - g^*(\boldsymbol{x}) \bigl)^2 \bigl] < \E\bigl[\bigl(s_2(\boldsymbol{x}) - g^*(\boldsymbol{x}) \bigl)^2 \bigl]
\]
Consequently, ranking by $s_1$ more closely approximates the Bayes-optimal ranking induced by $g^*$.
\end{proposition}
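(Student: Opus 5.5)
The argument rests on the bias--variance (Pythagorean) decomposition of the Brier-type loss around the conditional mean. I will show that for any measurable confidence function $s$ with values in $[0,1]$,
\[
\mathrm{MSE}(s) = \E\bigl[\bigl(s(\boldsymbol{x}) - g^*(\boldsymbol{x})\bigr)^2\bigr] + \E\bigl[\bigl(c(\boldsymbol{x}) - g^*(\boldsymbol{x})\bigr)^2\bigr].
\]
The second term does not depend on $s$. Comparing $s_1$ and $s_2$ then reduces to comparing their distances to $g^*$, and the strict inequality transfers directly.

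\textbf{Step 1: expand the square.} Write $s - c = (s - g^*) + (g^* - c)$, which gives
\[
(s-c)^2 = (s-g^*)^2 + 2(s-g^*)(g^*-c) + (g^*-c)^2 .
\]

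\textbf{Step 2: kill the cross term.} Condition on $\boldsymbol{x}$. Both $s(\boldsymbol{x})$ and $g^*(\boldsymbol{x})$ are functions of $\boldsymbol{x}$, and $\E[c(\boldsymbol{x}) \mid \boldsymbol{x}] = \Prob[c(\boldsymbol{x}) = 1 \mid \boldsymbol{x}] = g^*(\boldsymbol{x})$ by definition. Here $c$ depends on the label $y^*$, so the expectation is over the joint distribution $\mathcal{D}_{\mathcal{X},\mathcal{Y}}$. The tower property then gives
\[
\E\bigl[(s-g^*)(g^*-c)\bigr] = \E\bigl[(s-g^*)\,\E[g^*-c \mid \boldsymbol{x}]\bigr] = 0 .
\]
Integrability is immediate because all quantities lie in $[0,1]$.

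\textbf{Step 3: conclude.} Set $\sigma^2 = \E[(c-g^*)^2] = \E[g^*(1-g^*)]$, the irreducible Bayes noise, which is the same for every estimator. Then $\mathrm{MSE}(s_i) = \E[(s_i - g^*)^2] + \sigma^2$ for $i = 1, 2$. So $\mathrm{MSE}(s_1) < \mathrm{MSE}(s_2)$ holds if and only if $\E[(s_1-g^*)^2] < \E[(s_2-g^*)^2]$, which is the displayed claim. In particular $g^*$ itself minimizes the MSE, consistent with the strong calibration of Definition~\ref{def:calibration}.

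\textbf{Main obstacle: the ranking consequence.} $L^2$ closeness to $g^*$ does not by itself imply a better ranking pointwise, because a monotone transform of $g^*$ can be far from $g^*$ in MSE yet induce the identical ranking. I will therefore present the last sentence as an interpretive consequence, and make it quantitative via a pairwise bound. For a pair $\boldsymbol{x}, \boldsymbol{x}'$ with $g^*(\boldsymbol{x}) - g^*(\boldsymbol{x}') > \delta$, the estimator $s$ can misorder the pair only if $|s - g^*| > \delta/2$ at one of the two points. By Markov's inequality, this has probability at most $8\,\E[(s-g^*)^2]/\delta^2$. Hence lowering the MSE tightens the upper bound on the rate of misordering pairs that the Bayes selector separates by a margin $\delta$. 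I will state this as the precise sense in which ranking by $s_1$ "more closely approximates" the Bayes-optimal ranking, rather than claim a strict pointwise ranking improvement.
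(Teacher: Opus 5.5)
Your proof is correct and follows the paper's route: in the ``Implication for Ranking'' step of the proof of Theorem~\ref{thm:fusion_benefit}, the paper cites the Brier decomposition $\E[(s-c)^2]=\E[(s-g^*)^2]+\E[g^*(1-g^*)]$ and uses the fact that the second term does not depend on $s$, whereas you derive the decomposition yourself by cancelling the cross term with the tower property. The paper asserts the ranking consequence without further argument, so your caveat about monotone transforms and your Markov pairwise bound make rigorous a step the paper leaves informal, rather than departing from the paper's approach.
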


\begin{remark}
Since $g^*(\boldsymbol{x})$ induces the optimal ranking for selective prediction \cite{geifman2017selective}, any estimator that better approximates $g^*$ will produce rankings closer to optimal, thereby improving metrics such as AURC.
\end{remark}


\begin{theorem}[MSP Optimality under Strong Calibration]
\label{thm:msp_optimal}
Consider a classifier that produces class probabilities
$\Prob(Y = k \mid \boldsymbol{x})$ for each input $\boldsymbol{x}$.
If the MSP confidence function $C_{\mathrm{M}}(\boldsymbol{x}) = \max_k \Prob(Y=k|\boldsymbol{x})$ is strongly calibrated (Definition~\ref{def:calibration}), then 1) MSP equals the Bayes-optimal selection function: $C_{\mathrm{M}}(\boldsymbol{x}) = g^*(\boldsymbol{x}) = \Prob[c(\boldsymbol{x}) = 1 \mid \boldsymbol{x}]$. 2) Ranking examples by MSP induces the Bayes-optimal ranking for selective prediction. 3) This ranking minimizes risk at every coverage level, thereby achieving minimum AURC.
\end{theorem}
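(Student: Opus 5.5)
Part 1) is immediate: strong calibration (Definition~\ref{def:calibration}(ii)) applied with $s = C_{\mathrm{M}}$ states that $C_{\mathrm{M}}(\boldsymbol{x}) = \Prob[c(\boldsymbol{x}) = 1 \mid \boldsymbol{x}]$ for every $\boldsymbol{x}$, and this right-hand side is the definition of $g^*(\boldsymbol{x})$ from Proposition~\ref{prop:mse_ranking}. It is worth noting why this is natural. If the classifier's probabilities were themselves the true conditionals and $\hat y = \arg\max_k \Prob(Y=k\mid\boldsymbol{x})$, then $\Prob[c=1\mid\boldsymbol{x}] = \Prob(Y=\hat y\mid\boldsymbol{x}) = \max_k \Prob(Y=k\mid\boldsymbol{x})$. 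The hypothesis of the theorem is weaker than this, and it is exactly what is needed.

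For 2) and 3) I would first make precise what "Bayes-optimal ranking" means. The selector is $g_{s,\gamma}$ from \eqref{eq:selective_threshold}, and its risk is given by \eqref{eq:risk}. Fix a target coverage $\mathcal{C}_{s,\gamma} = c_0$. Because the loss is $0/1$, the numerator of \eqref{eq:risk} equals $\E[(1-g^*(\boldsymbol{x}))\,g(\boldsymbol{x})]$ by the tower property, conditioning on $\boldsymbol{x}$. So minimizing risk at coverage $c_0$ is the same as maximizing $\E[g^*(\boldsymbol{x})\,g(\boldsymbol{x})]$ over selectors $g:\mathcal{X}\to\{0,1\}$ with $\E[g]=c_0$. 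If we allow randomized selection at ties, we can relax to $g\in[0,1]$.

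This is a Neyman--Pearson / fractional-knapsack argument. Let $g_\gamma = \mathbb{I}[g^*>\gamma]$, with randomization on the set $\{g^*=\gamma\}$ if needed to hit the coverage exactly. Take any competing $g$ with the same coverage. Then
\[
\E\bigl[(g_\gamma - g)(g^* - \gamma)\bigr] \ge 0,
\]
because the integrand is pointwise nonnegative: where $g^*>\gamma$ we have $g_\gamma = 1 \ge g$, and where $g^*<\gamma$ we have $g_\gamma = 0 \le g$. Equal coverage means $\E[g_\gamma - g] = 0$, so the $\gamma$ term drops out and we get $\E[g^* g_\gamma] \ge \E[g^* g]$. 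Hence thresholding $g^*$, which by part 1) is the same as thresholding $C_{\mathrm{M}}$, minimizes risk at every coverage. That gives 2), since the superlevel sets of $g^*$ are exactly the sets selected by the ranking, and it gives the per-coverage part of 3).

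The AURC claim then follows by integrating. The risk--coverage curve of $C_{\mathrm{M}}$ lies pointwise below that of any other score $s$ at each coverage, so the area under it is minimal. The step I expect to be the main obstacle is handling ties and atoms in the distribution of $g^*$. With deterministic thresholds, some coverage levels may not be attainable exactly, and the curve is then defined only on a subset. I would resolve this by allowing randomized tie-breaking, or by comparing only at the coverages that are actually achieved and linearly interpolating, as is standard for AURC. In either case the pointwise inequality survives, because the relaxed optimum over $g\in[0,1]$ is attained by the threshold rule.
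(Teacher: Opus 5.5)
Your argument is correct and follows the paper's two-step outline: identify $C_{\mathrm{M}}$ with $g^*$, then show that thresholding $g^*$ is risk-optimal at every coverage. You carry out both steps differently, though.

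For 1), the paper's appendix restatement adds a MAP assumption (Assumption~\ref{assum:calibration}), rewrites $C_{\mathrm{M}}(\boldsymbol{x}) = \Prob(Y=\hat y\mid\boldsymbol{x})$, and only then invokes strong calibration. You observe that strong calibration of $C_{\mathrm{M}}$ \emph{is} the identity $C_{\mathrm{M}} = g^*$, so the MAP assumption is unnecessary. Your remark that this hypothesis is weaker than the model's class probabilities being the true conditionals is also accurate.

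For 2)--3), the paper only asserts that the top-$k$ samples by $s^*$ are the $k$ most likely to be correct, with a one-line appeal to likelihood ratios. That claim is true for a fixed finite pool by linearity of expectation, but it is not proved. You reduce the risk at coverage $c_0$ to $1-\mathbb{E}[g^* g]/c_0$ via the tower property, then use the exchange inequality $\mathbb{E}[(g_\gamma - g)(g^*-\gamma)] \ge 0$. This is the Neyman--Pearson argument that the paper's appeal alludes to, made rigorous at the population level, against arbitrary randomized selectors, and with atoms of $g^*$ handled explicitly. The paper's version buys brevity; yours actually proves 2) and 3).

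One correction to your last paragraph: keep the randomized tie-breaking and drop the alternative of linearly interpolating risk between achieved coverages. The claim that the pointwise inequality survives under interpolation is false. An interpolated risk value is not the risk of any selector: mixing two operating points makes the numerator $\mathbb{E}[(1-g^*)g]$ affine in coverage, not the ratio. So your relaxed optimum over $g\in[0,1]$ does not control it.

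For a concrete failure, let $g^*$ be uniform on $[0.99,1]$ with probability $1/2$ and uniform on $[0,0.01]$ otherwise. The optimal risk is then roughly $1-1/(2c)$ on $[1/2,1]$, which is concave. The two-valued score $\mathbb{I}[g^*>1/2]$ attains the optimal risk at its only nontrivial coverages, $c=1/2$ and $c=1$. Its chord therefore lies strictly below the optimal curve on $(1/2,1)$: about $0.25$ versus $1/3$ at $c=3/4$. Pointwise dominance fails, and this competitor's interpolated area can fall below that of $g^*$. With randomized thresholds, every coverage is attained by a genuine selector, and your exchange argument applies verbatim.
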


The proof is provided in Appendix \ref{section:A}. Under strong calibration, MSP coincides with the Bayes-optimal selection function, making it the ideal confidence score for selective prediction.
However, this condition is rarely satisfied in practice because modern neural networks are often poorly calibrated \cite{guo2017calibration}, especially in multimodal models, causing MSP to deviate from $g^*$. This deviation degrades the relative ranking of predictions, placing some incorrect predictions above correct ones and thereby increasing selective risk. Our approach addresses this by learning to \emph{correct} MSP's ranking errors rather than replacing MSP entirely.

\textbf{Monte Carlo Dropout.}
Another common method for selective prediction is Monte Carlo Dropout (MCD) \cite{gal2016dropout}. It measures Bayesian uncertainty by activating dropout at test time. Given an input $\boldsymbol{x}$ and a network $f(\boldsymbol{x}; \boldsymbol{\theta})$, MCD performs $K$ stochastic forward passes by sampling dropout masks $\{\boldsymbol{\theta}^{(k)}\}_{k=1}^{K}$. Each pass yields a predictive distribution $p^{(k)}(\hat{y} \mid \boldsymbol{x})$.
The Bayesian posterior predictive distribution is then approximated by averaging:
\begin{equation}
\label{eq:mc_dropout_mean}
\bar{p}(\hat{y} \mid \boldsymbol{x}) = \frac{1}{K} \sum_{k=1}^{K} p^{(k)}(\hat{y} \mid \boldsymbol{x}).
\end{equation}
Then, the confidence scores of MCD can be derived as 
$C_{\mathrm{MC}}(\boldsymbol{x}) = \max_{y \in \mathcal{Y}} \bar{p}(y \mid \boldsymbol{x})$.
While MCD offers a richer estimate of uncertainty than single-pass methods, its $K$-fold inference is prohibitive for latency-sensitive AVQA. 

\textbf{Calibration.} Calibration adjusts the model's confidence score so that predicted probabilities accurately reflect the likelihood of correctness. Poor calibration can negatively impact confidence ranking \cite{kamath-etal-2020-selective}, thereby degrading selective prediction performance.
Among calibration methods, Platt scaling \cite{platt1999probabilistic}, originally proposed for binary classification, cannot apply to multi-class settings. Temperature scaling \cite{guo2017calibration} is another method; however, it preserves the relative confidence ranking, thereby providing no effect for selective prediction.
Finally, we utilize vector scaling (VS) \cite{guo2017calibration} to adjust confidence values and relative rankings via a class-specific affine transformation of the logits:
\begin{equation}
\label{eq:calibration}
h\bigl(f(\boldsymbol{x}) \bigl) = \mathrm{softmax}\bigl(\mathbf{W} f(\boldsymbol{x}) + \mathbf{b} \bigl).
\end{equation}
where \(\mathbf{W}\) is a diagonal scaling matrix, \(\mathbf{b}\) is a bias term, and \(h\) is the calibration function. We apply MSP to the calibrated outputs for selective prediction. However, 
VS operates on output logits and ignores uncertainty signals in intermediate features. This highlights \textit{the need for methods that utilize multimodal features to better model prediction correctness}.

\textbf{Doctor}. We further employ Doctor \cite{granese2021doctor} as a more recent post-hoc baseline for selective prediction. Doctor identifies uncertain samples by examining the entire predictive distribution rather than just the maximum logit. Given a model's softmax output $\Prob(Y \mid \boldsymbol{x}) = [p_1, \dots, p_K]^\top$, the Doctor confidence score is defined as $s(\boldsymbol{x}) = \sum_{k=1}^{K} p_k(\boldsymbol{x})^2.$
This quadratic formulation, which is closely related to the complement of the Gini impurity, penalizes high-entropy predictions arising from multi-class confusion. This provides a better ranking of risk-coverage trade-offs than the MSP. Similar to MSP, the doctor operates on output logits while ignoring rich uncertainty signals from intermediate features, leading to suboptimal performance.

\subsection{Adaptive Confidence Refinement}
\label{sec:proposed_method}

We propose Adaptive Confidence Refinement (ACR), a lightweight framework that enhances selective prediction by refining MSP-based confidence. 
Instead of replacing MSP, ACR retains its primary signal while selectively correcting its failure cases. ACR 1) models prediction correctness as binary classification task and 2) operates on intermediate features to learn from rich uncertainty signals, while preserving a theoretical guarantee for improvement over MSP.

\begin{definition}[Error Moments]
\label{def:moments}
Let $C_{\mathrm{M}}(\boldsymbol{x})$ denote MSP's output and let $C_{\mathrm{R}}(\boldsymbol{x})$ be the output of the residual risk estimator (RRH head). We define the following errors:
\[
\epsilon_{\mathrm{M}}(\boldsymbol{x}) = C_{\mathrm{M}}(\boldsymbol{x}) - c(\boldsymbol{x}), \quad \epsilon_{\mathrm{R}}(\boldsymbol{x}) = C_{\mathrm{R}}(\boldsymbol{x}) - c(\boldsymbol{x})
\]
and their second moments:
\begin{align}
    \sigma^2_{\mathrm{M}} &= \E[\epsM^2] \tag*{(MSE of MSP)}\\
    \sigma^2_R &= \E[\epsR^2] \tag*{(MSE of Residual Risk Estimator)}\\
    \sigma_{MR} &= \E[\epsM\epsR] \tag*{(Error Cross-Moment)}
\end{align}
\end{definition}
\begin{theorem}[Fusion Benefit for Confidence Estimation]
\label{thm:fusion_benefit}
If the error cross-moment satisfies the following condition:
\begin{equation}
    \sigma_{\mathrm{MR}} < \min(\sigma^2_{\mathrm{M}}, \sigma^2_{\mathrm{R}})
    \label{eq:fusion_condition_main}
\end{equation}
then there exists a unique optimal fixed fusion weight $\bar{\alpha}^* \in (0, 1)$ such that the fused confidence:
\begin{equation}
    C_{\bar{\alpha}^*}(\boldsymbol{x}) = \bar{\alpha}^* C_{\mathrm{M}}(\boldsymbol{x}) + (1 - \bar{\alpha}^*) C_{\mathrm{R}}(\boldsymbol{x}) \nonumber
\end{equation}
achieves strictly lower MSE than either estimator alone. Moreover, by Proposition~\ref{prop:mse_ranking}, $C_{\bar{\alpha}^*}$ provides a closer approximation to the Bayes-optimal selection function $g^*$, thereby improving the relative ranking of predictions compared to using MSP alone.
The optimal weight is given by:
\begin{equation}
    \bar{\alpha}^* = \frac{\sigma^2_{\mathrm{R}} - \sigma_{\mathrm{MR}}}{\sigma^2_{\mathrm{M}} + \sigma^2_{\mathrm{R}} - 2\sigma_{\mathrm{MR}}} \nonumber
    \label{eq:alpha_star}
\end{equation}
\end{theorem}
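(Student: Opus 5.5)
The argument reduces to minimizing a one-dimensional quadratic. For a fixed weight $\alpha\in\mathbb{R}$, the error of the fused score is
\[
C_\alpha(\boldsymbol{x})-c(\boldsymbol{x})=\alpha\,\epsM(\boldsymbol{x})+(1-\alpha)\,\epsR(\boldsymbol{x}),
\]
because the weights sum to one. Using Definition~\ref{def:moments},
\[
\mathrm{MSE}(\alpha)=\alpha^2\sigma^2_{\mathrm{M}}+(1-\alpha)^2\sigma^2_{\mathrm{R}}+2\alpha(1-\alpha)\sigma_{\mathrm{MR}}.
\]
This is a quadratic in $\alpha$ with leading coefficient $D:=\sigma^2_{\mathrm{M}}+\sigma^2_{\mathrm{R}}-2\sigma_{\mathrm{MR}}=\E[(\epsM-\epsR)^2]\ge 0$.

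The condition \eqref{eq:fusion_condition_main} gives $\sigma_{\mathrm{MR}}<\sigma^2_{\mathrm{M}}$ and $\sigma_{\mathrm{MR}}<\sigma^2_{\mathrm{R}}$. Adding these shows $D>0$, so the quadratic is strictly convex and has a unique minimizer. Setting the derivative
\[
2\alpha\sigma^2_{\mathrm{M}}-2(1-\alpha)\sigma^2_{\mathrm{R}}+2(1-2\alpha)\sigma_{\mathrm{MR}}
\]
to zero gives $\bar{\alpha}^*=(\sigma^2_{\mathrm{R}}-\sigma_{\mathrm{MR}})/D$, which is the stated formula. Its numerator $\sigma^2_{\mathrm{R}}-\sigma_{\mathrm{MR}}$ is strictly positive, and $1-\bar{\alpha}^*=(\sigma^2_{\mathrm{M}}-\sigma_{\mathrm{MR}})/D$ is also strictly positive. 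Hence $\bar{\alpha}^*\in(0,1)$.

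For strict improvement over each estimator alone, note that the endpoints are the individual estimators: $\mathrm{MSE}(1)=\sigma^2_{\mathrm{M}}$ and $\mathrm{MSE}(0)=\sigma^2_{\mathrm{R}}$. A strictly convex function attains strictly smaller values at its unique minimizer than at any other point. Since $\bar{\alpha}^*\neq 0,1$, we get $\mathrm{MSE}(\bar{\alpha}^*)<\min(\sigma^2_{\mathrm{M}},\sigma^2_{\mathrm{R}})$.

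One can also compute the minimum explicitly:
\[
\mathrm{MSE}(\bar{\alpha}^*)=\frac{\sigma^2_{\mathrm{M}}\sigma^2_{\mathrm{R}}-\sigma_{\mathrm{MR}}^2}{D},
\]
and then check $\sigma^2_{\mathrm{M}}-\mathrm{MSE}(\bar{\alpha}^*)=(\sigma^2_{\mathrm{M}}-\sigma_{\mathrm{MR}})^2/D>0$. This confirms the strict inequality directly, and the same holds symmetrically for $\sigma^2_{\mathrm{R}}$.

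The remaining step applies Proposition~\ref{prop:mse_ranking} with $s_1=C_{\bar{\alpha}^*}$ and $s_2=C_{\mathrm{M}}$. This yields $\E[(C_{\bar{\alpha}^*}-g^*)^2]<\E[(C_{\mathrm{M}}-g^*)^2]$, and hence the ranking conclusion in the sense stated there.

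If one prefers to verify this step rather than only cite it, use the bias–variance decomposition $\E[(s-c)^2]=\E[(s-g^*)^2]+\E[g^*(1-g^*)]$. It follows from conditioning on $\boldsymbol{x}$, which makes the cross term vanish. Since the second term does not depend on $s$, MSE differences transfer exactly to distances from $g^*$.

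The only delicate point is $D>0$. It is needed both for uniqueness of the minimizer and for the weight to be well defined, and it follows from adding the two strict inequalities in the hypothesis, so I expect no real obstacle. I would also note in passing that $C_{\bar{\alpha}^*}$ is measurable in $\boldsymbol{x}$. The fused score need not lie in $[0,1]$ for the MSE argument, but if $C_{\mathrm{R}}\in[0,1]$ then it does, being a convex combination of two scores in $[0,1]$.
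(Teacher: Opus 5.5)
Your proposal is correct and follows the paper's proof essentially step for step. Both use the quadratic $J(\alpha)=\alpha^2\sigma^2_{\mathrm{M}}+(1-\alpha)^2\sigma^2_{\mathrm{R}}+2\alpha(1-\alpha)\sigma_{\mathrm{MR}}$, the first-order condition and interior check, the endpoint comparison with $J(1)=\sigma^2_{\mathrm{M}}$ and $J(0)=\sigma^2_{\mathrm{R}}$, and the decomposition $\E[(s-c)^2]=\E[(s-g^*)^2]+\E[g^*(1-g^*)]$ to pass to distance from $g^*$. Your derivation of $D>0$ by adding the two strict inequalities in the hypothesis is a cleaner justification than the paper's remark that the denominator is positive ``when estimators differ,'' and the explicit identity $\sigma^2_{\mathrm{M}}-J(\bar{\alpha}^*)=(\sigma^2_{\mathrm{M}}-\sigma_{\mathrm{MR}})^2/D$ is a nice extra check.
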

\vspace{-1mm}
The proof is provided in Appendix \ref{section:A}. The condition in Eq. \ref{eq:fusion_condition_main} ensures that the residual risk estimator provides complementary information to correct errors in regions where MSP fails to accurately rank predictions. This means the errors of the two estimators are sufficiently ``spread apart,'' allowing their combination to mitigate errors and enhance the separation of correct and incorrect predictions. Empirically, our ACR meets this condition, as shown in Appendix \ref{section:A_1}.

\textbf{Design Principle}. ACR augments the MSP baseline with two learned components: a Residual Risk Head (RRH) and a Confidence Gating Head (CGH), as in Fig. \ref{fig:main_arch}. The RRH exploits intermediate multimodal features to model uncertainty beyond MSP, especially in overconfident failure cases. The CGH determines when to apply residual risk corrections, enabling the input-adaptive fusion.
Unlike prior work in unimodal or bimodal settings \cite{dong-etal-2018-confidence, liu2019deep, kamath-etal-2020-selective}, AVQA poses unique challenges due to its trimodal nature, requiring reasoning over interactions among several modalities: video, audio, questions, and candidate answers. Thus, we use the multimodal fusion features produced by the final layer of the backbone $f$ and the pre-softmax logits as inputs to both heads, enabling access to predicted answers and their supporting evidence.
For simplicity and efficiency, both RRH and CGH are used as multi-layer perceptrons (MLP) with sigmoid outputs, constraining predictions to $[0,1]$. Architectural details are given in Appendix \ref{section:H}. The final correctness score is expressed as:
\begin{equation}
\label{eq:final_conf}
C_{\mathrm{ACR}}(\boldsymbol{x}) =  \alpha (\boldsymbol{x})C_{\mathrm{M}}(\boldsymbol{x}) + (1 - \alpha(\boldsymbol{x}))C_{\mathrm{R}}(\boldsymbol{x})
\end{equation}
where $C_{\mathrm{R}}(\boldsymbol{x})$ and $\alpha (\boldsymbol{x})$ are the output of RRH and CGH; $C_{\mathrm{M}}(\boldsymbol{x})$ is the MSP-based confidence score. 
Overall, our ACR defines a comprehensive confidence estimation framework that encompasses both calibration and selector-based strategies.
Specifically, when RRH head is removed, ACR functions solely as a calibration method \cite{platt1999probabilistic}. In contrast, if CGH head is removed, it functions as a selector, as in prior works \cite{geifman2019selectivenet, whitehead2022reliable}. We provide further ablations and analysis in Appendix \ref{section:B}, where we show straightforward calibration or selector-based methods fall short for $\mathcal{R}$-AVQA.

\begin{theorem}[Optimality of Input-Adaptive Confidence Fusion]
\label{thm:adaptive_fusion}
Let the adaptive confidence score be
$
C_{\mathrm{ACR}}(\boldsymbol{x}) = \alpha^{\dagger}(\boldsymbol{x}) C_{\mathrm{M}}(\boldsymbol{x}) + (1-\alpha^{\dagger}(\boldsymbol{x})) C_{\mathrm{R}}(\boldsymbol{x}),$
where $\alpha^{\dagger}(\boldsymbol{x}) \in [0,1]$ is an input-adaptive fusion weight.
Assume $C_{\mathrm{M}}(\boldsymbol{x})$ and $C_{\mathrm{R}}(\boldsymbol{x})$ are measurable functions of $\boldsymbol{x}$ and the hypothesis class $\mathcal{H} = \{\alpha^{\dagger}: \mathcal{X} \to [0,1] \mid \alpha^{\dagger} \text{ measurable}\}$ has sufficient capacity to represent the Bayes-optimal fusion weight $\alpha^*(\boldsymbol{x})$.
When training $\mathcal{R}$-AVQA model using binary cross-entropy loss on correctness labels (0/1), any risk minimizer satisfies
\[
C_{\mathrm{ACR}}(\boldsymbol{x}) = \Prob(c(\boldsymbol{x})=1 \mid \boldsymbol{x})
\quad \text{a.e.}
\]
Since $C_{\mathrm{ACR}} = g^*$ a.e., ranking by $C_{\mathrm{ACR}}$ gets closer to the Bayes-optimal ranking for selective prediction, providing lower AURC.
If the pointwise optimal fusion weight $\alpha^*(\boldsymbol{x}) = \arg\min_\alpha \E\bigl[\bigl(C_\alpha(\boldsymbol{x}) - c(\boldsymbol{x}) \bigl)^2 \mid \boldsymbol{x} \bigl]$ varies across inputs (i.e., $\mathrm{Var}[\alpha^*(\boldsymbol{x})] > 0$), then any fixed weight $\lambda \in [0,1]$ to $C_{\lambda}$ is suboptimal in both MSE and ranking.
\end{theorem}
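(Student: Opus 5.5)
The argument has three parts: identify the population minimizer of the binary cross-entropy, use the capacity assumption to show the fusion can realize it, and then show that any fixed weight falls short when $\alpha^*(\boldsymbol{x})$ is not constant.

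\textbf{Step 1: pointwise minimizer of the BCE risk.} Write $\eta(\boldsymbol{x}) = \Prob(c(\boldsymbol{x})=1\mid\boldsymbol{x}) = g^*(\boldsymbol{x})$. Conditioning on $\boldsymbol{x}$ and using the tower property, the population BCE risk of a score $s$ is
\[
\E\bigl[-\eta(\boldsymbol{x})\log s(\boldsymbol{x}) - (1-\eta(\boldsymbol{x}))\log(1-s(\boldsymbol{x}))\bigr].
\]
For fixed $\eta\in[0,1]$, the map $p\mapsto -\eta\log p-(1-\eta)\log(1-p)$ is strictly convex on $(0,1)$. Its derivative vanishes only at $p=\eta$, with the usual conventions at the endpoints $\eta\in\{0,1\}$. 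Equivalently, the integrand equals the binary entropy $H(\eta)$ plus $\mathrm{KL}(\eta\,\|\,p)$, and the KL term is zero only when $p=\eta$. So any minimizer over all measurable $[0,1]$-valued scores satisfies $s=\eta$ almost everywhere.

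\textbf{Step 2: realizability through the fusion.} The score $C_{\mathrm{ACR}}(\boldsymbol{x})$ ranges over the segment between $C_{\mathrm{M}}(\boldsymbol{x})$ and $C_{\mathrm{R}}(\boldsymbol{x})$. Define
\[
\alpha^*(\boldsymbol{x}) = \frac{\eta(\boldsymbol{x})-C_{\mathrm{R}}(\boldsymbol{x})}{C_{\mathrm{M}}(\boldsymbol{x})-C_{\mathrm{R}}(\boldsymbol{x})},
\]
clipped to $[0,1]$, and set $\alpha^*(\boldsymbol{x})$ arbitrarily where the two confidences coincide. This function is measurable because it is built from measurable functions. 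The capacity hypothesis on $\mathcal{H}$ is what guarantees that $\alpha^*$ is representable and that $\eta(\boldsymbol{x})$ lies in the attainable segment; I will state explicitly that the hypothesis is read this way.

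Under this reading, the infimum of the BCE risk over $\mathcal{H}$ equals the unconstrained minimum from Step 1. Strict convexity then forces every minimizer to satisfy $C_{\mathrm{ACR}}=\eta=g^*$ almost everywhere. The claim about ranking follows from Theorem~\ref{thm:msp_optimal}(2)--(3): ranking by $g^*$ is Bayes-optimal and minimizes AURC, and ranking by $C_{\mathrm{ACR}}$ agrees with it up to a null set.

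\textbf{Step 3: fixed weights are suboptimal.} The conditional MSE decomposes as
\[
\E[(C_\alpha-c)^2\mid\boldsymbol{x}] = (C_\alpha(\boldsymbol{x})-\eta(\boldsymbol{x}))^2 + \eta(1-\eta).
\]
So $\alpha^*(\boldsymbol{x})$ minimizes the quadratic $q_{\boldsymbol{x}}(\alpha) = \bigl(\alpha\, d(\boldsymbol{x}) + C_{\mathrm{R}}(\boldsymbol{x}) - \eta(\boldsymbol{x})\bigr)^2$, where $d = C_{\mathrm{M}}-C_{\mathrm{R}}$. Take any fixed $\lambda$. The pointwise inequality $q_{\boldsymbol{x}}(\lambda)\ge q_{\boldsymbol{x}}(\alpha^*(\boldsymbol{x}))$ always holds.

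If $\mathrm{Var}[\alpha^*]>0$, then $\alpha^*\ne\lambda$ on a set of positive measure. On that set I need $d\neq0$, which holds wherever $\alpha^*$ is uniquely defined, and I will restrict the variance assumption to that region. There $q_{\boldsymbol{x}}(\lambda)-q_{\boldsymbol{x}}(\alpha^*) = d^2(\lambda-\alpha^*)^2 > 0$, so integrating gives a strict MSE gap. Combined with Proposition~\ref{prop:mse_ranking}, this gives suboptimality in ranking as well, in the sense of a worse approximation to $g^*$.

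\textbf{Main obstacle.} The delicate step is Step 2. Exact recovery of $g^*$ requires $\eta(\boldsymbol{x})$ to lie between $C_{\mathrm{M}}(\boldsymbol{x})$ and $C_{\mathrm{R}}(\boldsymbol{x})$, or else that $C_{\mathrm{R}}$ is trained jointly, so that $C_{\mathrm{R}}$ alone can absorb $\eta$ wherever the segment fails to contain it. I will handle this by treating $(C_{\mathrm{R}},\alpha^\dagger)$ jointly under the capacity assumption: choosing $C_{\mathrm{R}}=\eta$ and $\alpha^\dagger=0$ already attains the bound, so the minimizing score is $\eta$ in either case. The issue then disappears, because the statement concerns only the fused output $C_{\mathrm{ACR}}$ and not uniqueness of $\alpha^\dagger$.
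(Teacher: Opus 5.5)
Your proof follows the same skeleton as the paper's. Strict propriety of BCE pins the minimizer to $\eta=g^*$, and constant weights form a subclass that cannot reproduce a non-constant $\alpha^*(\boldsymbol{x})$. Where you differ, you are more careful than the paper. Its proof cites strict propriety and concludes $C_{\mathrm{ACR}}=g^*$ a.e.\ without noting that $C_{\mathrm{ACR}}(\boldsymbol{x})$ is confined to the segment between $C_{\mathrm{M}}(\boldsymbol{x})$ and $C_{\mathrm{R}}(\boldsymbol{x})$. For fixed $C_{\mathrm{R}}$, the BCE-optimal fused score is, pointwise, the projection of $\eta(\boldsymbol{x})$ onto that segment. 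So the first claim genuinely needs your reading of the capacity hypothesis, or joint optimization of $C_{\mathrm{R}}$, and your Step~2 supplies what the paper omits.

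Your Step~3 likewise replaces the paper's qualitative ``no constant $\lambda$ can match the optimum a.e.'' with an explicit MSE gap. There is one slip. The identity $q_{\boldsymbol{x}}(\lambda)-q_{\boldsymbol{x}}(\alpha^*)=d^2(\lambda-\alpha^*)^2$ holds only where $\alpha^*$ is unclipped. Where clipping is active you get only $q_{\boldsymbol{x}}(\lambda)-q_{\boldsymbol{x}}(\alpha^*)\ge d^2(\lambda-\alpha^*)^2$, by projection onto the convex set $[0,1]$. That inequality still gives strictness.

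On ranking, keep your qualifier ``in the sense of a worse approximation to $g^*$'' explicit, because that is all that can be proved. The MSE gap routed through Proposition~\ref{prop:mse_ranking} does not imply a larger AURC. Neither does the paper's step ``$C_\lambda\neq s^*$ on a set of positive measure causes ranking errors.'' Ranking depends only on order, and any strictly increasing transform of $g^*$ is ranking-optimal.

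For instance, let $\eta$ be uniform on $[0.2,0.8]$, with $C_{\mathrm{M}}=\eta+0.1$ and $C_{\mathrm{R}}=\eta^2$. Then $\alpha^*=\eta(1-\eta)/\bigl(\eta(1-\eta)+0.1\bigr)$ lies in $(0,1)$ and is non-constant. Yet every $C_\lambda$ has derivative $\lambda+2(1-\lambda)\eta>0$ in $\eta$, so it induces exactly the Bayes-optimal ranking. The literal ranking half of the final claim is therefore false. Both your argument and the paper's establish only its MSE/$L^2$ version.
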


The proof is in Appendix \ref{section:A} and $\alpha$ distribution analysis is in Appendix \ref{section:B_1}. The insight is that input-adaptive fusion expands hypothesis class of confidence estimators, allowing sample-specific corrections to MSP and yielding a closer approximation to the Bayes-optimal correctness probability than any static fusion. Appendix \ref{section:B_1} shows that \textbf{no distribution collapses} to fixed values (i.e., $\alpha = \{0, 1\}$), confirming genuine input-adaptive fusion instead of trivial solutions.

\begin{figure*}[ht]
    \centering
    \includegraphics[width=0.90\linewidth]{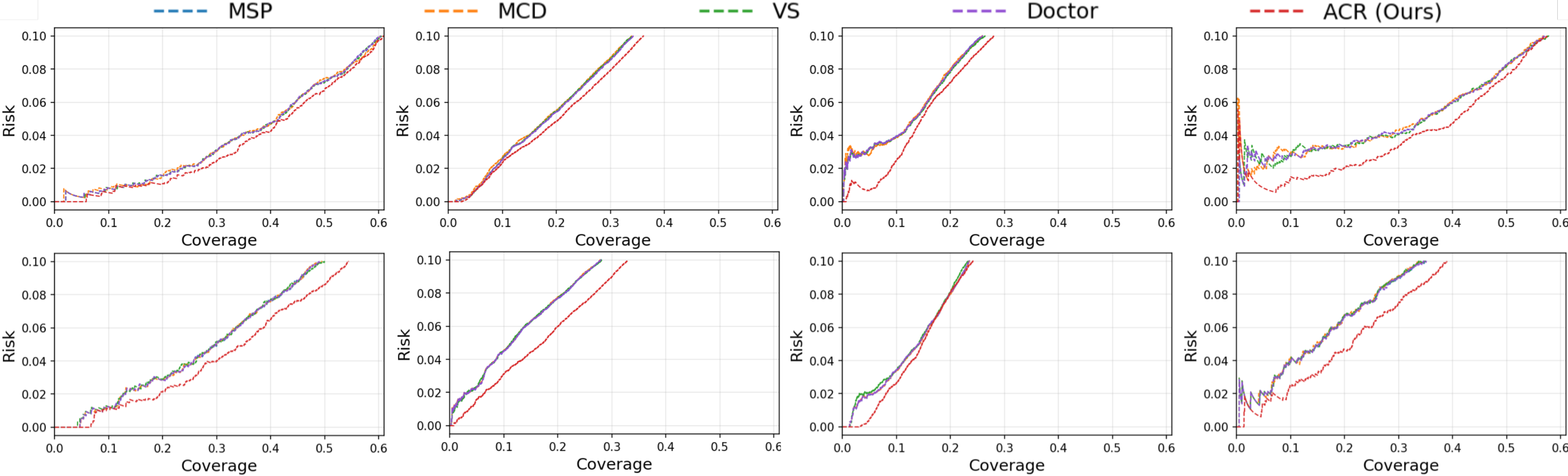}
    \vspace{-2mm}
    \caption{Risk-coverage curves of various selection functions for QA-TIGER (top row) and TSPM (bottom row) up to 10\% risk.} 
    \label{fig:aurc_curve}
    \vspace{-2mm}
\end{figure*}

\textbf{Training Scheme}. ACR is designed to improve confidence estimation while preserving the predictive behavior via a two-stage training procedure, as shown in Fig. \ref{fig:main_arch}.
In \textit{Stage 1}, we train a base AVQA model using standard cross-entropy loss, then freeze all parameters.
In \textit{Stage 2}, with the backbone frozen, we jointly train the RRH and CGH using a binary cross-entropy loss $\mathcal{L}_{\mathrm{BCE}}$.
Our training objective targets selective prediction by learning a confidence estimator that approximates the posterior probability of correctness $\Prob(\mathrm{correct} \mid \boldsymbol{x})$.
We optimize $\mathcal{L}_{\mathrm{BCE}}$ using correctness signal \(c(\boldsymbol{x})\) (0/1 labels) as a binary reward, aligning our ACR with the Bayes-optimal ranking for selective prediction. 
By freezing the backbone, we decouple confidence estimation from base prediction, enabling selective prediction
optimization without interference from classification gradients.
\begin{equation}
    \scalebox{0.87}{$
    \label{eq:bce_loss}
    \mathcal{L}_{\mathrm{BCE}} = -\mathbb{E} \bigl [c(\boldsymbol{x})\log C_{\mathrm{ACR}}(\boldsymbol{x}) - \bigl(1 - c(\boldsymbol{x}) \bigl)\log\bigl(1 - C_{\mathrm{ACR}}(\boldsymbol{x}) \bigl) \bigl ] \nonumber$
    }
\end{equation}
BCE is a strictly proper scoring rule that minimizes expected risk when predicted confidence matches the conditional probability of correctness \cite{gneiting2007strictly}. 
Optimizing $\mathcal{L}_{\mathrm{BCE}}$ encourages \(C_{\mathrm{ACR}}(\boldsymbol{x})\) to approximate the probability \(\mathbb{P}(c(\boldsymbol{x}) = 1 \mid x)\).
From Eq. \ref{eq:final_conf}, the gradient of $\mathcal{L}_{\mathrm{BCE}}$ with respect to $\alpha(\boldsymbol{x})$ is proportional to:
\begin{equation}
    \scalebox{0.9}{$
    \label{eq:bce_gradient}
    \frac{\partial \mathcal{L}_{\text{BCE}}}{\partial \alpha} \propto \bigl(C_{\mathrm{M}}(x) - C_{\mathrm{R}}(x) \bigl) \cdot \bigl(C_{\mathrm{ACR}}(\boldsymbol{x}) - c(\boldsymbol{x}) \bigl). \nonumber$
    }
\end{equation}
When MSP is accurate, the gradients increase \(\alpha(\boldsymbol{x})\), reinforcing it dominance. If MSP is overconfident yet incorrect, the gradients reduce \(\alpha(\boldsymbol{x})\), activating residual correction.
\section{Experiments}

\subsection{Datasets and Baselines}
\label{sec:dataset_baselines}

We conduct experiments on MUSIC-AVQA \cite{li2022learning}, MUSIC-AVQA-R \cite{ma2024look}, and MUSIC-AVQA-v2.0 \cite{liu2024tackling} datasets. MUSIC-AVQA is a widely used dataset for audio-visual reasoning. 
MUSIC-AVQA-R targets \textit{rare and OOD generalization under distribution shift} samples, while MUSIC-AVQA-v2.0 mitigates dataset biases by enhancing diversity in ensemble scenes, providing both \textit{biased and balanced sets} to assess model robustness.
In our setup, we set the AVQA models to \( f \), and the selection functions to \( g \). We train the AVQA models (\( f \)) on the training sets and use validation sets to evaluate them and train the selection functions (\( g \)). Separate training data for \( g \) is crucial to prevent overfitting on the training set of AVQA models and to preserve generalization.
We split the test sets into two subsets: one (20\%) for validating the selection functions and one held-out subset (80\%) for evaluating the final selective models. More details are in Appendix \ref{section:G}.

We evaluate the selection functions discussed in Sect. \ref{sec:methodology} on representative AVQA baseline models spanning diverse architectural designs. Specifically, we consider QA-TIGER \cite{kim2025question}, ST-AVQA \cite{li2022learning}, and TSPM \cite{li2024boosting} that are widely used in prior AVQA studies.

\vspace{-2mm}
\subsection{Benchmarking Evaluation Metrics}
\label{sec:benchmarking_metric}

In this section, we benchmark maximum coverage at a target risk ($\mathcal{C}@\mathcal{R}$), area under the risk-coverage curve (AURC), and expected calibration error (ECE). All results are evaluated on the new held-out test split defined in Sect. \ref{sec:dataset_baselines}.

\begin{table}[]
\centering
\setlength{\tabcolsep}{3pt}
\caption{Coverage at target risk ($\mathcal{C}$@$\mathcal{R}$) $\uparrow$, AURC $\downarrow$, and ECE $\downarrow$ for all selection functions on MUSIC-AVQA dataset. All in \%.}
\vspace{-1mm}
\label{tab:car_music_avqa}
\scalebox{0.705}{
\begin{tabular}{llcccccc}
\toprule
Model $f$ & Sel. Func. $g$ & $\mathcal{C}$@1\% & $\mathcal{C}$@5\% & $\mathcal{C}$@10\% & $\mathcal{C}$@20\% & AURC & ECE \\
\midrule
\multirow{5}{*}{QA-TIGER}   & MSP                             & 14.29 & 41.53 & 60.47 & 91.83 & 8.67                 & 8.83                 \\
                          & MCD                       & \underline{16.81}      & 41.03      & 60.34       & \underline{91.94}       & 8.69     & 8.83     \\
                          & VS            & 14.23      & \underline{41.57}      & \underline{60.54}       & 91.84       & 8.66     & 8.91     \\
                          & Doctor            & 14.29      & 41.44      & 60.38      & 91.31       & \underline{8.65}     & \underline{4.99}     \\
                          & ACR (Ours) & \textbf{17.21} & \textbf{43.07} & \textbf{61.39}  & \textbf{91.96}  & \textbf{8.41} & \textbf{4.88} \\ \hdashline
                          & \textcolor{gray}{Oracle} & \textcolor{gray}{78.08} & \textcolor{gray}{81.37} & \textcolor{gray}{85.89}  & \textcolor{gray}{96.93}  & \textcolor{gray}{2.80} & \textcolor{gray}{0.00} \\ \midrule
\multirow{5}{*}{ST-AVQA}  & MSP                   & 1.71  & 8.00  & \underline{24.03}  & 65.59  & \underline{16.05} & 6.23     \\
                          & MCD                       &  1.71     &  8.06     &  23.43      & 65.03       & 16.07     & 6.22     \\
                          & VS            &  \underline{1.74}     & \underline{8.11}      &  23.44      &  \underline{65.88}      & 16.07     & 6.18      \\
                          & Doctor                   & 1.71  & 7.98  & 23.86  & 65.80  & 16.08 & \underline{5.01}     \\
                          & ACR (Ours) & \textbf{3.67}  & \textbf{10.65} & \textbf{28.55}  & \textbf{66.03}  & \textbf{15.46} & \textbf{1.35}     \\ \hdashline
                          & \textcolor{gray}{Oracle} & \textcolor{gray}{71.70} & \textcolor{gray}{74.08} & \textcolor{gray}{78.19}  & \textcolor{gray}{87.97}  & \textcolor{gray}{4.90} & \textcolor{gray}{0.00}     \\ \midrule
\multirow{5}{*}{TSPM}     & MSP                   & \underline{8.82}      & \underline{30.04}      & 49.21       & 87.60       & \underline{10.40}     & 4.63     \\
                          & MCD                       & 8.73      & 29.98      & 49.58       & 87.43       & \underline{10.40}     & 4.46     \\
                          & VS            & 6.63      & 29.67      & \underline{50.00}       & \underline{87.87}       & 10.41     & 4.68     \\
                          & Doctor                   & \underline{8.82}      & 29.90      & 49.03       & 86.87       & 10.47     & \underline{4.35}     \\
                          & ACR (Ours) & \textbf{8.89}      & \textbf{35.12}      & \textbf{54.50}       & \textbf{88.66}       & \textbf{9.76}     & \textbf{1.51}     \\ \hdashline
                          & \textcolor{gray}{Oracle} & \textcolor{gray}{76.91} & \textcolor{gray}{80.14} & \textcolor{gray}{84.60}  & \textcolor{gray}{95.17}  & \textcolor{gray}{3.11} & \textcolor{gray}{0.00}     \\
\bottomrule
\end{tabular}}
\vspace{-6mm}
\end{table}

\begin{figure*}[t]
    \centering
    \includegraphics[width=0.86\linewidth]{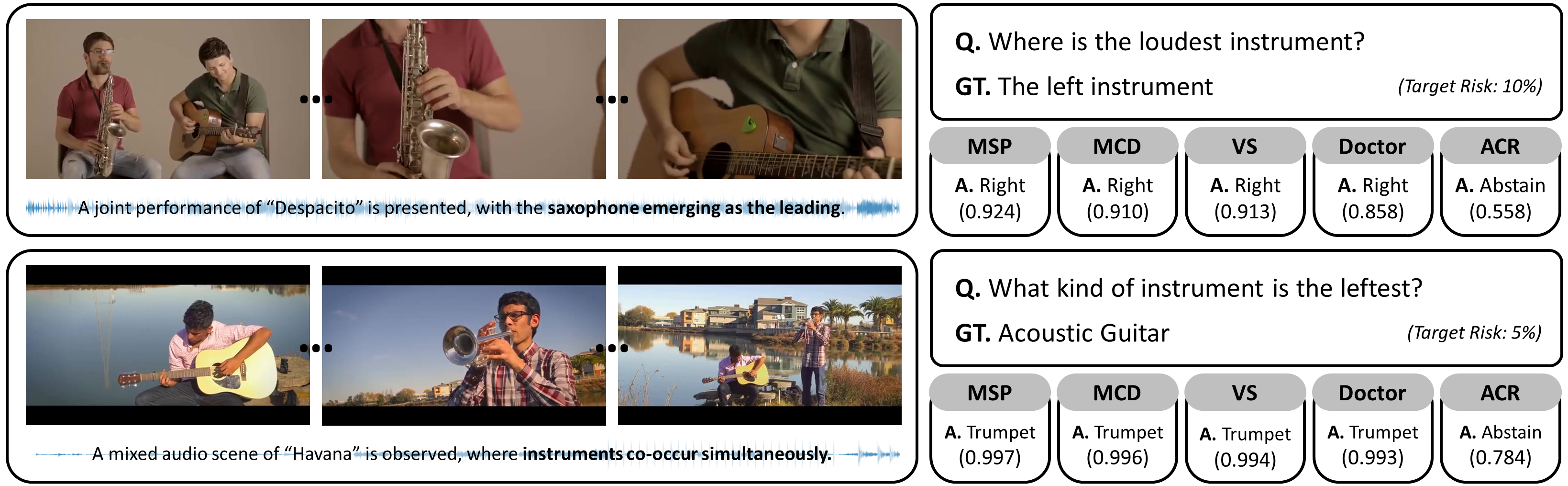}
    \caption{Qualitative examples of selective prediction based on QA-TIGER backbone. Examples in which standard confidence-based methods such as MSP, MCD, and VS produce overconfident but incorrect answers, whereas our proposed ACR correctly abstains by identifying unreliable predictions. Confidence scores at different target risk levels are shown in parentheses.}
    \label{fig:qualitative_result}
    \vspace{-1mm}
\end{figure*}

\begin{table*}[!t]
\centering
\caption{Comparison on MUSIC-AVQA-R and MUSIC-AVQA-v2 datasets. We report Head (H)/Tail (T) results for MUSIC-AVQA-R and Balance (Bal)/Biased (Bias) results for MUSIC-AVQA-v2. Metrics: coverage at target risk ($\mathcal{C}$@$\mathcal{R}$) $\uparrow$, AURC $\downarrow$, and ECE $\downarrow$ (all in \%).}
\label{tab:merged_table23}
\resizebox{0.85\textwidth}{!}{%
\begin{tabular}{cll cccccc cccccc}
\toprule
\multirow{2}{*}{Dataset} & \multirow{2}{*}{Model $f$} & \multirow{2}{*}{Sel. Func. $g$} 
& \multicolumn{2}{c}{$\mathcal{C}$@1\%} & \multicolumn{2}{c}{$\mathcal{C}$@5\%} 
& \multicolumn{2}{c}{$\mathcal{C}$@10\%} & \multicolumn{2}{c}{$\mathcal{C}$@20\%} 
& \multicolumn{2}{c}{AURC} & \multicolumn{2}{c}{ECE} \\
\cmidrule(lr){4-5}\cmidrule(lr){6-7}\cmidrule(lr){8-9}\cmidrule(lr){10-11}\cmidrule(lr){12-13}\cmidrule(lr){14-15}
& & & \small H/Bal & \small T/Bias & \small H/Bal & \small T/Bias & \small H/Bal & \small T/Bias & \small H/Bal & \small T/Bias & \small H/Bal & \small T/Bias & \small H/Bal & \small T/Bias \\
\midrule
\multirow{10}{*}{\rotatebox{90}{MUSIC-AVQA-R}}
& \multirow{5}{*}{QA-TIGER}
& MSP         & 6.04 & 0.23 & 18.55 & 13.48 & \underline{34.25} & 25.96 & 63.56 & 59.06 & 15.43 & 16.87 & 14.25 & 14.16 \\
& & MCD         & 5.65 & \underline{0.47} & 18.09 & 13.12 & 33.96 & 25.99 & \underline{63.63} & 58.57 & 15.44 & 16.94 & 14.08 & 14.13 \\
& & VS          & \textbf{6.12} & 0.24 & 18.39 & \underline{13.55} & 34.07 & \underline{26.43} & 63.30 & \underline{59.52} & 15.49 & \underline{16.77} & 14.25 & 14.07 \\
& & Doctor         & 6.04 & 0.23 & \underline{18.56} & 13.49 & 34.24 & 26.01 & 63.52 & 59.31 & \underline{15.41} & 16.81 & \underline{11.87} & \underline{11.95} \\
& & ACR (Ours) & \underline{6.11} & \textbf{6.46} & \textbf{20.59} & \textbf{14.62} & \textbf{36.16} & \textbf{28.03} & \textbf{63.84} & \textbf{59.98} & \textbf{15.17} & \textbf{16.36} & \textbf{10.86} & \textbf{9.90} \\ \cdashline{2-15}
& & \textcolor{gray}{Oracle} & \textcolor{gray}{68.41} & \textcolor{gray}{68.86} & \textcolor{gray}{71.29} & \textcolor{gray}{71.76} & \textcolor{gray}{75.25} & \textcolor{gray}{75.75} & \textcolor{gray}{84.66} & \textcolor{gray}{85.22} & \textcolor{gray}{5.88} & \textcolor{gray}{5.71} & \textcolor{gray}{0.00} & \textcolor{gray}{0.00} \\
\cmidrule{2-15}
& \multirow{5}{*}{TSPM}
& MSP         & 0.80 & 1.97 & \underline{11.51} & 14.22 & 28.07 & 23.49 & 65.33 & 55.37 & 16.00 & 17.95 & 5.46 & 8.31 \\
& & MCD         & 0.80 & 1.98 & 11.50 & \underline{14.24} & 27.88 & \underline{23.54} & 65.22 & \underline{55.61} & 16.03 & \underline{17.92} & 5.39 & 8.22 \\
& & VS          & \underline{0.89} & \underline{2.05} & 11.35 & 14.08 & \underline{28.18} & 23.28 & \underline{65.66} & 54.76 & \underline{15.93} & 18.02 & \underline{5.34} & 8.22 \\
& & Doctor         & 0.80 & 1.97 & 11.47 & 14.22 & 28.09 & 23.53 & 65.31 & 55.16 & 16.01 & 18.04 & 5.52 & \underline{6.16} \\
& & ACR (Ours) & \textbf{3.55} & \textbf{6.15} & \textbf{17.25} & \textbf{14.75} & \textbf{32.94} & \textbf{24.19} & \textbf{65.86} & \textbf{55.77} & \textbf{15.34} & \textbf{17.62} & \textbf{3.75} & \textbf{5.99} \\ \cdashline{2-15}
& & \textcolor{gray}{Oracle} & \textcolor{gray}{68.62} & \textcolor{gray}{66.60} & \textcolor{gray}{71.72} & \textcolor{gray}{69.40} & \textcolor{gray}{75.71} & \textcolor{gray}{73.26} & \textcolor{gray}{85.17} & \textcolor{gray}{82.41} & \textcolor{gray}{5.72} & \textcolor{gray}{6.60} & \textcolor{gray}{0.00} & \textcolor{gray}{0.00} \\

\midrule[\heavyrulewidth]
\multirow{10}{*}{\rotatebox{90}{MUSIC-AVQA v2.0}}
& \multirow{5}{*}{QA-TIGER}
& MSP         & 2.38 & \underline{1.45} & 34.98 & \underline{35.14} & 56.75 & 57.02 & \underline{88.04} & 88.98 & 9.84 & \underline{9.89} & 13.79 & 13.72 \\
& & MCD         & \underline{2.58} & 0.19 & 35.04 & 34.84 & 56.62 & 56.80 & 87.71 & \underline{89.05} & 9.88 & \underline{9.89} & 13.58 & 13.48 \\
& & VS          & 1.41 & 0.47 & \underline{36.58} & 35.12 & \underline{56.84} & \underline{57.05} & \textbf{88.11} & 88.68 & \underline{9.80} & 9.91 & 13.70 & 13.87 \\
& & Doctor          & 2.38 & \underline{1.45} & 34.98 & \underline{35.14} & 56.75 & 57.02 & \underline{88.10} & 88.64 & 9.84 & \underline{9.89} & \underline{9.89} & \underline{9.87} \\
& & ACR (Ours) & \textbf{10.64} & \textbf{8.88} & \textbf{39.79} & \textbf{40.90} & \textbf{58.63} & \textbf{57.59} & \textbf{88.11} & \textbf{89.53} & \textbf{9.23} & \textbf{9.23} & \textbf{3.52} & \textbf{2.99} \\ \cdashline{2-15}
& & \textcolor{gray}{Oracle} & \textcolor{gray}{76.69} & \textcolor{gray}{76.98} & \textcolor{gray}{79.92} & \textcolor{gray}{80.22} & \textcolor{gray}{84.36} & \textcolor{gray}{84.67} & \textcolor{gray}{94.91} & \textcolor{gray}{95.26} & \textcolor{gray}{3.16} & \textcolor{gray}{3.09} & \textcolor{gray}{0.00} & \textcolor{gray}{0.00} \\
\cmidrule{2-15}
& \multirow{5}{*}{TSPM}
& MSP         & 4.69 & \underline{0.47} & 17.42 & 15.46 & 34.80 & 34.96 & \underline{64.35} & \underline{66.49} & \underline{15.05} & 14.97 & 5.60 & 5.18 \\
& & MCD         & 4.44 & 0.45 & \underline{17.62} & 14.83 & \underline{35.02} & 34.85 & 64.23 & 66.43 & 15.07 & 15.00 & 5.45 & 5.16 \\
& & VS          & \underline{4.70} & 0.45 & 17.38 & 15.46 & 34.43 & 35.05 & 64.19 & 66.41 & \underline{15.05} & 15.00 & 5.58 & 5.05 \\
& & Doctor         & 4.69 & \underline{0.47} & 17.46 & \underline{15.49} & 34.81 & \underline{35.16} & 64.00 & 65.72 & 15.08 & \underline{14.96} & \underline{3.87} & \underline{4.68} \\
& & ACR (Ours) & \textbf{7.12} & \textbf{4.66} & \textbf{24.58} & \textbf{21.48} & \textbf{37.66} & \textbf{38.94} & \textbf{66.84} & \textbf{69.44} & \textbf{14.03} & \textbf{13.84} & \textbf{1.64} & \textbf{1.34} \\ \cdashline{2-15}
& & \textcolor{gray}{Oracle} & \textcolor{gray}{71.12} & \textcolor{gray}{71.95} & \textcolor{gray}{74.12} & \textcolor{gray}{74.98} & \textcolor{gray}{78.24} & \textcolor{gray}{79.15} & \textcolor{gray}{88.02} & \textcolor{gray}{89.04} & \textcolor{gray}{4.89} & \textcolor{gray}{4.60} & \textcolor{gray}{0.00} & \textcolor{gray}{0.00} \\
\bottomrule
\end{tabular}%
}
\vspace{-2mm}
\end{table*}

\textbf{ACR outperforms other methods}. Tab. \ref{tab:car_music_avqa}, \ref{tab:merged_table23} show the results of our ACR with other methods. Top-2 results are in \textbf{bold} and \underline{underline}.
Tab. \ref{tab:merged_table23} reports results only from recent AVQA models (QA-TIGER, TSPM) on MUSIC-AVQA-R and MUSIC-AVQA-v2.0 (trained on balance set) due to page limits; remaining results are in Appendix \ref{section:D}.
ACR consistently surpasses other strong baselines: maximum softmax probability (MSP) \cite{geifman2017selective}, Monte Carlo Dropout (MCD) \cite{gal2016dropout}, vector scaling (VS) \cite{guo2017calibration}, and Doctor \cite{granese2021doctor}.
The gains are most evident in low-risk regimes (e.g., $1\%$, $5\%$), showing strong results in settings that demand highly precise confidence estimation.
ACR achieves the largest gains in $\mathcal{C}@1\%$ on MUSIC-AVQA-v2.0 with QA-TIGER, yielding increases of $\textbf{+8.26\%p}$ on the balance test set and $\textbf{+7.43\%p}$ on the biased test set (Tab. \ref{tab:merged_table23}). ACR achieves markedly lower ECE than other methods, indicating superior calibration.
At low-risk levels, Fig. \ref{fig:aurc_curve} shows that ACR is much lower risk as coverage increases than other methods, proving its advantage in safety-critical conditions. Notably, other methods vary across settings: ID (Tab. \ref{tab:car_music_avqa}), OOD, and biased scenarios (Tab. \ref{tab:merged_table23}), while ACR surpasses these methods on all settings, showing its generalization even under distribution shifts or dataset biases. ACR clearly outperforms MSP, proving our design principle: MSP is a strong baseline, but can be improved further in reliability by correcting MSP failure cases. Appendix \ref{section:B_1} provides a qualitative explanation of our gains: ACR better separates correct and incorrect samples than MSP, particularly near the critical decision boundary at high confidence levels.

\textbf{Correlation between accuracy and reliability}. We observe a strong correlation between standard AVQA accuracy and reliability across Tab. \ref{tab:car_music_avqa}-\ref{tab:merged_table23}: \textit{higher accuracy tends to show better selective prediction behavior}. For example, QA-TIGER is the most accurate model in the standard AVQA and also surpasses TSPM and ST-AVQA in reliability evaluations across settings. This implies that reliability stems from improving either predictor $f$ or selector $g$. In addition, Appendix \ref{section:B_1} shows backbones with higher base AVQA accuracy tend to learn higher $\alpha$ values, indicating greater reliability.
However, in this study, we focus on developing a more effective selector $g$ to directly boost model reliability. 

\textbf{Can thresholds generalize to test-time?} So far, we have reported the maximum achievable coverage at predefined target risk levels, which corresponds to an idealized evaluation where thresholds are selected directly on the test set. In practice, however, abstention thresholds must be selected on a validation set and then fixed during testing. To assess the practicality of this setting, we evaluate how well validation-selected thresholds transfer to the test split by applying them to the test set and measuring the resulting test-time risk using QA-TIGER. As shown in Appendix \ref{section:F}, the discrepancies between the target risk and the realized test risk are consistently small, indicating that the learned thresholds generalize reliably across splits. This behavior aligns with prior findings in selective prediction literature \cite{geifman2019selectivenet, whitehead2022reliable}.

\textbf{Still room for improvement}. 
Tab. \ref{tab:car_music_avqa}, \ref{tab:merged_table23} show that, particularly at low risk levels, coverage remains far from ideal oracle performance. For example, Tab. \ref{tab:car_music_avqa} shows $\mathcal{C}$@1$\%$ gap between selection functions and oracle exceeds 60\%. This gap arises under low-risk constraints, as the oracle has access to ground-truth correctness for sample selection, whereas practical methods rely on imperfect confidence estimates from multimodal answering. As a result, uncertainty estimation errors for cross-modal questions lead to conservative rejection rates and lower coverage. This highlights a substantial room to improve both AVQA models and selection functions, enabling more reliable AVQA systems.

\vspace{-2mm}
\subsection{Qualitative Analysis}

Fig. \ref{fig:qualitative_result} presents qualitative results showing differences in confidence estimates between ACR and other methods.  The key improvement of ACR is enhanced confidence estimates and relative rankings among examples, rather than improved accuracy (stage 2 freezes AVQA models so ACR does not affect accuracy).
ACR assigns much lower confidence scores than other methods to abstain from answering genuinely uncertain or unanswerable questions.
In the first example, many frames show a single performer or instrument, while the audio 
features multiple instruments sounding simultaneously. 
In the second example, ambiguity stems from the audio, where instrumental volumes are similar, while the video offers clearer spatial cues.
This discrepancy between audio and visual cues creates cross-modal ambiguity that can mislead AVQA models into overconfident predictions.
In such cases, the multimodal features convey uncertainty signals that ACR effectively captures and reflects via its reduced confidence. More examples are in Appendix \ref{section:E}.

\vspace{-3mm}
\subsection{Ablation Study}
\label{sec:ablation_study}

In this section, we present ablations of our ACR selection function. More ablations are provided in Appendix \ref{section:B}.

\textbf{Impact of multimodal features}. Tab. \ref{tab:ablation_input_selector} highlights the importance of multimodal features for achieving high performance. Using representations in isolation, multimodal features ($f'(x)$, $avq$) consistently achieve higher coverage and lower AURC than unimodal ones ($a$, $v$, $q$). However, feeding all modalities into ACR (last row) degrades performance due to cross-modal misalignment \cite{ma2023calibrating, cai2025value} that hinders reliable confidence estimation. Therefore, we apply ACR only to the final fused multimodal features that better mitigate misalignment. This result underscores that reliable AVQA is inherently a multimodal problem. Combining $f'(x)$ and $avq$ produces the best performance, so we adopt this setting in all experiments. 

\begin{table}[t]
\centering
\setlength{\tabcolsep}{4.5pt}
\caption{Ablations of the input for ACR with QA-TIGER on the MUSIC-AVQA testset using $\mathcal{C}@\mathcal{R}$ $\uparrow$, AURC $\downarrow$, and ECE $\downarrow$. $f'(x), a, v, q$ and $avq$ are pre-softmax logits, audio, video, question, and fused multimodal representations, respectively.  All in \%.}
\vspace{-1mm}
\label{tab:ablation_input_selector}
\scalebox{0.70}{
\begin{tabular}{lcccccc}
\toprule
Features & $\mathcal{C}$@1\% & $\mathcal{C}$@5\% & $\mathcal{C}$@10\% & $\mathcal{C}$@20\% & AURC & ECE \\
\midrule
$a$ & 10.20 & 40.66 & 60.79 & 90.55 & 8.82 & 5.19 \\
$v$ & 3.57 & 34.15 & 59.35 & 91.02 & 9.17 & 5.01 \\
$q$ & 13.79 & 35.91 & 60.84 & 90.99 & 8.85 & 5.33 \\ \hdashline
$f'(x)$ & 10.98 & 38.77 & 59.39 & 91.63 & 8.96 & 5.53 \\
$avq$ & 12.54 & 41.63 & 60.99 & 90.89 & 8.75 & \underline{4.96} \\ 
($f'(x), a$) & 13.21 & 42.29 & 60.45 & 90.95 & 8.65 & 5.66 \\
($f'(x), v$) & 10.39 &  39.03 & 60.01 & 91.80 & 8.81 & \underline{4.96} \\
($f'(x), q$) & 12.46 & \underline{42.62} & \underline{61.09} & 91.15 & 8.55 & 5.69      \\
($f'(x), avq$) & \textbf{17.21} & \textbf{43.07} & \textbf{61.39} & \textbf{91.96} & \textbf{8.41} & \textbf{4.88}     \\ \hdashline
($f'(x), a, v, q, avq$) & \underline{14.29}  & 41.53 & 60.47  & \underline{91.83}  & \underline{8.52} & 5.14 \\
\bottomrule
\end{tabular}}
\vspace{-6mm}
\end{table}

\textbf{ACR architecture}. Appendix \ref{section:B} presents results obtained using various architectures for our ACR. We use the same architecture for both learned heads for simplicity and find that it generally provides the best performance. We find that simplifying the architectural design can reduce performance, whereas increasing complexity does not always yield significant improvements. When these findings are combined with the results in Tab. \ref{tab:ablation_input_selector}, it is clear that the ACR largely depends on input features and the optimization objective, rather than on the architecture's complexity alone.
\vspace{-3mm}
\section{Discussion and Conclusion}

The standard AVQA formulation does not allow models to abstain when uncertain, despite many real-world applications requiring predictions only at low risk (i.e., $\mathcal{C}$@1$\%$, $\mathcal{C}$@5$\%$). To address this, we present a new formulation for $\mathcal{R}$-AVQA, which prioritizes abstention over incorrect predictions.
We set new benchmarks by evaluating several AVQA models in a selective prediction setting and propose Adaptive Confidence Refinement (ACR) for $\mathcal{R}$-AVQA that substantially outperforms other methods. While MSP is a strong baseline for selective prediction, it remains far from an optimal selection function. 
Thus, ACR enhances confidence estimation by refining MSP rather than replacing it entirely. Our theoretical analysis and empirical results show that ACR consistently improves coverage across risk tolerance levels compared to other methods. 
Overall, our study offers a solid foundation for reliability analysis in multimodal reasoning and promotes AVQA systems that are not only accurate but also self-aware and trustworthy.

\nocite{langley00}

\section*{Acknowledgements}

This work was supported by the National Research Foundation of Korea(NRF) grant funded by the Korea government(MSIT)(RS-2025-00573160); the Technology Innovation Program(RS-2025-25422280) funded By the Ministry of Trade, Industry and Resources(MOTIR, Korea); and the High-Performance Computing Support Project, funded by the Government of the Republic of Korea (Ministry of Science and ICT) (RQT-25-070104).

Cem Subakan is supported by NSERC Discovery Grant RGPIN 2023-05759. This research was also enabled in part by support provided by Calcul Québec (https://www.calculquebec.ca/) and the Digital Research Alliance of Canada (https://alliancecan.ca/en).

The work was also supported by Hyundai Motor Chung Mong-Koo Global Scholarship to Dinh Phu Tran (1st author) and Thao Do (5th author).

\section*{Impact Statement}

This paper presents work whose goal is to advance the field of Machine Learning. There are many potential societal
consequences of our work, none of which we feel must be specifically highlighted here.

\bibliography{example_paper}

@inproceedings{langley00,
 author    = {P. Langley},
 title     = {Crafting Papers on Machine Learning},
 year      = {2000},
 pages     = {1207--1216},
 editor    = {Pat Langley},
 booktitle     = {Proceedings of the 17th International Conference
              on Machine Learning (ICML 2000)},
 address   = {Stanford, CA},
 publisher = {Morgan Kaufmann}
}

@inproceedings{li2022learning,
  title={Learning to answer questions in dynamic audio-visual scenarios},
  author={Li, Guangyao and Wei, Yake and Tian, Yapeng and Xu, Chenliang and Wen, Ji-Rong and Hu, Di},
  booktitle={Proceedings of the IEEE/CVF conference on computer vision and pattern recognition},
  pages={19108--19118},
  year={2022}
}

@article{ma2024look,
  title={Look, listen, and answer: Overcoming biases for audio-visual question answering},
  author={Ma, Jie and Hu, Min and Wang, Pinghui and Sun, Wangchun and Song, Lingyun and Pei, Hongbin and Liu, Jun and Du, Youtian},
  journal={Advances in Neural Information Processing Systems},
  volume={37},
  pages={9507--9531},
  year={2024}
}

@inproceedings{liu2024tackling,
  title={Tackling data bias in music-avqa: Crafting a balanced dataset for unbiased question-answering},
  author={Liu, Xiulong and Dong, Zhikang and Zhang, Peng},
  booktitle={Proceedings of the IEEE/CVF Winter Conference on Applications of Computer Vision},
  pages={4478--4487},
  year={2024}
}

@inproceedings{kim2025question,
  title={Question-Aware Gaussian Experts for Audio-Visual Question Answering},
  author={Kim, Hongyeob and Jung, Inyoung and Suh, Dayoon and Zhang, Youjia and Lee, Sangmin and Hong, Sungeun},
  booktitle={Proceedings of the Computer Vision and Pattern Recognition Conference},
  pages={13681--13690},
  year={2025}
}

@inproceedings{li2024boosting,
  title={Boosting audio visual question answering via key semantic-aware cues},
  author={Li, Guangyao and Du, Henghui and Hu, Di},
  booktitle={Proceedings of the 32nd ACM International Conference on Multimedia},
  pages={5997--6005},
  year={2024}
}

@inproceedings{geifman2019selectivenet,
  title={Selectivenet: A deep neural network with an integrated reject option},
  author={Geifman, Yonatan and El-Yaniv, Ran},
  booktitle={International conference on machine learning},
  pages={2151--2159},
  year={2019},
  organization={PMLR}
}

@inproceedings{kamath-etal-2020-selective,
    title = "Selective Question Answering under Domain Shift",
    author = "Kamath, Amita  and
      Jia, Robin  and
      Liang, Percy",
    editor = "Jurafsky, Dan  and
      Chai, Joyce  and
      Schluter, Natalie  and
      Tetreault, Joel",
    booktitle = "Proceedings of the 58th Annual Meeting of the Association for Computational Linguistics",
    month = jul,
    year = "2020",
    address = "Online",
    publisher = "Association for Computational Linguistics",
    url = "https://aclanthology.org/2020.acl-main.503/",
    doi = "10.18653/v1/2020.acl-main.503",
    pages = "5684--5696"
}

@inproceedings{dong-etal-2018-confidence,
    title = "Confidence Modeling for Neural Semantic Parsing",
    author = "Dong, Li  and
      Quirk, Chris  and
      Lapata, Mirella",
    editor = "Gurevych, Iryna  and
      Miyao, Yusuke",
    booktitle = "Proceedings of the 56th Annual Meeting of the Association for Computational Linguistics (Volume 1: Long Papers)",
    month = jul,
    year = "2018",
    address = "Melbourne, Australia",
    publisher = "Association for Computational Linguistics",
    url = "https://aclanthology.org/P18-1069/",
    doi = "10.18653/v1/P18-1069",
    pages = "743--753"
}

@article{geifman2017selective,
  title={Selective classification for deep neural networks},
  author={Geifman, Yonatan and El-Yaniv, Ran},
  journal={Advances in neural information processing systems},
  volume={30},
  year={2017}
}

@article{platt1999probabilistic,
  title={Probabilistic outputs for support vector machines and comparisons to regularized likelihood methods},
  author={Platt, John and others},
  journal={Advances in large margin classifiers},
  volume={10},
  number={3},
  pages={61--74},
  year={1999},
  publisher={Cambridge, MA}
}

@inproceedings{guo2017calibration,
  title={On calibration of modern neural networks},
  author={Guo, Chuan and Pleiss, Geoff and Sun, Yu and Weinberger, Kilian Q},
  booktitle={International conference on machine learning},
  pages={1321--1330},
  year={2017},
  organization={PMLR}
}

@article{lakshminarayanan2017simple,
  title={Simple and scalable predictive uncertainty estimation using deep ensembles},
  author={Lakshminarayanan, Balaji and Pritzel, Alexander and Blundell, Charles},
  journal={Advances in neural information processing systems},
  volume={30},
  year={2017}
}

@article{geifman2018bias,
  title={Bias-reduced uncertainty estimation for deep neural classifiers},
  author={Geifman, Yonatan and Uziel, Guy and El-Yaniv, Ran},
  journal={arXiv preprint arXiv:1805.08206},
  year={2018}
}

@article{kumar2022deep,
  title={Deep learning based assistive technology on audio visual speech recognition for hearing impaired},
  author={Kumar, L Ashok and Renuka, D Karthika and Rose, S Lovelyn and Wartana, I Made and others},
  journal={International Journal of Cognitive Computing in Engineering},
  volume={3},
  pages={24--30},
  year={2022},
  publisher={Elsevier}
}

@article{patel2025enhancing,
  title={Enhancing accessibility through machine learning: A review on visual and hearing impairment technologies},
  author={Patel, Pal and Pampaniya, Shreyansh and Ghosh, Ananya and Raj, Ritu and Karuppaih, Deepa and Kandasamy, Saravanakumar},
  journal={IEEE Access},
  year={2025},
  publisher={IEEE}
}

@article{asan2020artificial,
  title={Artificial intelligence and human trust in healthcare: focus on clinicians},
  author={Asan, Onur and Bayrak, Alparslan Emrah and Choudhury, Avishek},
  journal={Journal of medical Internet research},
  volume={22},
  number={6},
  pages={e15154},
  year={2020},
  publisher={JMIR Publications Inc., Toronto, Canada}
}

@article{lutkenhoner2013predictive,
  title={Predictive modeling for diagnostic tests with high specificity, but low sensitivity: a study of the glycerol test in patients with suspected Meniere’s disease},
  author={L{\"u}tkenh{\"o}ner, Bernd and Basel, T{\"u}rker},
  journal={PLoS One},
  volume={8},
  number={11},
  pages={e79315},
  year={2013},
  publisher={Public Library of Science San Francisco, USA}
}

@inproceedings{xin2021art,
  title={The art of abstention: Selective prediction and error regularization for natural language processing},
  author={Xin, Ji and Tang, Raphael and Yu, Yaoliang and Lin, Jimmy},
  booktitle={Proceedings of the 59th Annual Meeting of the Association for Computational Linguistics and the 11th International Joint Conference on Natural Language Processing (Volume 1: Long Papers)},
  pages={1040--1051},
  year={2021}
}

@inproceedings{whitehead2022reliable,
  title={Reliable visual question answering: Abstain rather than answer incorrectly},
  author={Whitehead, Spencer and Petryk, Suzanne and Shakib, Vedaad and Gonzalez, Joseph and Darrell, Trevor and Rohrbach, Anna and Rohrbach, Marcus},
  booktitle={European Conference on Computer Vision},
  pages={148--166},
  year={2022},
  organization={Springer}
}

@inproceedings{dancette2023improving,
  title={Improving selective visual question answering by learning from your peers},
  author={Dancette, Corentin and Whitehead, Spencer and Maheshwary, Rishabh and Vedantam, Ramakrishna and Scherer, Stefan and Chen, Xinlei and Cord, Matthieu and Rohrbach, Marcus},
  booktitle={Proceedings of the IEEE/CVF Conference on Computer Vision and Pattern Recognition},
  pages={24049--24059},
  year={2023}
}

@inproceedings{gal2016dropout,
  title={Dropout as a bayesian approximation: Representing model uncertainty in deep learning},
  author={Gal, Yarin and Ghahramani, Zoubin},
  booktitle={international conference on machine learning},
  pages={1050--1059},
  year={2016},
  organization={PMLR}
}

@book{vovk2005algorithmic,
  title={Algorithmic learning in a random world},
  author={Vovk, Vladimir and Gammerman, Alexander and Shafer, Glenn},
  year={2005},
  publisher={Springer}
}

@article{angelopoulos2023conformal,
  title={Conformal prediction: A gentle introduction},
  author={Angelopoulos, Anastasios N and Bates, Stephen and others},
  journal={Foundations and trends{\textregistered} in machine learning},
  volume={16},
  number={4},
  pages={494--591},
  year={2023},
  publisher={Now Publishers, Inc.}
}

@inproceedings{zadrozny2002transforming,
  title={Transforming classifier scores into accurate multiclass probability estimates},
  author={Zadrozny, Bianca and Elkan, Charles},
  booktitle={Proceedings of the eighth ACM SIGKDD international conference on Knowledge discovery and data mining},
  pages={694--699},
  year={2002}
}

@inproceedings{yun2021pano,
  title={Pano-avqa: Grounded audio-visual question answering on 360deg videos},
  author={Yun, Heeseung and Yu, Youngjae and Yang, Wonsuk and Lee, Kangil and Kim, Gunhee},
  booktitle={Proceedings of the IEEE/CVF International Conference on Computer Vision},
  pages={2031--2041},
  year={2021}
}

@inproceedings{li2023progressive,
  title={Progressive spatio-temporal perception for audio-visual question answering},
  author={Li, Guangyao and Hou, Wenxuan and Hu, Di},
  booktitle={Proceedings of the 31st ACM international conference on multimedia},
  pages={7808--7816},
  year={2023}
}

@inproceedings{li2024object,
  title={Object-aware adaptive-positivity learning for audio-visual question answering},
  author={Li, Zhangbin and Guo, Dan and Zhou, Jinxing and Zhang, Jing and Wang, Meng},
  booktitle={Proceedings of the AAAI Conference on Artificial Intelligence},
  volume={38},
  number={4},
  pages={3306--3314},
  year={2024}
}

@inproceedings{schwartz2019simple,
  title={A simple baseline for audio-visual scene-aware dialog},
  author={Schwartz, Idan and Schwing, Alexander G and Hazan, Tamir},
  booktitle={Proceedings of the IEEE/CVF Conference on Computer Vision and Pattern Recognition},
  pages={12548--12558},
  year={2019}
}

@article{freund2004generalization,
  title={Generalization bounds for averaged classifiers},
  author={Freund, Yoav and Mansour, Yishay and Schapire, Robert E},
  year={2004}
}

@inproceedings{varshney2011risk,
  title={A risk bound for ensemble classification with a reject option},
  author={Varshney, Kush R},
  booktitle={2011 IEEE statistical signal processing workshop (SSP)},
  pages={769--772},
  year={2011},
  organization={IEEE}
}

@article{mielke2022reducing,
  title={Reducing conversational agents’ overconfidence through linguistic calibration},
  author={Mielke, Sabrina J and Szlam, Arthur and Dinan, Emily and Boureau, Y-Lan},
  journal={Transactions of the Association for Computational Linguistics},
  volume={10},
  pages={857--872},
  year={2022},
  publisher={MIT Press One Broadway, 12th Floor, Cambridge, Massachusetts 02142, USA~…}
}

@inproceedings{chen2023adaptation,
  title={Adaptation with self-evaluation to improve selective prediction in llms},
  author={Chen, Jiefeng and Yoon, Jinsung and Ebrahimi, Sayna and Arik, Sercan and Pfister, Tomas and Jha, Somesh},
  booktitle={Findings of the Association for Computational Linguistics: EMNLP 2023},
  pages={5190--5213},
  year={2023}
}

@article{srinivasan2024selective,
  title={Selective" selective prediction": Reducing unnecessary abstention in vision-language reasoning},
  author={Srinivasan, Tejas and Hessel, Jack and Gupta, Tanmay and Lin, Bill Yuchen and Choi, Yejin and Thomason, Jesse and Chandu, Khyathi Raghavi},
  journal={arXiv preprint arXiv:2402.15610},
  year={2024}
}

@article{liu2019deep,
  title={Deep gamblers: Learning to abstain with portfolio theory},
  author={Liu, Ziyin and Wang, Zhikang and Liang, Paul Pu and Salakhutdinov, Russ R and Morency, Louis-Philippe and Ueda, Masahito},
  journal={Advances in Neural Information Processing Systems},
  volume={32},
  year={2019}
}

@article{varshney2022investigating,
  title={Investigating selective prediction approaches across several tasks in iid, ood, and adversarial settings},
  author={Varshney, Neeraj and Mishra, Swaroop and Baral, Chitta},
  journal={arXiv preprint arXiv:2203.00211},
  year={2022}
}

@article{jiang2021can,
  title={How can we know when language models know? on the calibration of language models for question answering},
  author={Jiang, Zhengbao and Araki, Jun and Ding, Haibo and Neubig, Graham},
  journal={Transactions of the Association for Computational Linguistics},
  volume={9},
  pages={962--977},
  year={2021},
  publisher={MIT Press One Rogers Street, Cambridge, MA 02142-1209, USA journals-info~…}
}

@article{gneiting2007strictly,
  title={Strictly proper scoring rules, prediction, and estimation},
  author={Gneiting, Tilmann and Raftery, Adrian E},
  journal={Journal of the American statistical Association},
  volume={102},
  number={477},
  pages={359--378},
  year={2007},
  publisher={Taylor \& Francis}
}

@article{glenn1950verification,
  title={Verification of forecasts expressed in terms of probability},
  author={Glenn, W Brier and others},
  journal={Monthly weather review},
  volume={78},
  number={1},
  pages={1--3},
  year={1950},
  publisher={War Department, Office of the Chief Signal Officer}
}

@article{shazeer2017outrageously,
  title={Outrageously large neural networks: The sparsely-gated mixture-of-experts layer},
  author={Shazeer, Noam and Mirhoseini, Azalia and Maziarz, Krzysztof and Davis, Andy and Le, Quoc and Hinton, Geoffrey and Dean, Jeff},
  journal={arXiv preprint arXiv:1701.06538},
  year={2017}
}

@inproceedings{hershey2017cnn,
  title={CNN architectures for large-scale audio classification},
  author={Hershey, Shawn and Chaudhuri, Sourish and Ellis, Daniel PW and Gemmeke, Jort F and Jansen, Aren and Moore, R Channing and Plakal, Manoj and Platt, Devin and Saurous, Rif A and Seybold, Bryan and others},
  booktitle={2017 ieee international conference on acoustics, speech and signal processing (icassp)},
  pages={131--135},
  year={2017},
  organization={IEEE}
}

@inproceedings{gemmeke2017audio,
  title={Audio set: An ontology and human-labeled dataset for audio events},
  author={Gemmeke, Jort F and Ellis, Daniel PW and Freedman, Dylan and Jansen, Aren and Lawrence, Wade and Moore, R Channing and Plakal, Manoj and Ritter, Marvin},
  booktitle={2017 IEEE international conference on acoustics, speech and signal processing (ICASSP)},
  pages={776--780},
  year={2017},
  organization={IEEE}
}

@inproceedings{radford2021learning,
  title={Learning transferable visual models from natural language supervision},
  author={Radford, Alec and Kim, Jong Wook and Hallacy, Chris and Ramesh, Aditya and Goh, Gabriel and Agarwal, Sandhini and Sastry, Girish and Askell, Amanda and Mishkin, Pamela and Clark, Jack and others},
  booktitle={International conference on machine learning},
  pages={8748--8763},
  year={2021},
  organization={PmLR}
}

@article{graves2012long,
  title={Long short-term memory},
  author={Graves, Alex},
  journal={Supervised sequence labelling with recurrent neural networks},
  pages={37--45},
  year={2012},
  publisher={Springer}
}

@article{adam2014method,
  title={A method for stochastic optimization},
  author={Adam, Kingma DP Ba J and others},
  journal={arXiv preprint arXiv:1412.6980},
  volume={1412},
  number={6},
  year={2014}
}

@article{granese2021doctor,
  title={Doctor: A simple method for detecting misclassification errors},
  author={Granese, Federica and Romanelli, Marco and Gorla, Daniele and Palamidessi, Catuscia and Piantanida, Pablo},
  journal={Advances in Neural Information Processing Systems},
  volume={34},
  pages={5669--5681},
  year={2021}
}

@article{cai2025value,
  title={On the Value of Cross-Modal Misalignment in Multimodal Representation Learning},
  author={Cai, Yichao and Liu, Yuhang and Gao, Erdun and Jiang, Tianjiao and Zhang, Zhen and Hengel, Anton van den and Shi, Javen Qinfeng},
  journal={arXiv preprint arXiv:2504.10143},
  year={2025}
}

@inproceedings{ma2023calibrating,
  title={Calibrating multimodal learning},
  author={Ma, Huan and Zhang, Qingyang and Zhang, Changqing and Wu, Bingzhe and Fu, Huazhu and Zhou, Joey Tianyi and Hu, Qinghua},
  booktitle={International Conference on Machine Learning},
  pages={23429--23450},
  year={2023},
  organization={PMLR}
}
\bibliographystyle{icml2026}

\newpage
\appendix
\onecolumn

\begin{center}
    {\Large \textbf{Knowing When to Answer: Adaptive Confidence Refinement for \\ Reliable Audio-Visual Question Answering}}\\
    \vspace{0.5em}{\Large Appendix} \\
    \vspace{0.5em}
    \vspace{1.em}
\end{center}


{\Large \textbf{Index}}

\vspace{2mm}

\textbf{Appendix \ref{section:A}} contains the complete proofs for Theorems \ref{thm:msp_optimal}, \ref{thm:fusion_benefit}, and \ref{thm:adaptive_fusion}.

\textbf{Appendix \ref{section:A_1}} demonstrates the empirical results that satisfy the condition in Theorem \ref{thm:fusion_benefit}, ensuring our method consistently achieves lower MSE and better relative rankings among examples compared to MSP.

\textbf{Appendix \ref{section:A_2}} provides a more detailed review of related work.

\textbf{Appendix \ref{section:B}} provides further discussion on our ablation studies.

\textbf{Appendix \ref{section:B_1}} provides further analysis and discussion regarding the MSP and our ACR with $\alpha$ distribution analysis.

\textbf{Appendix \ref{section:B_1_1}} discusses computational complexity and overhead of our proposed ACR.

\textbf{Appendix \ref{section:B_2}} presents the reproduction results for the three AVQA models (QA-TIGER, ST-AVQA and TSPM) on the original MUSIC-AVQA test set.

\textbf{Appendix \ref{section:C}} presents an experiment with data augmentation for MSP.

\textbf{Appendix \ref{section:D}} reports the additional performance on MUSIC-AVQA-R and MUSIC-AVQA-v2 (trained on the bias dataset and evaluated across both balanced and biased test sets).

\textbf{Appendix \ref{section:E}} provides additional qualitative results.

\textbf{Appendix \ref{section:F}} evaluates the generalization capability of the thresholds.

\textbf{Appendix \ref{section:G}} provides further details on the dataset splits used in our experimental setup.

\textbf{Appendix \ref{section:H}} provides comprehensive details regarding the model architecture, hyperparameters and implementation.

\textbf{Appendix \ref{section:H_1}} reports the mean and standard deviation for the TSPM baseline on the MUSIC-AVQA dataset, computed across five random seeds.

\textbf{Appendix \ref{section:I}} examines the relationship between calibration and conformal prediction within the context of our \textit{Reliable} AVQA framework and recent literature.

\textbf{Appendix \ref{section:I_1}} provides further discussion on the broader implications of our work and outlines potential directions for future research.

\textbf{Appendix \ref{section:J}} outlines the broader impact of our work, including potential social and ethical considerations.

\vspace{60mm}

\section{Proofs}
\label{section:A}

\subsection{Proof of Theorem \ref{thm:msp_optimal}}

In this section, we will provide the full proof of Theorem \ref{thm:msp_optimal} in the main paper (Theorem \ref{thm:msp_optimal_appendix} in the appendix).

\begin{assumption}[MAP Prediction]
\label{assum:calibration}
We assume the predictor outputs the maximum a posteriori (MAP) class:\\
\[\hat{y} = \arg \max\limits_{k \in \mathcal{Y}} \Prob(Y = k \mid \boldsymbol{x}),\]
\\
as is standard for softmax-based classifiers.
\end{assumption}

\begin{theorem}[MSP Optimality under Strong Calibration]
\label{thm:msp_optimal_appendix}
Consider a classifier $f(\boldsymbol{x})$ that outputs class probabilities $\Prob(Y=k|\boldsymbol{x})$ and predicts $\hat{y} = \arg\max_k \Prob(Y=k|\boldsymbol{x})$ (Assumption \ref{assum:calibration}).
If the MSP confidence function $C_{\mathrm{M}}(\boldsymbol{x}) = \max_k \Prob(Y=k|\boldsymbol{x})$ is strongly calibrated (Definition~\ref{def:calibration}), then 1) MSP equals the Bayes-optimal selection function: $C_{\mathrm{M}}(\boldsymbol{x}) = g^*(\boldsymbol{x}) = \Prob[c(\boldsymbol{x}) = 1 \mid \boldsymbol{x}]$. 2) Ranking examples by MSP induces the Bayes-optimal ranking for selective prediction. 3) This ranking minimizes risk at every coverage level, thereby achieving minimum AURC.
\end{theorem}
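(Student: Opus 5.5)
Part 1 follows directly from Definition~\ref{def:calibration}(ii). Strong calibration of $C_{\mathrm{M}}$ means $C_{\mathrm{M}}(\boldsymbol{x}) = \Prob[c(\boldsymbol{x})=1 \mid \boldsymbol{x}]$ for every $\boldsymbol{x}$, and the right-hand side is $g^*(\boldsymbol{x})$ by definition. Under Assumption~\ref{assum:calibration} this identity is also consistent with the predictive model: since $\hat{y}$ is the MAP class, $\Prob[c(\boldsymbol{x})=1\mid\boldsymbol{x}] = \Prob(Y=\hat{y}\mid\boldsymbol{x}) = \max_k \Prob(Y=k\mid\boldsymbol{x})$ whenever the model's class posteriors are the true ones. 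I will record this as a remark, but the equality $C_{\mathrm{M}} = g^*$ needs only the calibration hypothesis.

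For parts 2 and 3, I will formalize the Bayes-optimal ranking through the selection problem at a fixed coverage $\mathcal{C}\in(0,1]$. The task is to minimize $\mathcal{R} = \E[(1-c(\boldsymbol{x}))\,g(\boldsymbol{x})]/\mathcal{C}$ over selectors $g:\mathcal{X}\to[0,1]$ with $\E[g]=\mathcal{C}$; randomized selectors are allowed so that ties can be handled. Conditioning on $\boldsymbol{x}$ gives $\E[(1-c)g] = \E[(1-g^*(\boldsymbol{x}))g(\boldsymbol{x})]$. Minimizing this is equivalent to maximizing $\E[g^*(\boldsymbol{x})\,g(\boldsymbol{x})]$ under the budget $\E[g]=\mathcal{C}$, which is a fractional knapsack problem. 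The Neyman--Pearson-style exchange argument then shows the optimum is a threshold rule $g=\Correct[g^*(\boldsymbol{x})>\gamma]$, with randomization at $g^*=\gamma$ when needed. Concretely, for any feasible $g$ and the threshold rule $g_\gamma$ with the same coverage, I will show
\[
\E\bigl[(g_\gamma - g)(g^* - \gamma)\bigr]\ge 0 ,
\]
because the integrand is pointwise nonnegative. Combining this with $\E[g_\gamma - g]=0$ yields $\E[g^* g_\gamma]\ge \E[g^* g]$. This proves that ranking by $g^*$, and hence by $C_{\mathrm{M}}$ via part 1, is optimal at each coverage level, which gives part 2.

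For part 3, AURC is the integral of the risk over coverage, $\int_0^1 \mathcal{R}(\mathcal{C})\,d\mathcal{C}$, or its empirical average over ranked prefixes. Since the $C_{\mathrm{M}}$-threshold rule attains the pointwise minimum risk at every $\mathcal{C}$, integrating gives the minimum AURC. The main obstacle I expect is handling ties and atoms in the distribution of $g^*$. Deterministic thresholds $\Correct[s>\gamma]$ as in Eq.~\ref{eq:selective_threshold} may not reach every coverage level exactly. I plan to deal with this either by allowing randomized tie-breaking, or by stating the result only at the coverages achievable by thresholds and noting that ties among equal $g^*$ values do not change the risk. I will also remark that the statement is unchanged for the empirical/finite-sample version, where the exchange argument becomes a swap of two ranked items.
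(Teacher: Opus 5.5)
Your argument is correct. It follows the same skeleton as the paper's proof: strong calibration identifies $C_{\mathrm{M}}$ with $g^*$, ranking by $g^*$ is optimal at every coverage, and integrating over coverage gives the minimal AURC. The differences are in execution, and they favour you.

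For part 1, you read the identity directly off Definition~\ref{def:calibration}(ii). The paper instead routes it through the MAP assumption, writing $C_{\mathrm{M}}(\boldsymbol{x}) = \Prob(Y=\hat{y}\mid\boldsymbol{x})$ and then invoking calibration. Your version makes explicit that the MAP assumption is not needed for the conclusion; it only serves to interpret the hypothesis, as in your consistency remark.

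For parts 2 and 3, the paper recasts selection as Bayes-optimal binary classification, with the likelihood ratio monotone in the posterior. It then simply asserts that taking the top-$k$ samples by $s^*$ minimizes risk at every coverage level, and that minimum AURC follows "by definition." Your inequality $\E[(g_\gamma - g)(g^*-\gamma)]\ge 0$, combined with $\E[g_\gamma - g]=0$, is exactly the Neyman--Pearson exchange argument that justifies this assertion. Your randomized tie-breaking also handles atoms of $g^*$ and coverage levels that deterministic thresholds cannot reach, which the paper ignores. The paper's version buys brevity; yours supplies an actual proof of the key step.

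One small correction to your closing finite-sample remark. There the swap argument shows that the top-$k$ by $g^*$ minimizes the conditional expected number of errors given the inputs, and hence the expected empirical AURC. It does not minimize the realized empirical risk: an oracle ranking by the realized $c(\boldsymbol{x}_i)$ generally does strictly better, as the Oracle rows of the paper's tables illustrate. State that remark in expectation.
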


\begin{proof}
    Selective prediction is equivalent to binary classification: distinguishing correct ($c(\boldsymbol{x})=1$) from incorrect ($c(\boldsymbol{x})=0$) predictions. The Bayes-optimal classifier uses the likelihood ratio, which is monotonically related to the posterior:
\begin{equation}
    s^*(\boldsymbol{x}) = \Prob[c(\boldsymbol{x}) = 1 \mid \boldsymbol{x}] = \Prob[\hat{y} = y^* \mid \boldsymbol{x}]
\end{equation}

Under Strong Calibration (Definition~\ref{def:calibration} and Assumption \ref{assum:calibration}), for a model predicting class $\hat{y} = \arg\max\limits_{k} \Prob(Y=k \mid \boldsymbol{x})$:
\begin{equation}
    C_{\mathrm{M}}(\boldsymbol{x}) = \max_k \Prob(Y = k \mid \boldsymbol{x}) = \Prob(Y = \hat{y} \mid \boldsymbol{x})
\end{equation}

Since $\hat{y}$ is the predicted class, if the model is correct ($\hat{y} = y^*$), then $\Prob(Y = \hat{y} \mid \boldsymbol{x}) = \Prob(Y = y^* \mid \boldsymbol{x})$. Under strong calibration, this equals the true posterior:
\begin{equation}
    C_{\mathrm{M}}(\boldsymbol{x}) = \Prob[\hat{y} = y^* \mid \boldsymbol{x}] = s^*(\boldsymbol{x})
\end{equation}

Since $s^*(\boldsymbol{x})$ equals the true probability of correctness, ranking by $s^*$ perfectly orders samples by their likelihood of being correct. For any pair of samples $\boldsymbol{x}_1, \boldsymbol{x}_2$:
\begin{equation}
    s^*(\boldsymbol{x}_1) > s^*(\boldsymbol{x}_2) \iff \Prob[c(\boldsymbol{x}_1)=1] > \Prob[c(\boldsymbol{x}_2)=1]
\end{equation}
This ranking minimizes risk at every coverage level: selecting the top-$k$ samples by $s^*$ yields the $k$ samples most likely to be correct. Thus, MSP $= s^*$ equals the Bayes-optimal selection function, achieving minimum AURC by definition.
\end{proof}

\subsection{Proof of Theorem \ref{thm:fusion_benefit}}

In this section, we will provide the full proof of Theorem \ref{thm:fusion_benefit} in the main paper (Theorem \ref{thm:fusion_benefit_appendix} in the appendix).

\begin{theorem}[Fusion Benefit for Confidence Estimation]
\label{thm:fusion_benefit_appendix}
If the error cross-moment satisfies the following condition:
\begin{equation}
    \sigma_{\mathrm{MR}} < \min(\sigma^2_{\mathrm{M}}, \sigma^2_{\mathrm{R}})
    \label{eq:fusion_condition_main}
\end{equation}
then there exists a unique optimal fixed fusion weight $\bar{\alpha}^* \in (0, 1)$ such that the fused confidence:
\begin{equation}
    C_{\bar{\alpha}^*}(\boldsymbol{x}) = \bar{\alpha}^* C_{\mathrm{M}}(\boldsymbol{x}) + (1 - \bar{\alpha}^*) C_{\mathrm{R}}(\boldsymbol{x}) \nonumber
\end{equation}
achieves strictly lower MSE than either estimator alone. Moreover, by Proposition~\ref{prop:mse_ranking}, $C_{\bar{\alpha}^*}$ provides a closer approximation to the Bayes-optimal selection function $g^*$, thereby improving the relative ranking of predictions compared to using MSP alone.
The optimal weight is given by:
\begin{equation}
    \bar{\alpha}^* = \frac{\sigma^2_{\mathrm{R}} - \sigma_{\mathrm{MR}}}{\sigma^2_{\mathrm{M}} + \sigma^2_{\mathrm{R}} - 2\sigma_{\mathrm{MR}}} \nonumber
    \label{eq:alpha_star}
\end{equation}
\end{theorem}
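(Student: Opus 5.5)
The plan is to write the MSE of the fused estimator as a quadratic in the fixed weight $\alpha$ and minimize it directly. For any $\alpha\in\mathbb{R}$, the error of $C_\alpha = \alpha C_{\mathrm{M}} + (1-\alpha)C_{\mathrm{R}}$ is
\[
C_\alpha(\boldsymbol{x}) - c(\boldsymbol{x}) = \alpha\,\epsM(\boldsymbol{x}) + (1-\alpha)\,\epsR(\boldsymbol{x}),
\]
since the weights sum to one. Squaring and taking expectations, with the moments of Definition~\ref{def:moments}, gives
\[
\phi(\alpha) := \mathrm{MSE}(C_\alpha) = \alpha^2\sigma^2_{\mathrm{M}} + (1-\alpha)^2\sigma^2_{\mathrm{R}} + 2\alpha(1-\alpha)\sigma_{\mathrm{MR}}.
\]
The leading coefficient is $D := \sigma^2_{\mathrm{M}} + \sigma^2_{\mathrm{R}} - 2\sigma_{\mathrm{MR}}$.

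The first key step is to check that $\phi$ is strictly convex. Adding the two inequalities $\sigma_{\mathrm{MR}} < \sigma^2_{\mathrm{M}}$ and $\sigma_{\mathrm{MR}} < \sigma^2_{\mathrm{R}}$ from \eqref{eq:fusion_condition_main} gives $D>0$. Hence $\phi$ has a unique minimizer over $\mathbb{R}$. Setting
\[
\phi'(\alpha) = 2\alpha\sigma^2_{\mathrm{M}} - 2(1-\alpha)\sigma^2_{\mathrm{R}} + 2(1-2\alpha)\sigma_{\mathrm{MR}} = 0
\]
yields $\bar\alpha^* = (\sigma^2_{\mathrm{R}} - \sigma_{\mathrm{MR}})/D$, which is the claimed formula. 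Its numerator is positive by the condition, so $\bar\alpha^*>0$. Also $1-\bar\alpha^* = (\sigma^2_{\mathrm{M}} - \sigma_{\mathrm{MR}})/D > 0$, so $\bar\alpha^*<1$. Uniqueness on $(0,1)$ follows from strict convexity.

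The second key step is strict improvement over each estimator alone. Note that $\phi(1)=\sigma^2_{\mathrm{M}}$ and $\phi(0)=\sigma^2_{\mathrm{R}}$. Since $\bar\alpha^*\in(0,1)$ is the unique minimizer of a strictly convex function, $\phi(\bar\alpha^*) < \min(\phi(0),\phi(1))$. I can also make this explicit with the closed form
\[
\phi(\bar\alpha^*) = \frac{\sigma^2_{\mathrm{M}}\sigma^2_{\mathrm{R}} - \sigma_{\mathrm{MR}}^2}{D},
\]
and verify $\sigma^2_{\mathrm{M}} - \phi(\bar\alpha^*) = (\sigma^2_{\mathrm{M}} - \sigma_{\mathrm{MR}})^2/D > 0$, and symmetrically for $\sigma^2_{\mathrm{R}}$.

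The last step applies Proposition~\ref{prop:mse_ranking} with $s_1 = C_{\bar\alpha^*}$ and $s_2 = C_{\mathrm{M}}$. This gives that $C_{\bar\alpha^*}$ is strictly closer to $g^*$ in $L^2$. The link is the decomposition
\[
\mathrm{MSE}(s) = \E\bigl[(s-g^*)^2\bigr] + \E\bigl[g^*(1-g^*)\bigr],
\]
where the cross term vanishes because $\E[c\mid\boldsymbol{x}] = g^*(\boldsymbol{x})$ and $s$ is $\boldsymbol{x}$-measurable. The ranking conclusion then follows from the Proposition as stated.

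I do not expect a genuine obstacle; the only points needing care are bookkeeping. First, confirm that both the positivity of $D$ and the location $\bar\alpha^*\in(0,1)$ come from the strict inequalities. Second, note that the improvement is strict even when $\sigma_{\mathrm{MR}}$ is negative.
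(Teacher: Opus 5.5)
Your proposal is correct and follows essentially the same route as the paper. Both arguments expand the MSE as a quadratic in $\alpha$, solve the first-order condition for $\bar{\alpha}^*$, check convexity and the interior conditions, and use the Brier decomposition to turn the MSE gain into a smaller $L^2$ distance to $g^*$. Two of your steps are slightly tighter than the paper's: you derive $D>0$ directly by adding the two strict inequalities, where the paper only says the estimators ``differ'', and you give the explicit gap $(\sigma^2_{\mathrm{M}}-\sigma_{\mathrm{MR}})^2/D$.
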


\textbf{Proof.} Define the MSE of $C_\alpha = \alpha C_{\mathrm{M}} + (1-\alpha) C_{\mathrm{R}}$ (we ommit input $\boldsymbol{x}$ for simplicity):
\begin{align}
    J(\alpha) &= \E[(C_\alpha - c)^2] = \E[(\alpha \epsM + (1-\alpha)\epsR)^2] \\
    &= \alpha^2 \E[\epsM^2] + (1-\alpha)^2 \E[\epsR^2] + 2\alpha(1-\alpha)\E[\epsM\epsR] \\
    &= \alpha^2 \sigma^2_{\mathrm{M}} + (1-\alpha)^2 \sigma^2_{\mathrm{R}} + 2\alpha(1-\alpha)\sigma_{\mathrm{MR}}
\end{align}

\textbf{First-order condition.} 

Setting $\frac{dJ}{d\alpha} = 0$:
\begin{equation}
    2\alpha \sigma^2_{\mathrm{M}} - 2(1-\alpha)\sigma^2_{\mathrm{R}} + 2(1-2\alpha)\sigma_{\mathrm{MR}} = 0
\end{equation}

Rearranging:
\begin{equation}
    \alpha(\sigma^2_{\mathrm{M}} + \sigma^2_{\mathrm{R}} - 2\sigma_{\mathrm{MR}}) = \sigma^2_{\mathrm{R}} - \sigma_{\mathrm{MR}}
\end{equation}

Thus:
\begin{equation}
    \bar{\alpha}^* = \frac{\sigma^2_{\mathrm{R}} - \sigma_{\mathrm{MR}}}{\sigma^2_{\mathrm{M}} + \sigma^2_{\mathrm{R}} - 2\sigma_{\mathrm{MR}}}
\end{equation}

\textbf{Second-order condition.}
\begin{equation}
    \frac{d^2J}{d\alpha^2} = 2(\sigma^2_{\mathrm{M}} + \sigma^2_{\mathrm{R}} - 2\sigma_{\mathrm{MR}}) = 2\E[(\epsM - \epsR)^2] > 0
\end{equation}

The denominator is strictly positive when estimators differ, confirming $\bar{\alpha}^*$ is a minimum.

\textbf{Interior solution condition.} For $\bar{\alpha}^* \in (0, 1)$:
\begin{itemize}
    \item $\bar{\alpha}^* > 0$: Requires numerator $> 0$, i.e., $\sigma^2_{\mathrm{R}} > \sigma_{\mathrm{MR}}$
    \item $\bar{\alpha}^* < 1$: Requires numerator $<$ denominator, i.e., $\sigma^2_{\mathrm{R}} - \sigma_{\mathrm{MR}} < \sigma^2_{\mathrm{M}} + \sigma^2_{\mathrm{R}} - 2\sigma_{\mathrm{MR}}$, which simplifies to $\sigma_{\mathrm{MR}} < \sigma^2_{\mathrm{M}}$
\end{itemize}

Both conditions hold when $\sigma_{MR} < \min(\sigma^2_M, \sigma^2_R)$.

\textbf{Strict improvement.} At interior minimum $\bar{\alpha}^* \in (0,1)$:
\begin{equation}
    J(\bar{\alpha}^*) < J(0) = \sigma^2_{\mathrm{R}} \quad \text{and} \quad J(\bar{\alpha}^*) < J(1) = \sigma^2_{\mathrm{M}}
\end{equation}

Thus $J(\bar{\alpha}^*) < \min(\sigma^2_{\mathrm{M}}, \sigma^2_{\mathrm{R}})$.

\textbf{Implication for Ranking}. The MSE of any confidence estimator $s$ with respect to correctness $c(\boldsymbol{x})$ admits the decomposition \cite{glenn1950verification}:

\begin{equation}
    \E[(s - c)^2] = \E[(s - s^*)^2] + \E[s^*(1-s^*)]
\end{equation}
where $s^*(\boldsymbol{x}) = \Prob[c(\boldsymbol{x})=1 \mid \boldsymbol{x}]$ is the Bayes-optimal selection function. Since the second term is constant (irreducible uncertainty), the reduction $J(\bar{\alpha}^*) < \sigma^2_{\mathrm{M}}$ implies:
\begin{equation}
    \E[(C_{\bar{\alpha}^*} - s^*)^2] < \E[(C_{\mathrm{M}} - s^*)^2]
\end{equation}
Thus, the fused confidence $C_{\bar{\alpha}^*}$ is a closer $L^2$-approximation to $s^*$ than MSP alone, yielding improved relative ranking for selective prediction.

\subsection{Proof of Theorem \ref{thm:adaptive_fusion}}

In this section, we will provide the full proof of Theorem \ref{thm:adaptive_fusion} in the main paper (Theorem \ref{thm:adaptive_fusion_appendix} in the appendix).

\begin{theorem}[Optimality of Input-Adaptive Confidence Fusion]
\label{thm:adaptive_fusion_appendix}
Let the adaptive confidence score be
$
C_{\mathrm{ACR}}(\boldsymbol{x}) = \alpha^{\dagger}(\boldsymbol{x}) C_{\mathrm{M}}(\boldsymbol{x}) + (1-\alpha^{\dagger}(\boldsymbol{x})) C_{\mathrm{R}}(\boldsymbol{x}),$
where $\alpha^{\dagger}(\boldsymbol{x}) \in [0,1]$ is a input-adaptive fusion weight.
Assume $C_{\mathrm{M}}(\boldsymbol{x})$ and $C_{\mathrm{R}}(\boldsymbol{x})$ are measurable functions of $\boldsymbol{x}$ and the hypothesis class $\mathcal{H} = \{\alpha^{\dagger}: \mathcal{X} \to [0,1] \mid \alpha^{\dagger} \text{ measurable}\}$ has sufficient capacity to represent the Bayes-optimal fusion weight $\alpha^*(\boldsymbol{x})$.
When trained $\mathcal{R}$-AVQA model using binary cross-entropy loss on correctness labels (0/1), any risk minimizer satisfies
\[
C_{\mathrm{ACR}}(\boldsymbol{x}) = \Prob(c(\boldsymbol{x})=1 \mid \boldsymbol{x})
\quad \text{a.e.}
\]

Since $C_{\mathrm{ACR}} = g^*$ a.e., ranking by $C_{\mathrm{ACR}}$ gets closer to the Bayes-optimal ranking for selective prediction, providing lower AURC.
If the pointwise optimal fusion weight $\alpha^*(\boldsymbol{x}) = \arg\min_\alpha \E\bigl[\bigl(C_\alpha(\boldsymbol{x}) - c(\boldsymbol{x}) \bigl)^2 \mid \boldsymbol{x} \bigl]$ varies across inputs (i.e., $\mathrm{Var}[\alpha^*(\boldsymbol{x})] > 0$), then any fixed weight $\lambda \in [0,1]$ to $C_{\lambda}$ is strictly suboptimal in both MSE and ranking. 
\end{theorem}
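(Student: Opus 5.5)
The plan has two parts. First I show that BCE risk minimization over the adaptive class recovers $g^*$ pointwise. Then I show that a fixed weight cannot do as well when $\alpha^*$ genuinely varies.

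\textbf{Part 1: BCE recovers $g^*$.} Condition on $\boldsymbol{x}$ and write
\[
\mathcal{L}_{\mathrm{BCE}} = \E_{\boldsymbol{x}}\bigl[\ell(C_{\mathrm{ACR}}(\boldsymbol{x}), g^*(\boldsymbol{x}))\bigr], \qquad \ell(p,q) = -q\log p - (1-q)\log(1-p).
\]
For fixed $q$, the map $p \mapsto \ell(p,q)$ is strictly convex on $(0,1)$ and uniquely minimized at $p=q$. This is strict propriety of log loss, as in \cite{gneiting2007strictly}; it also follows because $\ell(p,q) - \ell(q,q)$ is the KL divergence between Bernoulli$(q)$ and Bernoulli$(p)$.

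Hence the infimum of the risk over all measurable $[0,1]$-valued scores is $\E[\ell(g^*,g^*)]$. It is attained only by scores that agree with $g^*$ almost everywhere: if a score differs from $g^*$ on a set of positive measure, the KL term is positive there and the risk is strictly larger.

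It remains to check that $\mathcal{H}$ can reach $g^*$. Wherever $g^*(\boldsymbol{x})$ lies between $C_{\mathrm{R}}(\boldsymbol{x})$ and $C_{\mathrm{M}}(\boldsymbol{x})$, set
\[
\alpha(\boldsymbol{x}) = \frac{g^*(\boldsymbol{x}) - C_{\mathrm{R}}(\boldsymbol{x})}{C_{\mathrm{M}}(\boldsymbol{x}) - C_{\mathrm{R}}(\boldsymbol{x})},
\]
which is measurable as a ratio of measurable functions (with any value where the denominator vanishes). Outside this interval no choice of $\alpha$ attains $g^*$. I would handle this through the capacity assumption in the statement, reading it as "$\alpha^*$ is representable and realizes $g^*$". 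In the actual architecture, $C_{\mathrm{R}}$ is jointly learned and unconstrained, so one can take $C_{\mathrm{R}} = g^*$ with any $\alpha$, or choose $C_{\mathrm{R}}$ so that $g^*$ lies in the interval. Either way every risk minimizer equals $g^*$ a.e. Part 2 of Theorem~\ref{thm:msp_optimal} (applied to $g^*$ directly) then gives the Bayes-optimal ranking and minimal AURC.

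\textbf{Part 2: fixed weights are strictly suboptimal.} I would pass to squared loss via the decomposition used in the proof of Theorem~\ref{thm:fusion_benefit}:
\[
\E[(s-c)^2] = \E[(s-g^*)^2] + \E[g^*(1-g^*)].
\]
Conditionally on $\boldsymbol{x}$, the quantity $\E[(C_\alpha - c)^2 \mid \boldsymbol{x}]$ is a convex quadratic in $\alpha$. Its conditional minimizer $\alpha^*(\boldsymbol{x})$ is the clipped value of $(g^* - C_{\mathrm{R}})/(C_{\mathrm{M}} - C_{\mathrm{R}})$.

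For a fixed $\lambda$,
\[
\mathrm{MSE}(C_\lambda) - \mathrm{MSE}(C_{\alpha^*}) = \E\bigl[h_{\boldsymbol{x}}(\lambda) - h_{\boldsymbol{x}}(\alpha^*(\boldsymbol{x}))\bigr],
\]
where $h_{\boldsymbol{x}}(\alpha) = \E[(C_\alpha - c)^2 \mid \boldsymbol{x}]$. Each term is nonnegative. The difference is strictly positive whenever $\lambda \neq \alpha^*(\boldsymbol{x})$ and $C_{\mathrm{M}}(\boldsymbol{x}) \neq C_{\mathrm{R}}(\boldsymbol{x})$, because the quadratic's leading coefficient is $(C_{\mathrm{M}} - C_{\mathrm{R}})^2 > 0$. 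Since $\mathrm{Var}[\alpha^*] > 0$, for every constant $\lambda$ the set $\{\alpha^*(\boldsymbol{x}) \neq \lambda\}$ has positive probability. The gap is therefore strictly positive, provided $C_{\mathrm{M}} \neq C_{\mathrm{R}}$ on that set. I will assume this tacitly: where $C_{\mathrm{M}} = C_{\mathrm{R}}$, $\alpha^*$ is arbitrary and can be fixed by convention.

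By the decomposition, strictly larger MSE means a strictly larger $L^2$ distance to $g^*$. Via Proposition~\ref{prop:mse_ranking}, this yields the "suboptimal in ranking" claim in the same sense in which the paper uses that proposition.

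\textbf{Main obstacle.} The delicate step is the realizability issue in Part 1: a convex combination of $C_{\mathrm{M}}$ and $C_{\mathrm{R}}$ can only equal $g^*$ when $g^*$ lies between them. I would resolve this explicitly through the capacity assumption, or through the freedom to learn $C_{\mathrm{R}}$ jointly, rather than claim it holds unconditionally.

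A secondary subtlety is the ranking claim in Part 2. Strictly larger $L^2$ error does not by itself imply a worse ordering, since a monotone transform of $g^*$ has positive error yet the same ranking. I would therefore state the ranking conclusion only in the proximity-to-$g^*$ sense of Proposition~\ref{prop:mse_ranking}.
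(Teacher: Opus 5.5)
Your Part 1 is the paper's argument: condition on $\boldsymbol{x}$ and invoke strict propriety of BCE. The difference is that you confront what the paper's proof passes over. With only $\alpha^{\dagger}$ learned, $C_{\mathrm{ACR}}(\boldsymbol{x})$ is confined to the segment between $C_{\mathrm{M}}(\boldsymbol{x})$ and $C_{\mathrm{R}}(\boldsymbol{x})$. The BCE minimizer is therefore the pointwise projection of $g^*$ onto that segment, and $C_{\mathrm{ACR}}=g^*$ a.e. really does require the realizability reading of the capacity assumption (or a jointly learned $C_{\mathrm{R}}$) that you state.

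Part 2 takes a different route. The paper argues at the level of function classes: a constant $\lambda$ cannot reproduce the optimal solution, so $C_\lambda\neq g^*$ on a set of positive measure (implicitly using $C_{\alpha^*}=g^*$). It then asserts that this produces mis-ranked pairs and a strictly larger AURC. You instead compare $C_\lambda$ with $C_{\alpha^*}$ directly through the conditional quadratic $h_{\boldsymbol{x}}$, whose leading coefficient $(C_{\mathrm{M}}-C_{\mathrm{R}})^2$ gives a strict pointwise gap. This makes your MSE conclusion independent of realizability. The paper's route, once realizability is granted, would instead give strict suboptimality for BCE and MSE at once, via strict propriety.

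Your choice to state the ranking conclusion only in the $L^2$-proximity sense of Proposition~\ref{prop:mse_ranking} is correct, not merely cautious. Take $C_{\mathrm{R}}\equiv\tfrac12$, $C_{\mathrm{M}}=\tfrac12+\psi(g^*-\tfrac12)$ with $\psi(u)=u(1+|u|)$, and $g^*$ spread continuously over $[0.3,0.7]$. Then $C_{\alpha^*}=g^*$ with $\alpha^*=1/(1+|g^*-\tfrac12|)$ non-constant. Yet every $C_\lambda$ with $\lambda>0$ is strictly increasing in $g^*$, so it attains exactly the Bayes-optimal risk--coverage curve and AURC while having strictly larger MSE. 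The paper's ``ranking errors'' step therefore does not follow in general.

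One precision point in your Part 2. What you actually need is that $\alpha^*$ is not a.s.\ constant on $\{C_{\mathrm{M}}\neq C_{\mathrm{R}}\}$. Fixing $\alpha^*$ by convention where $C_{\mathrm{M}}=C_{\mathrm{R}}$ does not secure this. The convention itself can create $\mathrm{Var}[\alpha^*]>0$ on a region where $C_\lambda=C_{\alpha^*}$ for every $\lambda$, so you should state that hypothesis explicitly rather than tacitly.
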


\begin{proof}
Binary cross-entropy is a strictly proper scoring rule \cite{gneiting2007strictly}.
Therefore, its expected risk is uniquely minimized when the predicted confidence
equals the conditional probability of correctness:
\[
C_{\text{ACR}}(\boldsymbol{x}) = \Prob(c(\boldsymbol{x})=1 \mid \boldsymbol{x}) = s^*(\boldsymbol{x}) \quad \text{a.e.}
\]

Since $C_{\mathrm{ACR}} = s^*$ almost everywhere, ranking by $C_{\mathrm{ACR}}$ gets closer to the ordering as ranking by the Bayes-optimal selection function. By the argument in Theorem~\ref{thm:msp_optimal}, this ranking minimizes risk at every coverage level, achieving a lower AURC that is much closer to the minimum AURC.

Restricting $\alpha^{\dagger}(\boldsymbol{x})$ to be constant constrains the hypothesis class to fixed convex combinations of $C_{\mathrm{M}}$ and $C_{\mathrm{R}}$.
Allowing $\alpha^{\dagger}(\boldsymbol{x})$ to depend on $\boldsymbol{x}$ strictly enlarges this function class.
If the Bayes-optimal fusion weight varies with $\boldsymbol{x}$, no constant $\lambda$ can match the optimal solution almost everywhere, making it a suboptimal solution.

To see why fixed fusion yields suboptimal ranking, consider that when the pointwise optimal weight $\alpha^*(\boldsymbol{x})$ varies across inputs, any fixed $\lambda$ necessarily produces:
\begin{equation}
    C_\lambda(\boldsymbol{x}) \neq s^*(\boldsymbol{x}) \quad \text{on a set of positive measure}
\end{equation}
This discrepancy causes ranking errors: there exist pairs $(\boldsymbol{x}_1, \boldsymbol{x}_2)$ where $C_\lambda$ ranks them incorrectly relative to their true correctness probabilities, increasing AURC above the Bayes-optimal minimum.

Hence, input-adaptive fusion strictly improves expected risk and ranking performance over fixed fusion.
\end{proof}

\section{Empirical Verification for Theorem 3.5}
\label{section:A_1}

\begin{table}
\centering
\caption{Empirical verification for Theorem \ref{thm:fusion_benefit}. Our proposed ACR consistently satisfies the condition in Eq. \ref{eq:condition_3.5_appendix}, thereby ensuring that our method strictly reduces MSE and improves selective prediction.}
\label{tab:veryfied_theorem}
\begin{tabular}{lllccc} 
\toprule
Dataset                           & Dataset Description                          &     & QA-TIGER & ST-AVQA & TSPM  \\ 
\hline
\multirow{3}{*}{MUSIC-AVQA}       & \multirow{3}{*}{N/A}                         & $\sigma_{\text{M}}^{2}$  & 0.1542        & 0.1864       & 0.1564     \\
                                  &                                              & $\sigma_{\text{R}}^{2}$  & 0.4517        & 0.2225       & 0.1960     \\
                                  &                                              & $\sigma_{\mathrm{MR}}$ & \textbf{0.0966}        & \textbf{0.1648}       & \textbf{0.1384}     \\ 
\hline
\multirow{6}{*}{MUSIC-AVQA-R}     & \multirow{3}{*}{Head}                        & $\sigma_{\text{M}}^{2}$  & 0.2044        & 0.2038       & 0.1833     \\
                                  &                                              & $\sigma_{\text{R}}^{2}$  & 0.3847        & 0.2557       & 0.2323     \\
                                  &                                              & $\sigma_{\mathrm{MR}}$ & \textbf{0.1183}        & \textbf{0.1865}       & \textbf{0.1721}     \\ 
\cline{2-6}
                                  & \multirow{3}{*}{Tail}                        & $\sigma_{\text{M}}^{2}$  & 0.2092        & 0.2375       & 0.1983     \\
                                  &                                              & $\sigma_{\text{R}}^{2}$  & 0.4168        & 0.3574       & 0.2516     \\
                                  &                                              & $\sigma_{\mathrm{MR}}$ & \textbf{0.1176}        & \textbf{0.2166}       & \textbf{0.1850}     \\ 
\hline
\multirow{12}{*}{MUSIC-AVQA-v2.0} & \multirow{3}{*}{Train Balance, Test Balance} & $\sigma_{\text{M}}^{2}$  & 0.1754        & 0.1897       & 0.1848     \\
                                  &                                              & $\sigma_{\text{R}}^{2}$  & 0.7513        & 0.2951       & 0.2318     \\
                                  &                                              & $\sigma_{\mathrm{MR}}$ & \textbf{0.0535}        & \textbf{0.1597}       & \textbf{0.1595}     \\ 
\cline{2-6}
                                  & \multirow{3}{*}{Train Balance, Test Biased}  & $\sigma_{\text{M}}^{2}$  & 0.1737        & 0.1875       & 0.1830     \\
                                  &                                              & $\sigma_{\text{R}}^{2}$  & 0.7535        & 0.3060       & 0.2388     \\
                                  &                                              & $\sigma_{\mathrm{MR}}$ &  \textbf{0.0524}        & \textbf{0.1581}       & \textbf{0.1601}     \\ 
\cline{2-6}
                                  & \multirow{3}{*}{Train Biased, Test Balance}  & $\sigma_{\text{M}}^{2}$  & 0.1912        & 0.2099       & 0.2039     \\
                                  &                                              & $\sigma_{\text{R}}^{2}$  & 0.7433        & 0.2587       & 0.2426     \\
                                  &                                              & $\sigma_{\mathrm{MR}}$ & \textbf{0.0528}        & \textbf{0.1684}       & 0.1636     \\ 
\cline{2-6}
                                  & \multirow{3}{*}{Train Biased, Test Biased}   & $\sigma_{\text{M}}^{2}$  & 0.1676        & 0.1827       & 0.1808     \\
                                  &                                              & $\sigma_{\text{R}}^{2}$  & 0.7763        & 0.2647       & 0.2403     \\
                                  &                                              & $\sigma_{\mathrm{MR}}$ & \textbf{0.0531}        & \textbf{0.1535}       & \textbf{0.1488}     \\
\bottomrule
\end{tabular}
\end{table}

We verify the conditions of Theorem \ref{thm:fusion_benefit} by analyzing the relationship between MSP and Residual Risk Head (RRH) errors. The key condition for fusion benefit is:
\begin{equation}
\label{eq:condition_3.5_appendix}
    \sigma_{\mathrm{MR}} = \mathbb{E}\!\left[\epsilon_{\text{M}} \cdot \epsilon_{\text{R}}\right]
< \min\!\left(\sigma_{\text{M}}^{2}, \sigma_{\text{R}}^{2}\right)
\end{equation}
where $\epsilon_{\text{M}}$ and $\epsilon_{\text{R}}$ denote the errors of MSP and RRH in ranking correctness, respectively, and $\sigma_{\text{M}}^{2}$, $\sigma_{\text{R}}^{2}$ represent their respective error variances.
This condition states that fusion provides benefit when the two estimators errors are less correlated than their individual error magnitudes. In other words, when MSP and RRH make complementary errors (low correlation), their fusion can leverage their respective strengths to improve overall selective prediction performance. Tab. \ref{tab:veryfied_theorem} presents comprehensive verification across three AVQA backbones and three datasets. We find a universal condition satisfaction, in which our proposed ACR consistently satisfies the fusion benefit condition with substantial margins. 
These results provide empirical support for the theoretical analysis in Theorem~\ref{thm:fusion_benefit}, and indicate that the performance gains of ACR stem from a systematic reduction in confidence estimation error through complementary uncertainty signals.

\section{More Related Works}
\label{section:A_2}

\textbf{Calibration}. Calibration is an alternative perspective on uncertainty estimation, which assesses the alignment between a model's confidence levels and its actual accuracy. The distinction between calibration and selective prediction becomes apparent when we consider a model that is accurate for a certain percentage of inputs but assigns the same confidence level to every prediction. For example, if a model is accurate for $x\%$ of inputs but consistently provides an $x\%$ confidence rating regardless of the input, it is perfectly calibrated. However, this model fails to differentiate between correct and incorrect predictions, meaning it offers a less useful signal for abstaining from predictions.

In unimodal classification, techniques such as temperature scaling \cite{guo2017calibration} and Platt scaling \cite{platt1999probabilistic} improve calibration by adjusting classifier outputs to generate calibrated probabilities based on a validation set. In this study, we examine the effectiveness of vector scaling (VS) \cite{guo2017calibration}, which is a multi-class extension of Platt scaling, in enhancing selective prediction performance.

We argue that reliability in multimodal reasoning tasks, such as Audio-Visual Question Answering (AVQA), depends on two essential components: (i) well-calibrated confidence estimates, which involve achieving a lower expected calibration error (ECE), and (ii) the ability to identify and abstain from unreliable predictions, meaning knowing when to provide an answer and when to refrain from answering. Our results demonstrate that our proposed strategy, Adaptive Confidence Refinement (ACR), significantly improves both components compared to other methods, including maximum softmax probability, Monte Carlo dropout, and vector scaling.

\textbf{Monte Carlo dropout}. Ensemble-based techniques have been explored for selective prediction, where decisions to reject predictions are typically based on the uncertainty statistics of the ensemble \cite{freund2004generalization,varshney2011risk, lakshminarayanan2017simple}. However, these methods require multiple pretrained models, making them impractical due to the high cost of training a sufficiently large ensemble.

We propose using Monte Carlo dropout (MCD) \cite{gal2016dropout, geifman2019selectivenet} as an efficient alternative to improve selective prediction performance. MCD estimates predictive uncertainty by conducting multiple stochastic forward passes with dropout enabled during inference. This method perturbs the network's predictions without the need for multiple independently trained models.
Although MCD is widely used for uncertainty estimation problems, it can be readily adapted for this purpose by applying a rejection threshold based on the maximum softmax probability from the predictions. Empirical studies indicate that MCD can enhance calibration and robustness relative to deterministic models, thereby improving selective performance in certain scenarios.
However, while MCD often improves absolute uncertainty estimation, it does not guarantee optimal sample ranking for selective prediction \cite{geifman2017selective, lakshminarayanan2017simple}. Specifically, the ordering produced by MCD uncertainty scores may still be suboptimal for minimizing selective risk, especially when evaluated across the entire risk-coverage spectrum. Consequently, MCD may offer only limited advantages over simpler confidence-based methods, particularly with respect to inference cost and scalability.

In our experiments, we use MCD as a robust uncertainty-based baseline for selective prediction. Following standard practice, we enable dropout layers during inference and aggregate predictions from multiple stochastic passes. Consistent with previous research, we find that while MCD can enhance calibration, its selective performance is often limited by inference costs and imperfect confidence rankings. This limitation encourages us to explore more efficient alternatives that focus on directly refining confidence scores. However, we also note that MCD's performance does not consistently improve across all scenarios, including in-distribution (ID) and out-of-distribution (OOD) cases and in biased settings.

\textbf{Doctor score for selective prediction}. Traditional selective prediction relies on the Maximum Softmax Probability (MSP) as a measure of confidence. However, MSP is often criticized for its susceptibility to overconfidence and its failure to utilize the information contained in the ```tail" of the predictive distribution. To address these limitations, a recent study \citet{granese2021doctor} introduces \textsc{Doctor}, a post-hoc, training-free method that shifts the focus from the maximum logit to the overall distribution of the softmax vector $\mathbf{p}(\boldsymbol{x})$. By defining the confidence score through the quadratic sum as follows:
\begin{equation}
    \mathcal{C}_{\text{DOC}}(\boldsymbol{x}) = \sum_{k=1}^K p_k(\boldsymbol{x})^2,
\end{equation}

the method implicitly leverages the relationship between the Gini impurity and the model's predictive uncertainty. On the probability simplex, this metric acts as a distance measure that penalizes high-entropy distributions, where the model's mass is more aggressively diffused across multiple classes than under MSP. This makes \textsc{Doctor} a robust and computationally efficient baseline for selective classification, particularly in black-box scenarios where internal feature representations or training-time statistics are inaccessible. However, Doctor score shares the same limitation as MSP: it operates solely on answer logits, thereby typically ignoring the valuable uncertainty signal from intermediate representations.

\section{More Discussion on Ablation Study}
\label{section:B}

We conduct extensive ablation studies to validate the effectiveness of our proposed components. We utilize the QA-TIGER backbone for all experiments in the ablation study section, including those in both the main paper and appendix on the MUSIC-AVQA dataset.

\begin{table}[h]
\caption{Different architecture for two learned heads: Residual Risk Head and Confidence Gating Head on MUSIC-AVQA test set with QA-TIGER}
\label{tab:acr_arch}
\centering
\begin{tabular}{lllllll} 
\toprule
Architecture       & $\mathcal{C}$@1\%  & $\mathcal{C}$@5\%  & $\mathcal{C}$@10\% & $\mathcal{C}$@20\% & AURC & ECE   \\ 
\midrule
1-layer Linear     & 10.24 & 38.72 & 59.87 & 90.41 & 8.93 & 6.69  \\
2-layer MLP        & 14.96 & 40.17 & 60.83 & 90.98 & 8.56 & 5.12  \\
3-layer MLP (Ours) & \textbf{17.21} & \textbf{43.07} & \textbf{61.39} & \textbf{91.96} & \textbf{8.41} & \textbf{4.88}  \\
4-layer MLP        & \underline{16.74} & \underline{42.77} & \underline{61.30} & \underline{91.68} & \underline{8.47} & \underline{4.92}  \\
\bottomrule
\end{tabular}
\end{table}

\textbf{ACR architecture}. As discussed in Sect. \ref{sec:ablation_study}, we conducted experiments using various architectures for our ACR with QA-TIGER models on the MUSIC-AVQA dataset. For the sake of simplicity, we employed the same architecture for both learned heads across all our experiments, including the main experiments and ablation studies. We found that this consistency generally yields the best performance. Our ACR utilizes a 3-layer multi-layer perceptron (MLP). In Table \ref{tab:acr_arch}, we observe that using a simpler, 1-layer linear architecture for the ACR leads to significantly degraded performance, particularly at the low-risk level. Conversely, a more complex design featuring a 4-layer MLP results in slightly lower performance compared to our chosen design across all metrics. Considering these results alongside those in Tab. \ref{tab:ablation_input_selector}, we conclude that input representations and training objectives are crucial. Therefore, efforts to improve the performance of the learned selection function should focus primarily on these factors rather than on designing more complex architectures.

\begin{table}
\centering
\caption{Ablation study on ACR learned heads. \textbf{RRH}: Residual Risk Head, \textbf{CGH}: Confidence Gating Head.}
\label{tab:ablation_2_heads}
\begin{tabular}{cccccccc} 
\toprule
\multicolumn{2}{c}{Components} & \multicolumn{4}{c}{$\mathcal{C}@\mathcal{R}$ $\uparrow$}           & \multirow{2}{*}{AURC $\downarrow$} & \multirow{2}{*}{ECE $\downarrow$}  \\ 
\cmidrule(r){1-2} \cmidrule(lr){3-6}
RRH & CGH                      & $\mathcal{C}$@1\%           & $\mathcal{C}$@5\%           & $\mathcal{C}$@10\%          & $\mathcal{C}$@20\%          &                                    &                                    \\ 
\midrule
\ding{55}  & \ding{51}                       & 14.57          & 41.94          & 60.77          & 91.63          & 8.61                               & 7.82                               \\
\ding{51}  & \ding{55}                       & 13.78          & 40.11          & 60.31          & 91.24          & 8.72                               & 9.14                               \\
\ding{51}  & \ding{51}                       & \textbf{17.21} & \textbf{43.07} & \textbf{61.39} & \textbf{91.96} & \textbf{8.41}                      & \textbf{4.88}                      \\
\bottomrule
\end{tabular}
\end{table}

\textbf{Effectiveness of two learned heads}. We conducted experiments to validate our two-head design and the strategy of correcting the MSP-based confidence rather than replacing it entirely. As shown in Tab. \ref{tab:ablation_2_heads}, using both heads results in more optimal selective prediction performance across multiple metrics. These metrics include maximum coverage at target risk levels, area under the recovery curve (AURC), and expected calibration error (ECE). Notably, removing the Residual Risk Head (RRH) leads to a significant decline in performance, highlighting its essential role in correcting error cases associated with the base confidence signal provided by the MSP method.

Overall, our design represents an upgrade and integration of \textbf{calibration} and \textbf{selection} methods, enabling us to harness the strengths of both approaches. Specifically, when the RRH component is removed, ACR functions solely as a calibration method \cite{platt1999probabilistic}. Conversely, if the CGH component is removed, it operates as a selector, as in prior works \cite{geifman2017selective, whitehead2022reliable}. From our ablation study, we derive several insights:
\begin{enumerate}
    \item Removing the Residual Risk Head (RRH) simplifies our framework to a calibration-based approach. While this variant shows a slight improvement over the pure MSP baseline (as reported in Tab. \ref{tab:car_music_avqa}) across all evaluated metrics, the gains are marginal. This suggests that achieving significant improvements using only MSP-based confidence is challenging.
    \item Eliminating the Confidence Gating Head (CGH) transforms our framework into a selector-based method. Interestingly, although previous research \cite{geifman2019selectivenet, whitehead2022reliable} indicates that selector-based designs substantially outperform MSP, our findings demonstrate that this variant may even decrease performance compared to MSP. This discrepancy highlights that techniques effective in other selective prediction scenarios cannot be directly applied to Audio-Visual Question Answering (AVQA). Furthermore, it emphasizes the unique challenges posed by AVQA as a trimodal multimodal reasoning task, which fundamentally differs from unimodal or bimodal settings.
\end{enumerate}

\textbf{Effectiveness of adaptive weighting fusion}. Tab. \ref{tab:effect_fusion} analyzes the impact of fusion weight $\alpha$ on $\mathcal{R}$-AVQA performance by comparing fixed fusion rules with our input-adaptive fusion mechanism. As $\alpha$ increases from 0.1 to 0.8, performance improves consistently across all coverage levels and calibration metrics, indicating that MSP serves as a strong primary confidence signal. However, performance degrades or saturates when $\alpha$ is further increased to 0.9, suggesting that a single global fusion weight cannot simultaneously handle both easy and hard inputs. In contrast, ACR with input-adaptive fusion achieves the best results across all metrics, including $\mathcal{C}$@$\mathcal{R}$, AURC, and ECE. This confirms that the optimal fusion weight varies across inputs, allowing $\alpha (\boldsymbol{x})$ to adapt dynamically, and yields strictly better selective risk behaviour than any fixed fusion strategy.

\textbf{Comparison between ACR and selector-based strategies}. As discussed in Section \ref{sec:proposed_method}, our ACR introduces a comprehensive confidence estimation framework that integrates both calibration and selector-based strategies. In the main paper, we explicitly compare our method to the calibration approach known as vector scaling. In this section, we extend our analysis by comparing our ACR with two well-known selector-based methods: SelectiveNet \cite{geifman2019selectivenet} and Selector \cite{whitehead2022reliable}. 
SelectiveNet is an end-to-end framework that employs a triple-head architecture to jointly optimize features, predictions, and selection scores, while enforcing a specific ``coverage" constraint during training. However, this joint training may not be well-suited for the $\mathcal{R}$-AVQA task since AVQA models generally use pretrained models for audio, video, and text encoders. Consequently, they rarely train everything from scratch. Therefore, applying SelectiveNet, which is designed for training from scratch, could negatively impact selective prediction performance in the context of $\mathcal{R}$-AVQA.
On the other hand, the Selector method acts as an external ``referee," leveraging the alignment between visual and textual features for selective prediction in Visual Question Answering (VQA). This results in a more stable training process compared to SelectiveNet when applied to $\mathcal{R}$-AVQA.

Tab. \ref{tab:selectivenet_ablation} provides a direct comparison of our ACR with SelectiveNet and Selector across all metrics on the MUSIC-AVQA dataset using the QA-TIGER backbone. The results demonstrate that simply implementing SelectiveNet in the $\mathcal{R}$-AVQA context leads to poor selective prediction performance, even worse than the naive MSP approach. While the Selector method performs better than SelectiveNet, it still falls short compared to our ACR method. This suggests that relying on a single additional head, as in Selector, is insufficient to capture the more challenging cross-modal uncertainty signals inherent in the AVQA task. In contrast, our method consistently outperforms both selector-based strategies and the MSP approach, indicating its robustness and suitability for the $\mathcal{R}$-AVQA task.

\begin{table}[]
\centering
\setlength{\tabcolsep}{3pt}
\caption{Ablation for comparison between our ACR and selector-based methods. Metrics: Coverage at target risk ($\mathcal{C}$@$\mathcal{R}$) $\uparrow$, AURC $\downarrow$, and ECE $\downarrow$ for all selection functions on MUSIC-AVQA dataset. All in \%.}
\label{tab:selectivenet_ablation}
\begin{tabular}{llcccccc}
\toprule
Model $f$ & Sel. Func. $g$ & $\mathcal{C}$@1\% & $\mathcal{C}$@5\% & $\mathcal{C}$@10\% & $\mathcal{C}$@20\% & AURC & ECE \\
\midrule
\multirow{3}{*}{QA-TIGER}   & SelectiveNet \cite{geifman2019selectivenet}                             & 9.72 & 37.74 & 56.91 & 87.26 & 9.92                 & 11.31                 \\
                          & Selector \cite{whitehead2022reliable}                       & \underline{12.27}      & \underline{40.31}      & \underline{58.83}       & \underline{89.75}       & \underline{9.02}     & \underline{10.19}     \\
                          & ACR (Ours) & \textbf{17.21} & \textbf{43.07} & \textbf{61.39}  & \textbf{91.96}  & \textbf{8.41} & \textbf{4.88} \\
\bottomrule
\end{tabular}
\end{table}

\begin{table}
\caption{Effectiveness of input-adaptive fusion mechanism. Metrics: Coverage at target risk ($\mathcal{C}$@$\mathcal{R}$) $\uparrow$, AURC $\downarrow$, and ECE $\downarrow$ for all selection functions on MUSIC-AVQA dataset. All in \%.}
\label{tab:effect_fusion}
\centering
\begin{tabular}{lcccccc} 
\toprule
$\alpha$ value                             & $\mathcal{C}$@1\% & $\mathcal{C}$@5\% & $\mathcal{C}$@10\% & $\mathcal{C}$@20\% & AURC & ECE  \\ 
\midrule
0.1                               & 10.69     & 38.77     & 57.41      & 88.94      & 9.83    & 10.64    \\
0.5                               & 12.66     & 40.11     & 59.91      & 89.48      & 9.62    & 9.75    \\
0.7                               & 14.02     & 41.42     & 59.86      & 89.83      & 8.78    & 9.67    \\
0.8                               & \underline{15.73}     & \underline{42.08}     & \underline{61.17}      & \underline{90.82}      & \underline{8.46}    & \underline{6.79}    \\
0.9                               & 14.47     & 41.99     & 60.71      & 90.35      & 8.49    & 8.11    \\
ACR (Ours, input-adaptive fusion) & \textbf{17.21}     & \textbf{43.07}     & \textbf{61.39}      & \textbf{91.96}      & \textbf{8.41}    & \textbf{4.88}    \\
\bottomrule
\end{tabular}
\end{table}

\section{Additional Analysis and Discussion of MSP and ACR}
\label{section:B_1}

We provide a comparative analysis of Maximum Softmax Probability (MSP) and our proposed method, Adaptive Confidence Refinement (ACR), for selective prediction in Audio-Visual Question Answering (AVQA). This comparison focuses on distribution, calibration, the distinction between correct and incorrect predictions, and the quality of selective predictions. We evaluate using the TSPM backbone across all three datasets.

\begin{figure*}[hpt]
    \vspace{-2mm}
    \centering
    \includegraphics[width=0.98\linewidth]{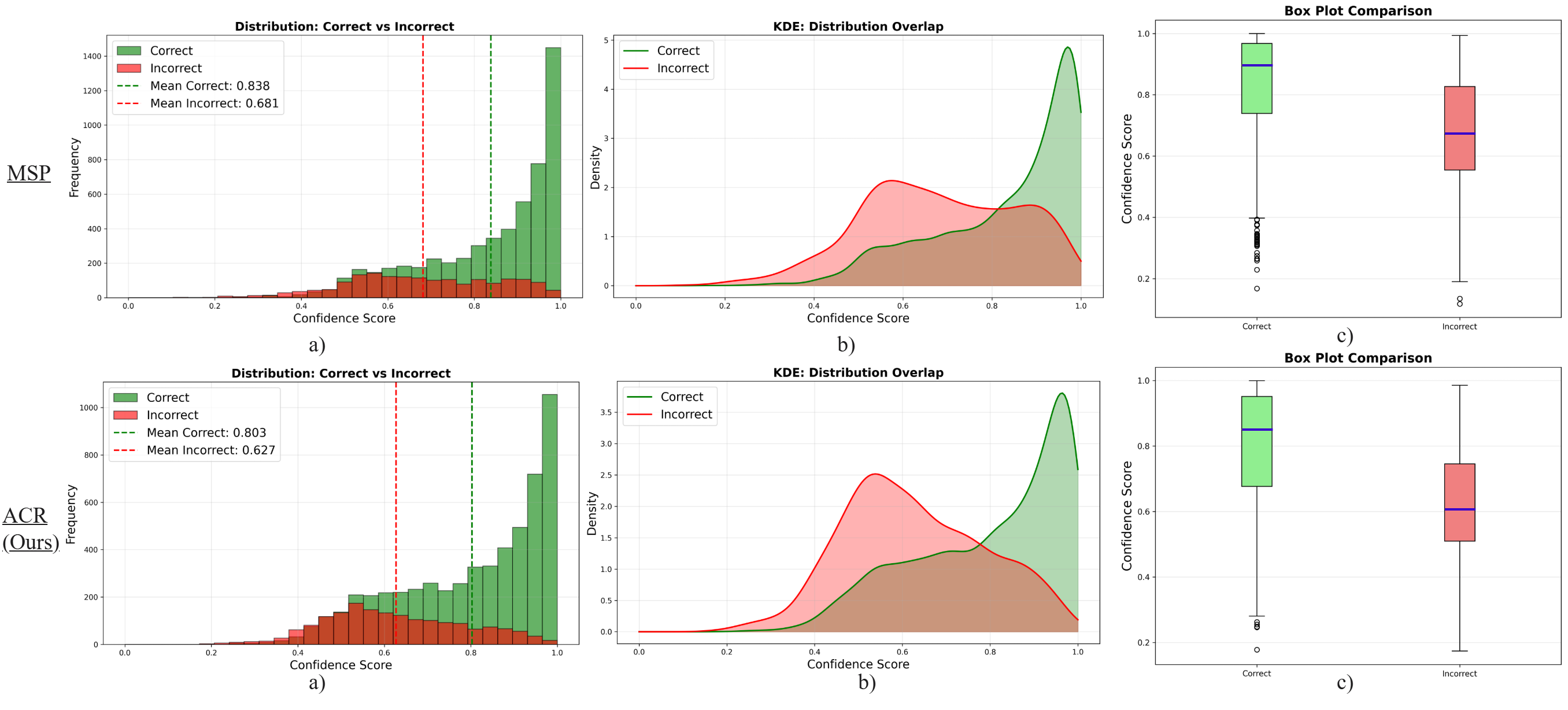}
    \vspace{-4mm}
    \caption{Distribution analysis of MSP and our ACR with TSPM models on MUSIC-AVQA dataset.}
    \label{fig:distribution_analysis}
    \vspace{0.mm}
    \includegraphics[width=0.98\linewidth]{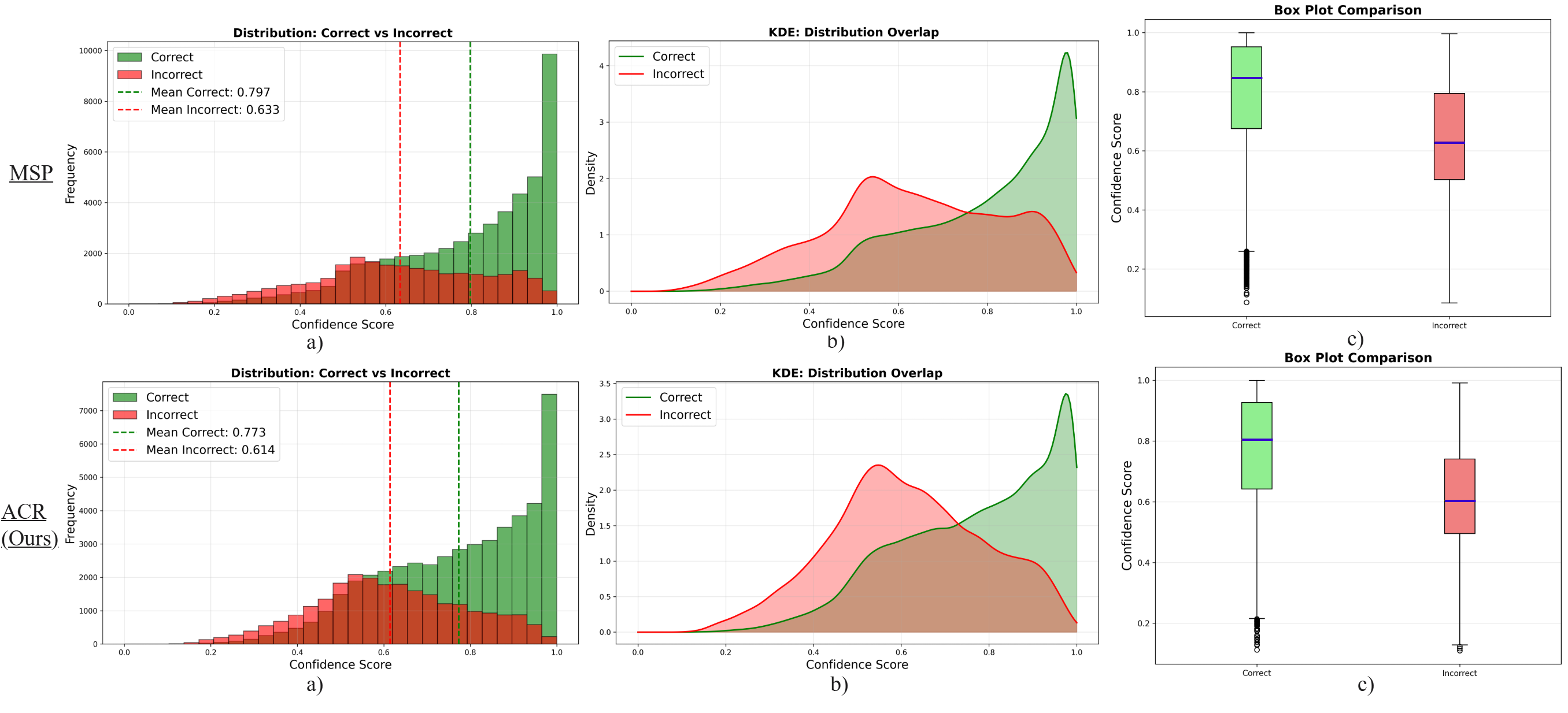}
    \vspace{-4mm}
    \caption{Distribution analysis of MSP and our ACR with TSPM models on MUSIC-AVQA-R dataset (Tail).}
    \label{fig:distribution_analysis2}
    \vspace{0.mm}
    \includegraphics[width=0.98\linewidth]{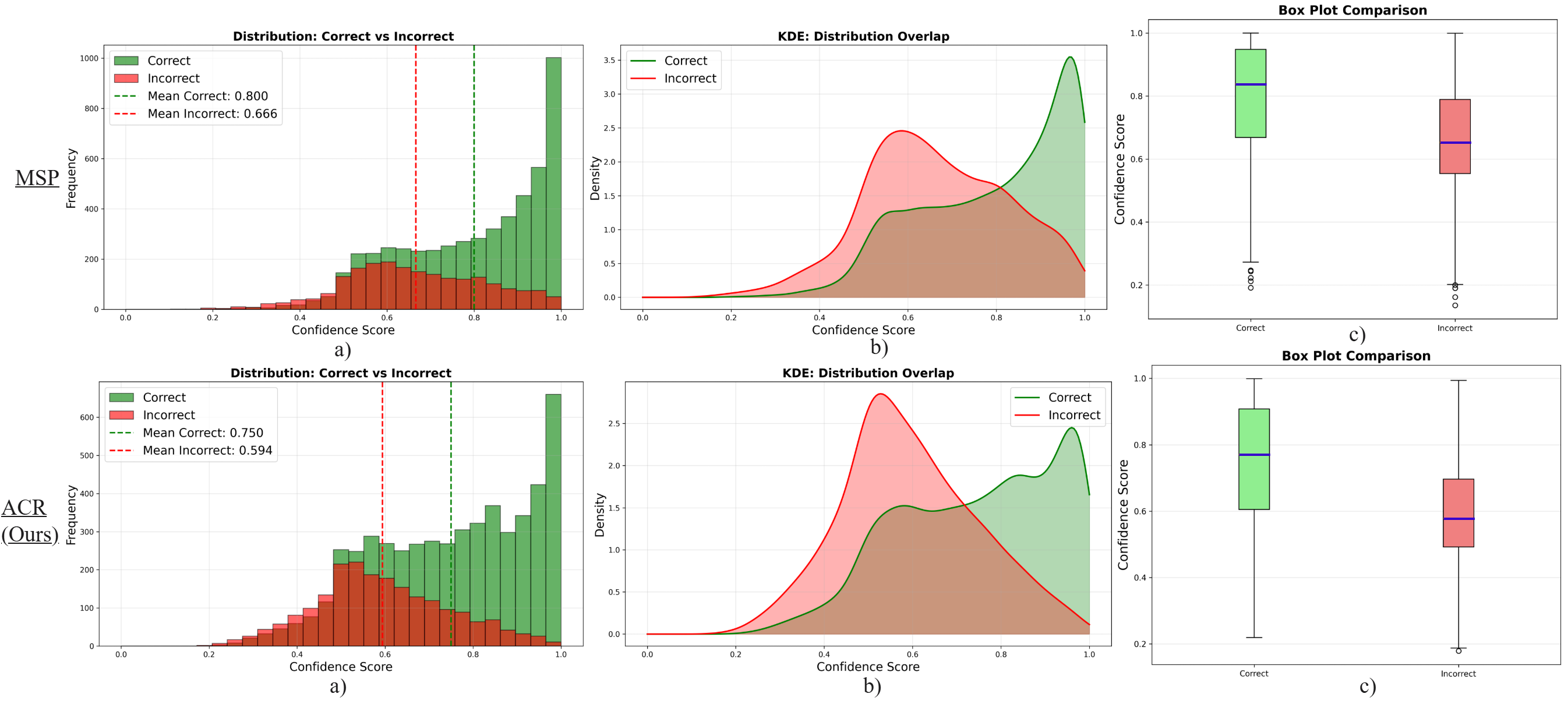}
    \vspace{-4mm}
    \caption{Distribution analysis of MSP and ACR with TSPM models on MUSIC-AVQA-v2.0 dataset (trained on balance, tested on bias).}
    \label{fig:distribution_analysis3}
\end{figure*}




\subsection{Distribution Analysis}

Fig. \ref{fig:distribution_analysis} illustrates the distribution characteristics of confidence scores for correct and incorrect predictions across both methods.

\textbf{Distribution Overlap and Separation}. For the \textbf{MSP baseline (first column)}, the histogram (Fig. \ref{fig:distribution_analysis}a) shows significant overlap between correct predictions (\textcolor{green}{green}) and incorrect predictions (\textcolor{red}{red}). The mean confidence scores are 0.838 for correct predictions and 0.681 for incorrect predictions, resulting in a separation of only 0.16. The KDE plot in Fig. \ref{fig:distribution_analysis}b further illustrates this issue, showing notable overlap in the confidence range from 0.6 to 0.9, indicating difficulty distinguishing between correct and incorrect predictions. This overlapping region contains a substantial portion of both distributions, limiting the effectiveness of MSP as a reliability indicator. Additionally, the box plot in Fig. \ref{fig:distribution_analysis}c confirms this overlap, displaying interquartile ranges that intersect and a number of outliers among the correct predictions that extend down to the 0.2 to 0.4 range. This suggests that MSP can be prone to overconfidence in its subset assessments of incorrect predictions.
\textbf{Our proposed ACR (second column)} demonstrates significantly improved separation characteristics. The histogram in Fig. \ref{fig:distribution_analysis}a shows mean confidence scores of 0.803 for correct predictions and 0.627 for incorrect predictions, resulting in a separation of 0.176. This marks a 12\% improvement over MSP. Notably, the KDE plot in Fig. \ref{fig:distribution_analysis}b shows a more pronounced bimodal distribution. The distribution for correct predictions has a sharper peak near 1.0, while the incorrect predictions show higher density in the lower confidence regions (0.4-0.7). Especially, the reduced overlap in the decision boundary region (0.6–0.8), a confidence range in which selective prediction decisions are most sensitive, indicates enhanced discriminative capability.

\textbf{Variance and Distribution Shape}. The box plot comparison (Fig. \ref{fig:distribution_analysis}c) highlights key differences in the distribution characteristics. MSP exhibits high variance in correct predictions, with the interquartile range spanning approximately 0.75-1.0, and contains numerous low-confidence outliers. This suggests inconsistent confidence calibration. In contrast, ACR exhibits more concentrated confidence levels for correct predictions, with fewer outliers, indicating more reliable confidence estimates. Regarding incorrect predictions, ACR has a lower median ($\sim0.7$ vs. $\sim0.8$), which is beneficial because it yields better separation from correct predictions.

The similar pattern can be observed in MUSIC-AVQA-R (Tail) and MUSIC-AVQA-v2.0 (trained on the balanced set and tested on the biased set) in Fig. \ref{fig:distribution_analysis2} and \ref{fig:distribution_analysis3}. The distribution analysis shows that ACR outperforms MSP in distinguishing between correct and incorrect predictions, achieving a 12\% improvement in mean difference and reducing overlap in the distribution. These enhancements lead to better selective prediction capabilities, allowing for more effective risk-coverage trade-offs in practical video question-answering scenarios. However, challenges persist; both methods still exhibit a significant number of high-confidence incorrect predictions, as indicated by outliers in the box plots. This suggests a need for further improvements in uncertainty quantification. Future work will aim to reduce the ambiguity zone and enhance calibration through advanced techniques in uncertainty quantification.

\begin{figure*}[hpt]
    \centering
    \includegraphics[width=0.88\linewidth]{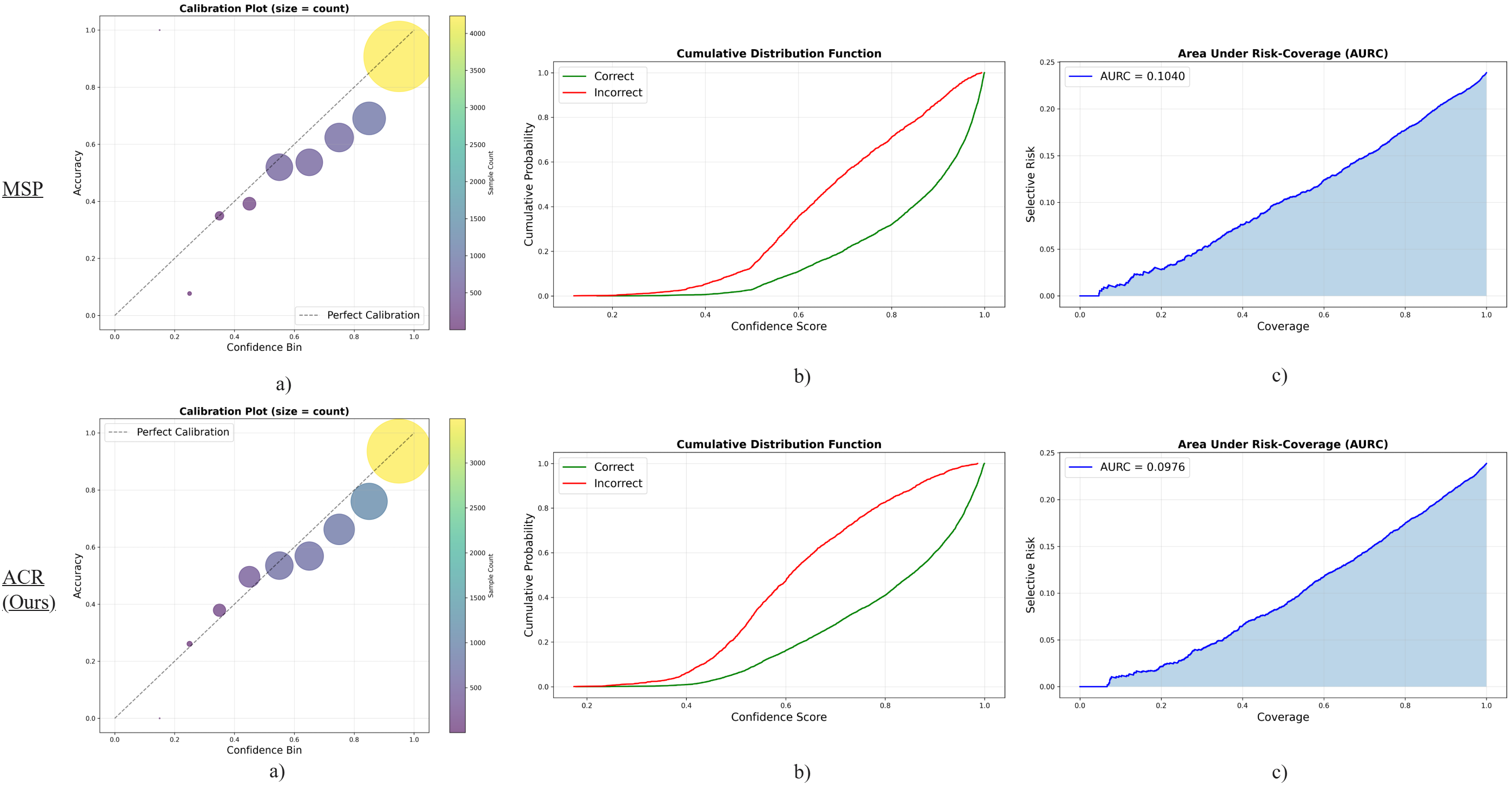}
    \vspace{-5mm}
    \caption{Calibration and AURC analysis of MSP and our ACR with TSPM models on MUSIC-AVQA dataset.}
    \label{fig:calibration_quality}
    \vspace{0.mm}
    \includegraphics[width=0.88\linewidth]{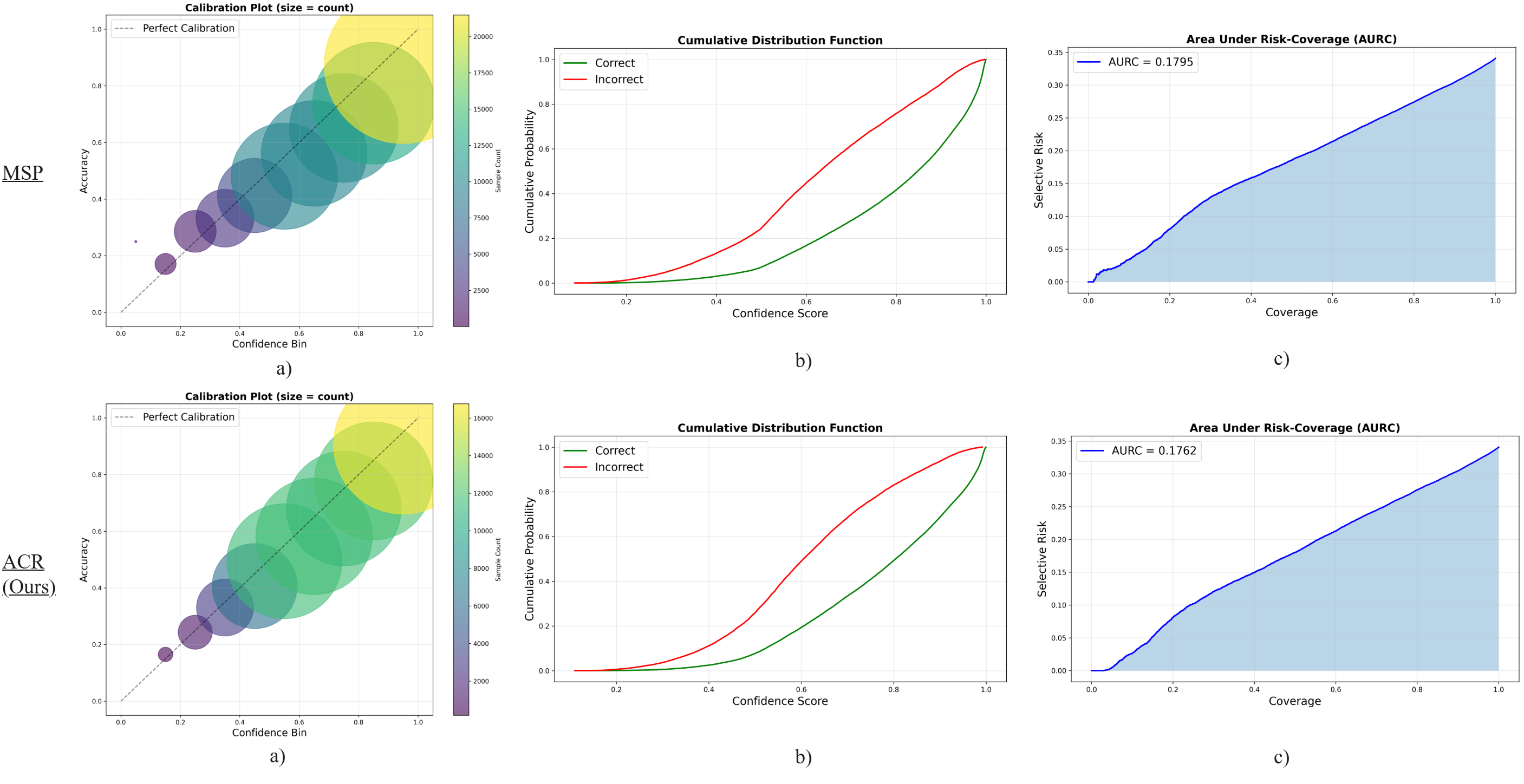}
    \vspace{-5mm}
    \caption{Distribution Analysis of MSP and our ACR with TSPM models on MUSIC-AVQA-R dataset (Tail).}
    \label{fig:calibration_quality2}
    \vspace{0.mm}
    \includegraphics[width=0.88\linewidth]{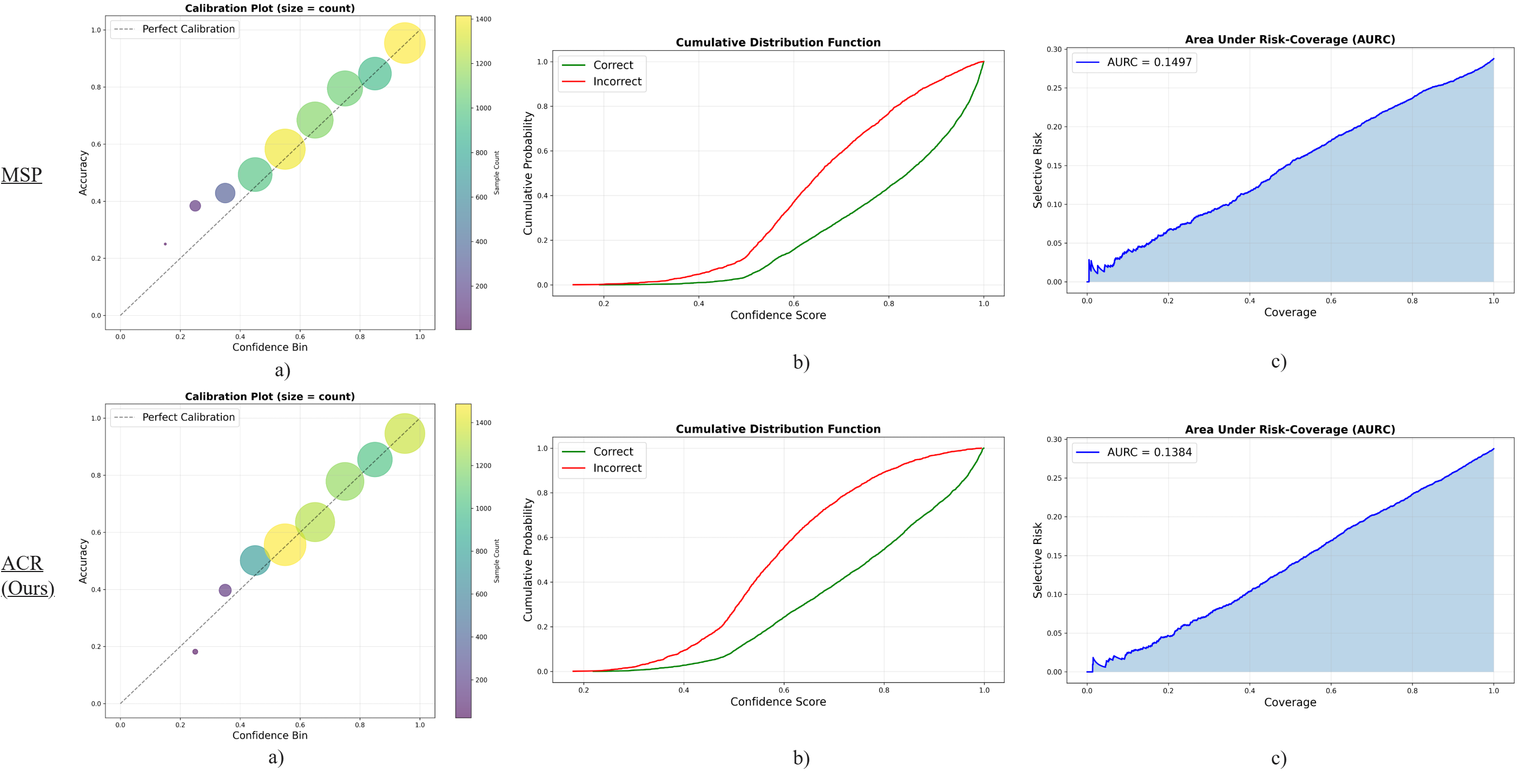}
    \vspace{-5mm}
    \caption{Distribution Analysis of MSP and ACR with TSPM models on MUSIC-AVQA-v2.0 dataset (trained on balance, tested on bias).}
    \label{fig:calibration_quality3}
\end{figure*}




\subsection{Calibration and Selective Prediction Analysis}

Fig. \ref{fig:calibration_quality} provides a detailed evaluation of confidence calibration and selective prediction capabilities, comparing MSP method (first column) with our proposed ACR approach (second column) across three critical dimensions: calibration accuracy, cumulative distribution characteristics, and risk-coverage trade-offs.

\textbf{Calibration Plot Analysis}. In the calibration plot for the \textbf{MSP baseline} (first column) shown in Fig. \ref{fig:calibration_quality}a, we observe systematic miscalibration across different confidence bins, which consistently deviate from the perfect calibration diagonal. While the largest bin (indicated by the yellow circle, containing approximately 4,000 samples) shows near-perfect calibration, this reflects the behavior of the majority class where the model tends to be overly confident. Notably, the mid-range bins (with confidence levels between 0.4 and 0.7) display greater deviations from the diagonal, highlighting poor calibration at crucial points where selective prediction decisions are most important. Additionally, the smaller bins (represented in darker purple) at lower confidence levels contain fewer samples, indicating that MSP rarely assigns low confidence to incorrect predictions.
In contrast, \textbf{our ACR method}, as illustrated in the second column of Fig. \ref{fig:calibration_quality}a, demonstrates improved calibration characteristics. First, the bins are more evenly distributed across the confidence spectrum, with significant sample counts (shown in darker colors) in the 0.5-0.7 range. This suggests that ACR utilizes a broader range of confidence levels instead of clustering predictions. Second, while some slight miscalibration is still observed in the mid-range bins, the deviation from the diagonal is reduced when compared to MSP. The high-confidence bin (again marked by the yellow circle with around 4,000 samples) maintains excellent calibration, but importantly, ACR generates more samples in the lower confidence bins where incorrect predictions are concentrated. This distribution indicates that ACR is better at assigning appropriately lower confidence to uncertain predictions, which is crucial for effective selective prediction.
In summary, while both methods exhibit signs of miscalibration, ACR demonstrates a more uniform distribution of bins and better utilization of the confidence spectrum.

\textbf{Cumulative Distribution Function Analysis}. The Cumulative Distribution Function (CDF) plots, as in Fig. \ref{fig:calibration_quality}b, provide important insights into the ability to distinguish between correct and incorrect predictions. In the case of the \textbf{MSP method}, the red curve (representing incorrect predictions) and the green curve (representing correct predictions) show delayed separation; they remain close together until the confidence level reaches approximately 0.5, after which they begin to diverge. In contrast, our \textbf{ACR} demonstrates earlier and more pronounced separation. The distribution of incorrect predictions (\textcolor{red}{red}) rises more steeply at lower confidence levels, indicating that more incorrect predictions are assigned appropriately low confidence scores.
At a confidence level of 0.6, the MSP method filters out only approximately 36\% of incorrect predictions, while also misclassifying around 10\% of correct ones. In contrast, our ACR approach exhibits significantly stronger separation, with around 50\% of incorrect predictions falling below this threshold, while retaining about 84\% of correct predictions (only 16\% correct predictions fall below a confidence of 0.6). Importantly, ACR maintains this advantage consistently across the entire selection range of the decision-critical zone (0.6-0.9), rather than just at isolated points.
 
\textbf{Area Under Risk-Coverage Curve}. The AURC metric provides a single-number summary of selective prediction performance across all coverage levels, with lower values indicating better performance, as illustrated in Fig. \ref{fig:calibration_quality}c. For the MSP, the AURC is 10.40\%, and the risk-coverage curve shows a gradual reduction in risk as coverage decreases. In contrast, our ACR achieves a lower AURC score of 9.76\%, representing a \textbf{6.15\%} improvement over the MSP. More importantly, the curve exhibits a steeper initial slope, indicating more effective early rejection.

\subsection{Statistical Metrics of Separation Quality}

To thoroughly evaluate the separation quality between correct and incorrect predictions across various confidence estimation methods, we utilize four complementary statistical metrics. 
\begin{enumerate}

    \item Cohen's d: This standardizes the mean difference by the pooled standard deviation, offering a scale-free effect size that considers within-group variability. This allows comparisons across studies, with values above 0.8 indicating a large and practically significant difference.
    
    \item Wasserstein Distance: This metric quantifies the minimum "transportation cost" required to transform one confidence distribution into another. It offers a robust, symmetric distance measure that is directly interpretable in the original confidence scale and is closely related to the area between the cumulative distribution functions (CDFs).

    \item KL Divergence (Kullback-Leibler Divergence): This measures the information-theoretic difference between distributions, capturing how dissimilar they are. It is particularly sensitive to tail behavior but lacks symmetry.

    \item Area Under Risk-Coverage Curve (AURC): This metric quantifies selective prediction performance by measuring the average error rate across all coverage levels; lower values indicate a better ability to reduce risk through confidence-based sample rejection.
\end{enumerate}
These metrics collectively provide a comprehensive framework for assessing the effectiveness of different confidence estimation methods.
We report these metrics for MSP and our ACR selection function in Table \ref{tab:statistical_metric}. As we can see, our ACR consistently outperforms MSP in four representative metrics for separation quality between correct and incorrect predictions, leading to better performance in selective prediction setting.

\begin{table}[h]
\centering
\caption{Summary statistical metrics of separation quality on TSPM backbone.}
\label{tab:statistical_metric}
\scalebox{0.9}{
\begin{tabular}{lccccc} 
\toprule
Dataset                          & Sel. Func. g & Cohen’s d $\uparrow$ & Wasserstein Distance $\uparrow$ & KL Divergence $\uparrow$ & AURC $\downarrow$  \\ 
\midrule
\multirow{2}{*}{MUSIC-AVQ}       & MSP          & 0.9794      & 0.1569                 & 11.11          & 10.40  \\
                                 & ACR          & \textbf{1.0547}      & \textbf{0.1758}                 & \textbf{14.22}          & \textbf{9.76}  \\ 
\midrule
\multirow{2}{*}{MUSIC-AVQ-R}     & MSP      & 0.8827      & 0.1589                 & 8.88          & 17.95  \\
                                 & ACR            & \textbf{0.9074}      & \textbf{0.1640}                 & \textbf{10.22}          & \textbf{17.62}  \\ 
\midrule
\multirow{2}{*}{MUSIC-AVQA-v2.0} & MSP             & 0.8160      & 0.1335                 & 8.15          & 14.97  \\
                                 & ACR     & \textbf{0.9139}      & \textbf{0.1558}                 & \textbf{11.12}          & \textbf{13.84}  \\
\bottomrule
\end{tabular}}
\end{table}

\subsection{Learned $\boldsymbol{\alpha(x)}$ Distribution Analysis and Connection to Theorem \ref{thm:adaptive_fusion}} 
Fig. \ref{fig:alpha_distribution_analysis} illustrates the distribution of the learned adaptive fusion weights, denoted as $\alpha^*(x)$, across six experimental configurations. This analysis reveals systematic patterns that empirically support Theorem \ref{thm:adaptive_fusion}.
In the case of the QA-TIGER backbone (top row), which demonstrates higher base AVQA accuracy, the learned weights often exhibit higher values, with means ranging from $\alpha = 0.858$ to $0.932$. This suggests that the selector primarily relies on the confidence of the MSP while also integrating the learned residual risk signal from the RRH. Conversely, the TSPM backbone (bottom row), which has lower base AVQA accuracy, shows broader weight distributions with lower means ($\alpha = 0.828$ to $0.870$) and significantly higher variance. This is particularly evident in configuration (c), where the weights cover nearly the entire range of $[0.4, 1.0]$.

This behavior, which is dependent on the architecture, aligns precisely with the predictions of Theorem \ref{thm:adaptive_fusion}. When the quality of the MSP varies (higher for QA-TIGER and lower for TSPM), the near-optimal pointwise fusion weight $\alpha^*(x)$ adjusts accordingly. This results in a variance of $\mathrm{Var}[\alpha^{\star}(x)] > 0$ across all settings. Importantly, no distribution collapses to fixed values (i.e., $\alpha = 0$ or $\alpha = 1$), which confirms that the network learns genuine input-adaptive fusion rather than resorting to trivial solutions.
The notably larger variance observed for TSPM further illustrates that when the MSP is less reliable, the selector adaptively downgrades its influence on a per-sample basis, precisely as Theorem \ref{thm:adaptive_fusion} recommends for optimally approximating the Bayes selector $g^*(x)$.

Additionally, the green vertical lines, representing the mean $\alpha$, consistently fall between the red reference line (indicating equal weighting, $\alpha = 0.5$) and the rightmost bins. This shows that the learned policy intelligently balances the calibration strength of the MSP with the complementary risk signals from the RRH, rather than simply averaging or exclusively depending on either component.
Overall, this empirical evidence confirms the non-trivial nature and architecture-awareness of the proposed adaptive fusion mechanism. It directly supports the claim made in Theorem \ref{thm:adaptive_fusion} that adaptive fusion is superior to fixed-weight alternatives when optimal weights vary across different inputs.

\begin{figure*}[hpt]
    \centering
    \includegraphics[width=0.92\linewidth]{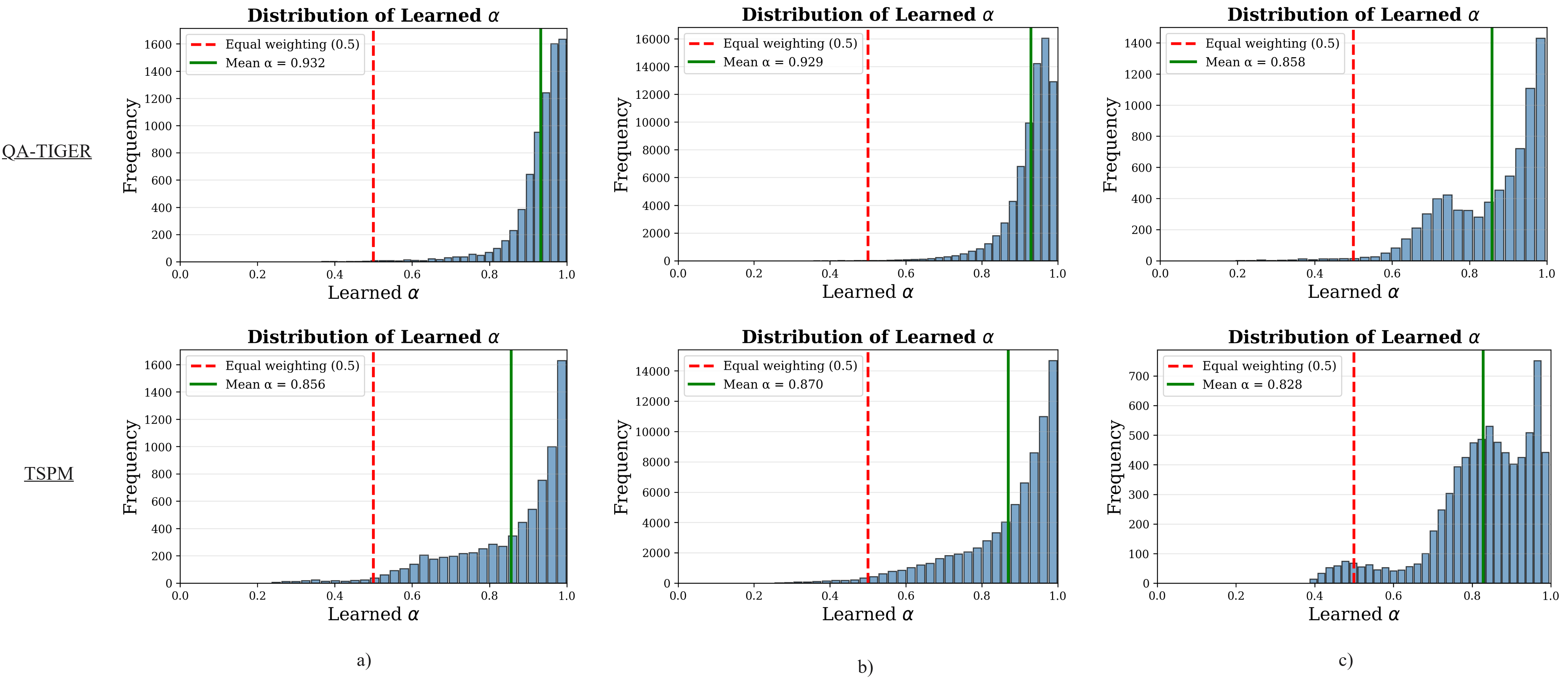}
    \caption{Learned $\alpha$ distribution analysis based on QA-TIGER and TSPM backbone on a) MUSIC-AVQA dataset b) MUSIC-AVQA-R dataset (Tail) and c) MUSIC-AVQA-v2.0 dataset (trained on balance set, tested on biased set).}
    \label{fig:alpha_distribution_analysis}
\end{figure*}

\section{Computational Complexity and Overhead}
\label{section:B_1_1}

Adaptive Confidence Refinement (ACR) is designed to improve selective prediction for standard AVQA task without introducing meaningful computational burden. ACR operates entirely in the confidence estimation stage and does not modify the backbone AVQA model or its inference pathway.

\textbf{Parameter Complexity}. ACR introduces two lightweight components: the Residual Risk Head (RRH) and the Confidence Gating Head (CGH). Both are implemented as three-layer multi-layer perceptrons (MLPs) with a hidden dimension denoted as \(d\). The total number of additional parameters introduced by these two heads is \(\mathcal{O}(d^{2})\).
Most recent Audio-Visual Question Answering (AVQA) models utilize a transformer architecture, which includes multiple layers for visual, audio, and text encoders, as well as multimodal fusion layers (such as cross-attention). For an AVQA model that has \(L\) transformer layers, the total number of parameters for these base AVQA models is \(\mathcal{O}(d^2 L)\), where \(L \gg 6\). As a result, the additional parameters from our new heads are negligible in comparison to those of the base AVQA model. 
For instance, when applying our method to QA-TIGER, the total number of additional parameters constitutes only about \textbf{3\%} of the overall size of the base AVQA model. This confirms that ACR incurs negligible parameter cost relative to the base QAGE architecture.

\textbf{Inference Complexity.} During inference, ACR requires only two forward passes through shallow three-layer perceptrons (MLPs) and a scalar fusion operation. The additional computational cost is $\mathcal{O}(d^2)$ per sample, which is insignificant compared to the $\mathcal{O}(d^2 L)$ complexity associated with multimodal transformer inference. Consequently, ACR introduces negligible latency overhead and does not impact throughput in either real-time or batch inference settings.

\textbf{Training Complexity.} ACR employs a two-stage training strategy. In the first stage, the backbone AVQA model is trained using standard cross-entropy loss, after which it is frozen. In the second stage, only the RRH and CGH components are trained using binary cross-entropy based on correctness labels. This second stage is computationally efficient, converges quickly, and eliminates the need for repeated fine-tuning of the backbone model.

\textbf{Memory Overhead.} ACR only requires storing additional head parameters and the intermediate feature representations produced by the backbone. No extra memory buffers or ensemble models are necessary.

ACR can be implemented in existing AVQA models with minimal computational and memory overhead, without needing retraining or architectural changes. This characteristic makes ACR especially suitable for deployment scenarios that are large-scale or resource-constrained, where reliability is crucial.

\section{Reproduced Results of AVQA Models}
\label{section:B_2}
\begin{table}[h]
\caption{Reproduced accuracy of AVQA models on the original MUSIC-AVQA testset. All values are in $\%$}
\label{tab:reproduced_result}
\centering
\begin{tabular}{lccc} 
\toprule
AVQA Models & Original Accuracy & Reproduced Accuracy & $\Delta$Accuracy  \\ 
\midrule
QA-TIGER \cite{kim2025question}    & 77.62              & 77.52                & -0.10        \\
ST-AVQA \cite{li2022learning}     & 71.52              & 71.26                & -0.26        \\
TSPM \cite{li2024boosting}        & 76.79              & 76.51                & -0.28        \\
\bottomrule
\end{tabular}
\end{table}

In this section, we present the results we reproduced for all AVQA models used in our experiments on the $\mathcal{R}$-AVQA benchmarks, which include QA-TIGER \cite{kim2025question}, ST-AVQA \cite{li2022learning}, and TSPM \cite{li2024boosting}. The summarized results are presented in Tab. \ref{tab:reproduced_result}. We focus on models that have publicly available implementations and open-source data preprocessing pipelines to ensure reproducibility and facilitate fair comparisons. All models were trained and evaluated exactly as described in their original works. As shown in Tab. \ref{tab:reproduced_result}, our reproduced results closely align with the reported performance in the respective papers, confirming the accuracy of our implementations and the reliability of our experimental setup.

\section{Comparing to Data Augmentation}
\label{section:C}

\begin{table}[h]
\centering
\caption{Coverage at target risk ($\mathcal{C}$@$\mathcal{R}$) $\uparrow$, AURC $\downarrow$, and ECE $\downarrow$ for all selection functions on MUSIC-AVQA dataset. All in \%.}
\label{tab:data_augmentation}
\begin{tabular}{llcccccc} 
\toprule
Model $f$                 & Sel. Func. $g$ & $\mathcal{C}$@1\% & $\mathcal{C}$@5\% & $\mathcal{C}$@10\% & $\mathcal{C}$@20\% & AURC          & ECE            \\ 
\midrule
\multirow{3}{*}{QA-TIGER} & MSP            & 14.29             & 41.53             & 60.47              & 91.83              & 8.67          & 8.83   \\
                          & MSP-Augment    & \underline{15.41}     & \underline{41.98}             & \underline{60.77}              & \underline{91.90}      & \underline{8.56}          & \underline{6.23}   \\
                          & ACR (Ours)     & \textbf{17.21}    & \textbf{43.07}    & \textbf{61.39}     & \textbf{91.96}     & \textbf{8.41} & \textbf{4.88}  \\
\bottomrule
\end{tabular}
\end{table}

In our experiments, we set aside a separate validation set to validate the AVQA models and to train the selection functions, referred to as Val-$f$ in Tab. \ref{tab:music-avqa-split} and \ref{tab:music-avqa-v2-split}. An alternative approach is to use these data to further augment the AVQA training set, potentially improving the base model's accuracy. Since the performance of selective prediction is inherently tied to accuracy (as discussed in Sect. \ref{sec:benchmarking_metric}), such augmentation might also benefit selection based on the MSP method. We conduct experiments with QA-TIGER on MUSIC-AVQA dataset to observe the performance change of MSP compared to our ACR approach when we apply above data augmentation. 

Tab. \ref{tab:data_augmentation} compares these two strategies. We find that allocating the additional data to train the selector is consistently more effective for enhancing coverage at low risk levels. While augmenting the AVQA training data does improve overall accuracy, these improvements mainly lead to higher coverage only as the risk tolerance approaches the base model's error rate. In contrast, direct enhancements to the selection function, as our method provides, confer significant advantages in low-risk situations, where reliability is paramount.

\begin{table*}[!t]
\centering
\caption{Comparison on MUSIC-AVQA-R and MUSIC-AVQA-v2 datasets. We report Head (H)/Tail (T) results for MUSIC-AVQA-R and Balance (Bal)/Biased (Bias) results for MUSIC-AVQA-v2. Metrics: coverage at target risk ($\mathcal{C}$@$\mathcal{R}$) $\uparrow$, AURC $\downarrow$, and ECE $\downarrow$ (all in \%).}
\label{tab:merged_table23_appendix}
\resizebox{0.9\textwidth}{!}{%
\begin{tabular}{cll cccccc cccccc}
\toprule
\multirow{2}{*}{Dataset} & \multirow{2}{*}{Model $f$} & \multirow{2}{*}{Sel. Func. $g$} 
& \multicolumn{2}{c}{$\mathcal{C}$@1\%} & \multicolumn{2}{c}{$\mathcal{C}$@5\%} 
& \multicolumn{2}{c}{$\mathcal{C}$@10\%} & \multicolumn{2}{c}{$\mathcal{C}$@20\%} 
& \multicolumn{2}{c}{AURC} & \multicolumn{2}{c}{ECE} \\
\cmidrule(lr){4-5}\cmidrule(lr){6-7}\cmidrule(lr){8-9}\cmidrule(lr){10-11}\cmidrule(lr){12-13}\cmidrule(lr){14-15}
& & & \small H/Bal & \small T/Bias & \small H/Bal & \small T/Bias & \small H/Bal & \small T/Bias & \small H/Bal & \small T/Bias & \small H/Bal & \small T/Bias & \small H/Bal & \small T/Bias \\
\midrule
\multirow{6}{*}{\rotatebox{90}{MUSIC-AVQA-R}}
& \multirow{5}{*}{ST-AVQA}
& MSP        & 0.10 & \underline{0.50} & 0.29 & 1.48 & 14.13 & 2.94 & \underline{44.88} & 7.00 & 21.35 & 36.53 & 5.92 & 6.59 \\[0.5ex]
& & MCD        & 0.10 & 0.44 & \underline{0.42} & 1.39 & \underline{14.19} & 2.74 & 44.85 & 6.76 & \underline{21.34} & 36.68 & 6.02 & 6.56 \\ [0.5ex]
& & VS         & \underline{0.11} & 0.44 & 0.25 & \underline{1.53} & 13.99 & \underline{3.04} & 44.66 & \underline{7.47} & 21.42 & \underline{36.03} & \underline{5.91} & \underline{6.40} \\ [0.5ex]
& & Doctor         & 6.04 & 0.23 & \underline{18.56} & 13.49 & 34.24 & 26.01 & 63.52 & 59.31 & \underline{15.41} & 16.81 & \underline{11.87} & \underline{11.95} \\ [0.5ex]
& & ACR (Ours) & \textbf{0.44} & \textbf{1.03} & \textbf{3.17} & \textbf{2.44} & \textbf{14.80} & \textbf{4.15} & \textbf{44.95} & \textbf{10.03} & \textbf{21.14} & \textbf{35.02} & \textbf{2.60} & \textbf{6.07} \\ [0.5ex] \cdashline{2-15}
& & \textcolor{gray}{Oracle} & \textcolor{gray}{63.45} & \textcolor{gray}{51.33} & \textcolor{gray}{66.12} & \textcolor{gray}{53.50} & \textcolor{gray}{69.79} & \textcolor{gray}{56.47} & \textcolor{gray}{78.52} & \textcolor{gray}{63.56} & \textcolor{gray}{7.98} & \textcolor{gray}{14.78} & \textcolor{gray}{0.00} & \textcolor{gray}{0.00} \\ [0.5ex]
\midrule[\heavyrulewidth]
\multirow{6}{*}{\rotatebox{90}{MUSIC-AVQA v2.0}}
& \multirow{5}{*}{ST-AVQA}
& MSP        & 4.70 & 4.26 & 13.07 & 11.61 & 28.47 & 28.14 & 61.43 & 62.66 & 16.25 & 16.00 & 6.62 & 6.31 \\ [0.5ex]
& & MCD        & \underline{4.76} & 4.24 & 13.23 & 11.29 & 28.69 & \underline{28.67} & 61.29 & 62.80 & 16.28 & 16.01 & 6.32 & 6.11 \\ [0.5ex]
& & VS         & 4.75 & \underline{4.30} & \underline{13.68} & \underline{11.62} & \underline{28.93} & 28.43 & 61.41 & 62.86 & \underline{16.24} & \underline{15.99} & 6.54 & 6.14 \\ [0.5ex]
& & Doctor          & 4.70 & 4.26 & 13.01 & 11.55 & 28.45 & 27.97 & \underline{62.35} & \underline{63.94} & \underline{16.24} & \underline{15.99} & \underline{3.95} & \underline{4.26} \\ [0.5ex]
& & ACR (Ours) & \textbf{7.44} & \textbf{6.29} & \textbf{15.03} & \textbf{15.93} & \textbf{30.53} & \textbf{32.02} & \textbf{62.90} & \textbf{64.49} & \textbf{15.76} & \textbf{15.41} & \textbf{3.90} & \textbf{3.52} \\ [0.5ex] \cdashline{2-15}
& & \textcolor{gray}{Oracle} & \textcolor{gray}{69.30} & \textcolor{gray}{69.80} & \textcolor{gray}{72.22} & \textcolor{gray}{72.74} & \textcolor{gray}{76.23} & \textcolor{gray}{76.78} & \textcolor{gray}{85.76} & \textcolor{gray}{86.38} & \textcolor{gray}{5.54} & \textcolor{gray}{5.36} & \textcolor{gray}{0.00} & \textcolor{gray}{0.00} \\ [0.5ex]
\bottomrule
\end{tabular}%
}
\end{table*}

\begin{table*}[ht]
\caption{Coverage at target risk ($\mathcal{C}$@$\mathcal{R}$) $\uparrow$, AURC $\downarrow$, and ECE $\downarrow$ on MUSIC-AVQA v2.0 (train on biased; test on balance (Bal) and bias (Bias) sets). All in \%}
\label{tab:music_avqa_v2_train_bias_test_bal_bias}
\centering
\scalebox{0.85}{
\begin{tabular}{llcccccccccccc}
\toprule
\multirow{2}{*}{Model $f$} & \multirow{2}{*}{Sel. Func. $g$} & \multicolumn{2}{c}{$\mathcal{C}$@1\%} & \multicolumn{2}{c}{$\mathcal{C}$@5\%} & \multicolumn{2}{c}{$\mathcal{C}$@10\%} & \multicolumn{2}{c}{$\mathcal{C}$@20\%} & \multicolumn{2}{c}{AURC} & \multicolumn{2}{c}{ECE} \\
\cmidrule(lr){3-4}\cmidrule(lr){5-6}\cmidrule(lr){7-8}\cmidrule(lr){9-10}\cmidrule(lr){11-12}\cmidrule(lr){13-14}
& & Bal & Bias & Bal & Bias & Bal & Bias & Bal & Bias & Bal & Bias & Bal & Bias \\
\midrule
\multirow{5}{*}{QA-TIGER}
& MSP & \underline{3.63} & \underline{5.55} & \underline{28.71} & \underline{36.81} & 46.67 & 58.94 & \underline{82.22} & 93.15 & 11.51 & 9.17 & 13.58 & 11.16 \\
& MCD & 3.10 & 4.40 & 27.99 & 36.66 & 46.04 & \underline{58.96} & 81.56 & \textbf{93.48} & \underline{11.50} & 9.20 & 15.02 & 12.40 \\
& VS  & 2.80 & 5.22 & 28.43 & 36.80 & \underline{46.90} & 58.70 & \textbf{82.37} & \underline{93.24} & 11.52 & \underline{9.16} & 15.23 & 12.65 \\
& Doctor  & \underline{3.63} & \underline{5.55} & 28.70 & \underline{36.81} & 46.65 & 58.84 & \textbf{82.37} & 92.72 & 11.51 & 9.17 & \underline{11.26} & \underline{9.14} \\
& ACR (Ours) & \textbf{3.88} & \textbf{12.55} & \textbf{30.13} & \textbf{38.94} & \textbf{47.23} & \textbf{59.58} & 81.52 & 92.34 & \textbf{11.30} & \textbf{8.80} & \textbf{7.56} & \textbf{2.78} \\ \hdashline
& \textcolor{gray}{Oracle} & \textcolor{gray}{75.08} & \textcolor{gray}{78.42} & \textcolor{gray}{78.24} & \textcolor{gray}{81.72} & \textcolor{gray}{82.59} & \textcolor{gray}{86.26} & \textcolor{gray}{92.91} & \textcolor{gray}{97.04} & \textcolor{gray}{3.62} & \textcolor{gray}{2.71} & \textcolor{gray}{0.00} & \textcolor{gray}{0.00} \\
\midrule
\multirow{5}{*}{ST-AVQA}
& MSP & 0.55 & 2.29 & 4.48 & \underline{15.01} & 14.60 & 34.85 & \underline{49.64} & 68.78 & 19.80 & 14.78 & 11.03 & 7.51 \\
& MCD & 0.55 & \underline{2.40} & 4.23 & 14.57 & 14.48 & \textbf{35.36} & 49.56 & \underline{69.06} & 19.82 & 14.80 & 11.03 & 7.61 \\
& VS  & \underline{0.57} & 2.38 & \underline{4.76} & 14.95 & \underline{14.64} & \underline{34.95} & 49.63 & 68.70 & \underline{19.76} & 14.80 & 10.76 & 7.35 \\
& Doctor & 0.55 & 2.26 & 4.47 & 14.97 & 14.60 & \underline{34.95} & 49.53 & 68.75 & 19.78 & \underline{14.76} & \underline{4.89} & \underline{3.06} \\
& ACR (Ours) & \textbf{1.96} & \textbf{5.06} & \textbf{7.91} & \textbf{16.09} & \textbf{17.36} & 34.84 & \textbf{50.02} & \textbf{69.09} & \textbf{19.41} & \textbf{14.42} & \textbf{4.09} & \textbf{2.22} \\ \hdashline
& \textcolor{gray}{Oracle} & \textcolor{gray}{66.40} & \textcolor{gray}{70.80} & \textcolor{gray}{69.19} & \textcolor{gray}{73.78} & \textcolor{gray}{73.04} & \textcolor{gray}{77.88} & \textcolor{gray}{82.17} & \textcolor{gray}{87.62} & \textcolor{gray}{6.69} & \textcolor{gray}{5.00} & \textcolor{gray}{0.00} & \textcolor{gray}{0.00} \\
\midrule
\multirow{5}{*}{TSPM}
& MSP & 1.83 & 1.91 & 9.96 & 15.05 & 21.27 & 38.40 & 54.12 & 71.15 & 18.10 & 14.20 & 9.49 & 6.90 \\
& MCD & \underline{1.85} & \underline{1.95} & 9.95 & 15.43 & 21.18 & \underline{38.67} & 54.10 & 71.16 & 18.12 & 14.21 & \underline{9.45} & 6.81 \\
& VS  & 1.81 & 1.88 & \underline{10.28} & \underline{15.95} & \underline{21.97} & 38.24 & 53.95 & \underline{71.23} & \underline{18.04} & 14.27 & 9.57 & 7.10 \\
& Doctor & 1.83 & 1.91 & 9.89 & 15.21 & 21.15 & 38.53 & \underline{54.25} & 70.45 & 18.14 & \underline{14.11} & \underline{5.51} & \underline{3.58} \\
& ACR (Ours) & \textbf{6.13} & \textbf{4.17} & \textbf{13.64} & \textbf{23.62} & \textbf{26.27} & \textbf{42.43} & \textbf{59.62} & \textbf{72.45} & \textbf{16.98} & \textbf{13.10} & \textbf{4.65} & \textbf{2.40} \\ \hdashline
& \textcolor{gray}{Oracle} & \textcolor{gray}{69.79} & \textcolor{gray}{72.28} & \textcolor{gray}{71.69} & \textcolor{gray}{75.33} & \textcolor{gray}{75.67} & \textcolor{gray}{79.51} & \textcolor{gray}{85.13} & \textcolor{gray}{89.45} & \textcolor{gray}{5.74} & \textcolor{gray}{4.49} & \textcolor{gray}{0.00} & \textcolor{gray}{0.00} \\
\bottomrule
\end{tabular}}
\end{table*}

\section{Additional Quantitative Results}
\label{section:D}

Due to the page limits, we cannot include all table results in the main paper. In this section, we report all remaining table results, including Tab. \ref{tab:merged_table23_appendix} and Tab. \ref{tab:music_avqa_v2_train_bias_test_bal_bias}. Tab. \ref{tab:merged_table23_appendix} presents the performance of ST-AVQA backbone on MUSIC-AVQA-R and MUSIC-AVQA-v2.0 (trained on balance set), while Tab. \ref{tab:music_avqa_v2_train_bias_test_bal_bias} reports performance of all three AVQA backbones on MUSIC-AVQA-v2.0 dataset (trained on biased set).

In accordance with the main paper, we evaluate coverage at various risk tolerance levels, as well as AURC and ECE. Our method, ACR, consistently outperforms all competing approaches across all metrics, demonstrating strong generalization and robustness. Notably, ACR shows particularly significant improvements in low-risk scenarios (e.g., $\mathcal{C}$@1$\%$, $\mathcal{C}$@5$\%$), suggesting that it is more reliable in safety-critical situations where opting for abstention is preferable to making incorrect predictions.

\section{More Qualitative Results}
\label{section:E}

\begin{figure}[ht]
    \centering
    \includegraphics[width=1.0\linewidth]{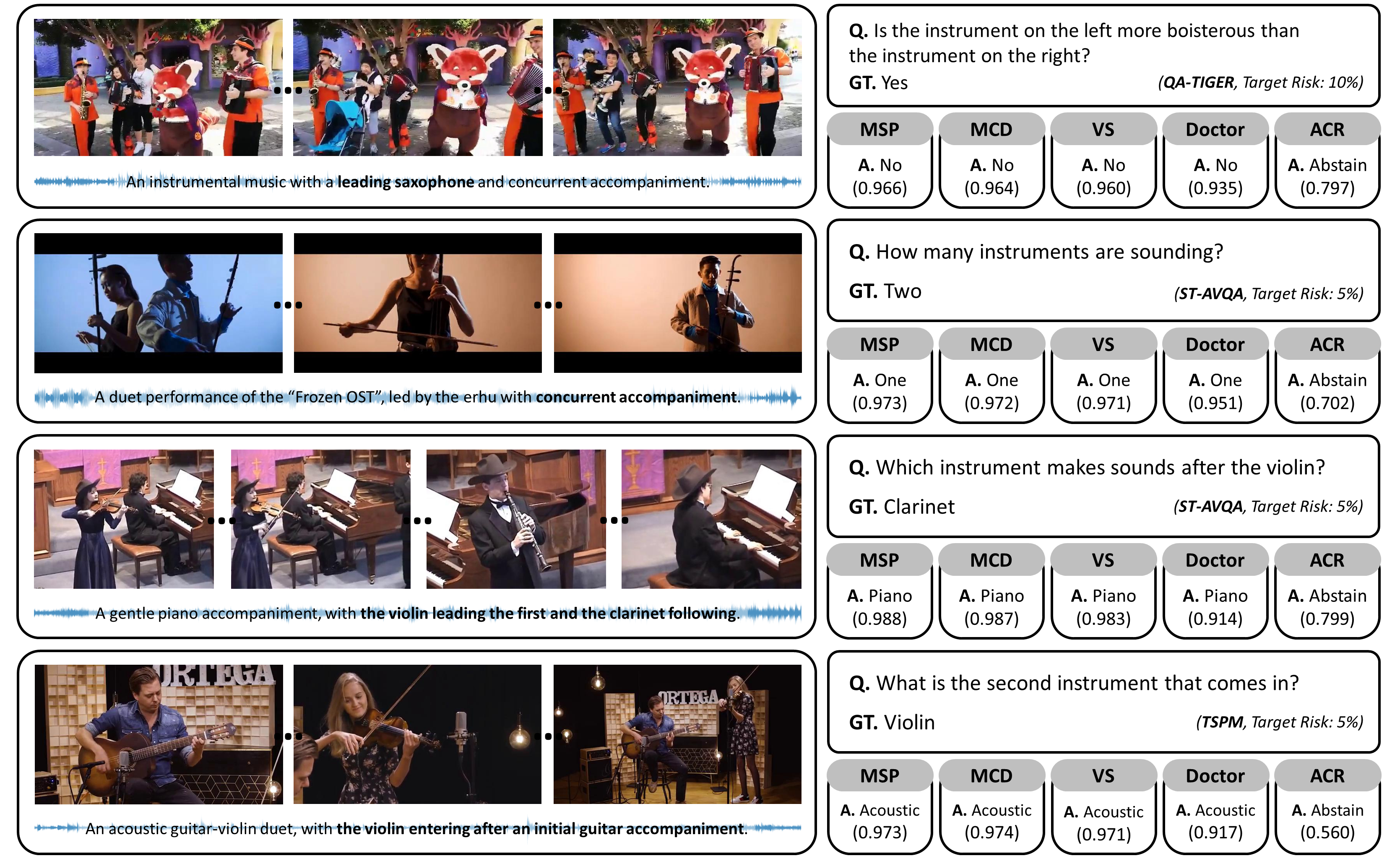}
    \caption{Additional qualitative examples of selective prediction based on various backbone models. For each model baseline (QA-TIGER, ST-AVQA, TSPM), the figure reports the corresponding selective prediction results under different selection strategies, as indicated in the figure.} 
    \label{fig:qualitative_result_appendix}
\end{figure}

Figure~\ref{fig:qualitative_result_appendix} presents additional qualitative examples across different AVQA backbones, including QA-TIGER, ST-AVQA, and TSPM, evaluated under multiple selection strategies.
These examples further illustrate a consistent failure mode of standard confidence-based baselines, which often assign high confidence to incorrect predictions in the presence of multimodal ambiguity.

Across all cases, MSP, MCD, and VS exhibit strong overconfidence, even when the questions involve subtle auditory comparisons (e.g., relative loudness), counting concurrent sound sources, or identifying temporally ordered events.
Such questions require fine-grained reasoning over audio-visual evidence and are particularly sensitive to cross-modal inconsistencies, such as visually salient but acoustically ambiguous cues or overlapping sound sources.

In contrast, ACR consistently assigns lower confidence scores and abstains when the evidence is insufficient or conflicting.
Notably, this behavior is observed regardless of the underlying backbone model, indicating that ACR captures uncertainty signals that are shared across diverse architectures.
These results further support that the primary benefit of ACR lies in reliable confidence estimation and selective abstention.

\section{Threshold Generalization}
\label{section:F}

As discussed in Sect. \ref{sec:benchmarking_metric}, we evaluate how well thresholds selected on the validation set for a specific target risk level transfer to the test split by measuring the resulting test risk. Using QA-TIGER as a representative model, we compare MSP with our adaptive confidence refinement mechanism, ACR. Tab. \ref{tab:test_time_generalization} shows that the resulting risk discrepancies are minimal, with differences of no more than $0.39\%$p for both selection functions on both MUSIC-AVQA and MUSIC-AVQA-v2.0 (train on balance set, test on bias set) datasets. Notably, the risk discrepancies are even smaller at $1\%$ target risk.

\begin{table}[h]
\caption{Generalization of abstention thresholds $\gamma$ from validation to testing using QA-TIGER. $\Delta \mathcal{R}$ and $\Delta \mathcal{C}$ represent the differences in risk and coverage percentage points (\%p) when employing $\gamma$ selected for the target risk $\mathcal{R}$ during validation versus $\gamma$ with maximum $\mathcal{C}@\mathcal{R}$.}
\label{tab:test_time_generalization}
\centering
\begin{tabular}{llccccccccc} 
\toprule
\multirow{2}{*}{Dataset}         & \multirow{2}{*}{Sel. Func. $g$} & \multirow{2}{*}{$\mathcal{R} =$~} & \multicolumn{4}{c}{$\Delta \mathcal{R}$}                                             & \multicolumn{4}{c}{$\Delta \mathcal{C}$}                                                                                                                                                          \\
                                 &                               &                       & 1\%   & 5\%                                       & 10\%  & 20\%  & 1\%                                       & 5\%                                       & 10\%                                      & 20\%                                       \\ 
\midrule
\multirow{2}{*}{MUSIC-AVQA}      & MSP                           &                       & +0.19 & +0.15                                     & +0.37 & +0.36 & +0.13                                     & +0.20                                     & +\textcolor[rgb]{0.122,0.122,0.122}{1.86} & +\textcolor[rgb]{0.122,0.122,0.122}{1.12}  \\
                                 & ACR (Ours)                    &                       & -0.06 & +0.29                                     & +0.39 & -0.04 & +0.28                                     & +0.25                                     & +1.78                                     & -0.24                                      \\ 
\midrule
\multirow{2}{*}{MUSIC-AVQA-v2.0} & MSP                           &                       & +0.08 & +0.24                                     & +0.34 & -0.24 & +0.18                                     & +\textcolor[rgb]{0.122,0.122,0.122}{2.69} & \textcolor[rgb]{0.122,0.122,0.122}{+2.74} & -0.53                                      \\
                                 & ACR (Ours)                    &                       & +0.07 & +0.\textcolor[rgb]{0.122,0.122,0.122}{25} & +0.33 & -0.38 & \textcolor[rgb]{0.122,0.122,0.122}{+0.24} & +\textcolor[rgb]{0.122,0.122,0.122}{1.38} & +2.03                                     & -0.66                                      \\
\bottomrule
\end{tabular}
\end{table}

We also observe differences in coverage achieved using the threshold selected from the validation set compared to the maximum achievable coverage at the same risk level, which we denote as $\Delta C$. These results suggest that risk-based thresholds generalize well across different splits, although they do not allow for a direct comparison of coverage at precisely the same risk level.

\section{Additional Data Split Details}
\label{section:G}

\begin{table}[h]
\caption{Question Statistics Comparison of MUSIC AVQA Datasets}
\label{tab:music_avqa_stats}
\centering
\renewcommand{\arraystretch}{1.3}
\begin{tabular}{lccc p{4.8cm}}
\toprule
Dataset & \#QA & \#Type & Key Features \\
\midrule
MUSIC-AVQA   & 45.8K & 33 & 9 question types \\
MUSIC-AVQA2  & 54K   & 33 & Balanced templates \\
MUSIC-AVQA-R & 211K  & 33 & $25\times$ rephrased; 465 vocab. \\
\bottomrule
\end{tabular}
\end{table}

We conduct experiments using the MUSIC-AVQA \cite{li2022learning}, MUSIC-AVQA-R \cite{ma2024look}, and MUSIC-AVQA-v2.0 \cite{liu2024tackling} datasets, all of which contain audio-visual question-answer pairs. Their question-answer statistics comparison is provided in Tab. \ref{tab:music_avqa_stats}. Tab. \ref{tab:music-avqa-split} and \ref{tab:music-avqa-v2-split} summarize the data splits utilized in our experiments.
For the MUSIC-AVQA and MUSIC-AVQA-v2.0 datasets, we further create splits of their original test sets since we need an additional set for validating the selection function \( g \), as well as for evaluating risk and coverage. These splits are designed to ensure that no videos and, therefore, no associated audio streams or question-answer annotations are shared across the different splits. This strategy is aligned with prior works \cite{whitehead2022reliable, dancette2023improving}

It is important to note that the held-out test set (referred to as ``Test" in Tab. \ref{tab:music-avqa-split} and \ref{tab:music-avqa-v2-split}) is never accessed during the training or validation phases of any components, including the base predictor \( f \) and the selection function \( g \). The MUSIC-AVQA-R dataset shares the same training and validation sets as the MUSIC-AVQA dataset; it only includes a new test set. Consequently, we will use the pretrained models from the MUSIC-AVQA dataset to evaluate the test sets, so there is no need to create a new split for this dataset. This test set is used exclusively for the final evaluation. Unless otherwise specified, all reported results are based on this held-out test set.

\begin{table}[h]
\caption{Data split Details for MUSIC-AVQA \cite{li2022learning} dataset}
\label{tab:music-avqa-split}
\centering
\renewcommand{\arraystretch}{1.3}
\begin{tabular}{lccc} 
\toprule
Source                           & Split Name & Usage                              & \% Source  \\ 
\midrule
MUSIC-AVQA train                 & Train      & Train $f$                            & 100\%      \\ 
\midrule
MUSIC-AVQA val                   & Val-$f$      & Validate $f$, Tran $g$                 & 100\%      \\ 
\midrule
\multirow{2}{*}{MUSIC-AVQA test} & Val-$g$      & Validate g, Evaluate risk-coverage & 20\%       \\
                                 & Test       & Held out test set for testing ($f$, $g$)   & 80\%       \\
\bottomrule
\end{tabular}
\end{table}

\begin{table}[h]
\caption{Data split Details for MUSIC-AVQA-v2.0 \cite{liu2024tackling} dataset}
\label{tab:music-avqa-v2-split}
\centering
\renewcommand{\arraystretch}{1.3}
\begin{tabular}{lccc} 
\toprule
Source                           & Split Name & Usage                              & \% Source  \\ 
\midrule
MUSIC-AVQA-v2.0 train                 & Train      & Train $f$                            & 100\%      \\ 
\midrule
MUSIC-AVQA-v2.0 val                   & Val-$f$      & Validate $f$, Tran $g$                 & 100\%      \\ 
\midrule
\multirow{2}{*}{MUSIC-AVQA-v2.0 test} & Val-$g$      & Validate g, Evaluate risk-coverage & 20\%       \\
                                 & Test       & Held out test set for testing ($f$, $g$)   & 80\%       \\
\bottomrule
\end{tabular}
\end{table}

\section{Additional Model Details}
\label{section:H}

In this section, we provides comprehensive details regarding the model architecture, hyperparameters and implementation which are used for our experiments.

\subsection{AVQA Models}

\begin{table}[t]
\centering
\caption{Hyperparameters of each model used in our experiments. All these hyperparameters are reported directly from their original works.}
\label{tab:hyperparameters}
\begin{tabular}{lccc} 
\toprule
Hyperparameters         & QA-TIGER \cite{kim2025question} & ST-AVQA \cite{li2022learning} & TSPM \cite{li2024boosting}  \\ 
\midrule
Batch Size              & 32        & 64       & 64     \\
Hidden Size             & 512        & 512       & 512     \\
Optimizer               & Adam \cite{adam2014method}       & Adam \cite{adam2014method}       & Adam \cite{adam2014method}     \\
Adam $\epsilon$            & 1$e$-8        & 1$e$-8       & 1$e$-8     \\
Adam $\beta_1$              & 0.95        & 0.90       & 0.90     \\
Adam $\beta_2$              & 0.999        & 0.999       & 0.999     \\
Learning Rate           & 1$e$-4        & 1$e$-4       & 1$e$-4     \\
Learning Rate Scheduler & StepLR        & StepLR       & StepLR     \\
Step Size                   & 8        & 8       & 8     \\
Dropout                 & 0.1        & 0.1 \& 0.5       & 0.1     \\
\# Epoch                   & 15        & 30       & 30     \\
\bottomrule
\end{tabular}
\end{table}

We strictly use the original open-source code for all AVQA models used in our experiments. In this work, we select three AVQA architectures (QA-TIGER\footnote{\textcolor{blue}{https://github.com/AIM-SKKU/QA-TIGER}} \cite{kim2025question}, ST-AVQA\footnote{\textcolor{blue}{https://github.com/GeWu-Lab/MUSIC-AVQA}} \cite{li2022learning}, and TSPM\footnote{\textcolor{blue}{https://github.com/GeWu-Lab/TSPM}} \cite{li2024boosting}) to benchmark the $\mathcal{R}$-AVQA benchmark and evaluate our proposed method. We select AVQA models for benchmarking with varied architectures and different multimodal fusion strategies that integrate audio, visual, and textual modalities. This variety enhances the robust evaluation of the effectiveness and generalization of all selection function, particularly our proposed ACR.
For training AVQA models, we use the hyperparameters from their original paper and GitHub repository, as listed in Tab. \ref{tab:hyperparameters}. All models treat AVQA models ($f$) as a classification task and are trained via a cross-entropy loss. We briefly discuss the models and settings used in our experiments.

\textbf{QA-TIGER} \cite{kim2025question}. QA-TIGER is a state-of-the-art method for the AVQA task and has achieved Highlight paper recognition from CVPR 2025. This serves as a general and solid baseline for the AVQA task, employing early and progressive question embedding along with modeling continuous temporal dynamics through a Mixture of Experts (MoE) \cite{shazeer2017outrageously} architecture. Audio embeddings are extracted using a pretrained VGGish model \cite{hershey2017cnn} pretrained on AudioSet \cite{gemmeke2017audio} to generate frame-level features from 1-second segments, while visual embeddings employ CLIP \cite{radford2021learning} for [CLS] token-based frame-level representations. Text embeddings for questions are derived from CLIP's text encoder, producing sentence-level [EOT] tokens and word-level features. We train this model with original training sets from MUSIC-AVQA and MUSIC-AVQA-v2.0.

\textbf{ST-AVQA} \cite{li2022learning}. ST-AVQA is one of the early works in the AVQA task and was selected for oral presentation at CVPR 2022. ST-AVQA adopts late question integration with a two-stage architecture of hierarchical spatial and temporal grounding modules for improved AVQA accuracy. Audio embeddings are derived from 1-second segments using a pre-trained VGGish model, producing 512-dimensional features via a linear projection from the original 128-dimensional outputs. Visual embeddings are extracted from sampled frames (at 1 fps) per segment employing a pre-trained ResNet-18, yielding 512-dimensional feature maps that capture spatial details. Text embeddings for questions are generated through an LSTM network \cite{graves2012long} on 512-dimensional projected word embeddings, with the final hidden state serving as the question representation, which is injected as a query in the temporal module and element-wise multiplied in the fusion stage to guide attention and enhance cross-modal integration. We train this model on similar datasets as QA-TIGER.

\textbf{TSPM} \cite{li2024boosting}. TSPM is a recent AVQA model that achieves promising performance on several datasets, introducing a novel approach to AVQA research. TSPM leverages intermediate, question-guided progressive perception with powerful prompt grounding. Audio embeddings are derived from 1-second segments using a pre-trained VGGish model, producing 128-dimensional features per segment. Visual embeddings are extracted with a frozen CLIP (ViT-L/14) model, generating 512-dimensional frame-level [CLS] tokens from 1 fps-sampled frames and patch-level token features, which are merged to reduce redundancy while retaining key semantics. Text embeddings for questions are processed through CLIP's text encoder on tokenized words, yielding 512-dimensional sentence-level features.

\subsection{Selection Functions}

We detail the vector scaling (VS) and our Adaptive Confidence Refinement (ACR) selection functions here. We do not cover maximum softmax probability (MSP), Monte Carlo Dropout (MCD), and Doctor, as no additional training is required. While training each selection function, we freeze the AVQA model's weights $f$.

\textbf{Calibration}. The inputs to the calibration are the unnormalized answer logits (i.e., pre-softmax or answer representation just before the softmax layer) of the AVQA model, and the outputs are the calibrated logits. Since we use vector scaling \cite{platt1999probabilistic, guo2017calibration}, we feed the AVQA model's logits into a linear layer with a diagonal weight matrix and a bias term. During training, after the linear layer, we apply a sigmoid activation and, in contrast to \cite{guo2017calibration}, use the resulting activations as input to a binary cross-entropy loss with the AVQA labels. We train the linear layer with the Adam optimizer, using a learning rate of 8$e$-5. The hidden dimension is set to 512. We maintain the same hyperparameters as the original AVQA backbone, as shown in Tab. \ref{tab:hyperparameters}. At test time, we use the output of this linear layer as the calibrated logits, apply softmax, and apply the same abstention procedure as MSP.

\textbf{Adaptive Confidence Refinement}. The inputs to our Adaptive Confidence Refinement (ACR) consist of intermediate multimodal representations (i.e., the inputs to the classifier head of the base AVQA models) and unnormalized answer logits (i.e., the answer representation immediately before the softmax layer). The architectural details, forward pass and stage 2 training of our ACR, which features two learned heads (the Residual Risk Head and Confidence Gating Head), are presented in Tab. \ref{tab:acr_head_arch} and Algorithm \ref{alg:acr_forward}, \ref{alg:acr_training}. We provide a high-level PyTorch-style pseudocode for Adaptive Confidence Refinement (ACR).
The backbone AVQA model is frozen during confidence refinement.
For reproducibility, we have also included our actual PyTorch implementation in the supplementary material. 
To ensure simplicity and efficiency, we utilize the same three-layer multi-layer perceptron (MLP) architecture for both heads. We begin training these two heads jointly in Stage 2 using the Adam optimizer with a learning rate of 8e-5. The hidden dimension is set to $d' = $ 512 (the intermediate multimodal features dimension) + 42 (the logits dimension). We retain the same hyperparameters as those of the original AVQA backbone, as shown in Table \ref{tab:hyperparameters}. During testing, we calculate the confidence score of our ACR based on the formulation presented in Equation \ref{eq:final_conf}.

\begin{table}[H]
\centering
\caption{Architecture of Residual Risk Head (RRH) and Confidence Gating Head (CGH). $\mathbf{h}$ and $\mathbf{z}$ are the fused multimodal representations and answer logits, respectively.}
\label{tab:acr_head_arch}
\begin{tabular}{l c}
\toprule
Layer & Configuration \\
\midrule
Input & [$d$-dim feature $\mathbf{h}$, 42-dim feature $\mathbf{z}$] \\
FC-1 & Linear($d' \rightarrow d'$) + ReLU + Dropout($p{=}0.1$) \\
FC-2 & Linear($d' \rightarrow d'$) + ReLU + Dropout($p{=}0.1$) \\
FC-3 (Output) & Linear($d' \rightarrow 1$) + Sigmoid \\
\bottomrule
\end{tabular}
\end{table}

\begin{algorithm}[H]
\caption{Adaptive Confidence Refinement (ACR): Forward Pass}
\label{alg:acr_forward}
\begin{algorithmic}[1]
\REQUIRE Input sample $\boldsymbol{x}=(a,v,q)$
\REQUIRE Frozen AVQA backbone $f(\cdot)$
\REQUIRE Residual Risk Head $\mathrm{RRH}(\cdot)$
\REQUIRE Confidence Gating Head $\mathrm{CGH}(\cdot)$

\STATE Obtain answer logits and fused representation:
\[
(\mathbf{z}, \mathbf{h}) \leftarrow f(\boldsymbol{x})
\]

\STATE Compute MSP confidence:
\[
C_\mathrm{M}(\boldsymbol{x}) \leftarrow \max_k \mathrm{softmax}(\mathbf{z})_k
\]

\STATE Predict residual risk confidence:
\[
C_\mathrm{R}(\boldsymbol{x}) \leftarrow \mathrm{RRH}(\mathbf{h})
\]

\STATE Predict confidence gate:
\[
\alpha(\boldsymbol{x}) \leftarrow \mathrm{CGH}(\mathbf{h})
\]

\STATE Fuse confidences:
\[
C_{\mathrm{ACR}}(\boldsymbol{x}) \leftarrow
\alpha(\boldsymbol{x})\,C_\mathrm{M}(\boldsymbol{x}) + (1-\alpha(\boldsymbol{x}))\,C_\mathrm{R}(\boldsymbol{x})
\]

\STATE \textbf{return} $C_{\mathrm{ACR}}(\boldsymbol{x})$
\end{algorithmic}
\end{algorithm}

\begin{algorithm}[H]
\caption{Training ACR with Binary Cross-Entropy}
\label{alg:acr_training}
\begin{algorithmic}[1]
\REQUIRE Training set $\{(x_i,y_i)\}_{i=1}^N$
\REQUIRE Frozen AVQA backbone $f(\cdot)$
\REQUIRE RRH and CGH parameters $\theta_\mathrm{R}, \theta_\mathrm{G}$
\REQUIRE Binary cross-entropy loss $\mathcal{L}_{\mathrm{BCE}}$

\FOR{each minibatch $\mathcal{B}$}
    \STATE Compute logits and representations:
    \[
    (\mathbf{z}_i, \mathbf{h}_i) \leftarrow f(x_i),
    \quad \forall x_i \in \mathcal{B}
    \]

    \STATE Compute MSP confidence:
    \[
    C_\mathrm{M}(x_i) \leftarrow \max_k \mathrm{softmax}(\mathbf{z}_i)_k
    \]

    \STATE Predict residual confidence and gate:
    \[
    C_\mathrm{R}(x_i) \leftarrow \mathrm{RRH}(\mathbf{h}_i), \quad
    \alpha(x_i) \leftarrow \mathrm{CGH}(\mathbf{h}_i)
    \]

    \STATE Fuse confidence:
    \[
    C_{\mathrm{ACR}}(x_i) \leftarrow
    \alpha(x_i)C_\mathrm{M}(x_i) + (1-\alpha(x_i))C_\mathrm{R}(x_i)
    \]

    \STATE Compute correctness label:
    \[
    c(x_i) \leftarrow \mathbb{I}[\hat y_i = y_i]
    \]

    \STATE Update parameters by minimizing:
    \[
    \mathcal{L}_{\mathrm{BCE}}(C_{\mathrm{ACR}}(x_i), c(x_i))
    \]
\ENDFOR
\end{algorithmic}
\end{algorithm}

\section{Standard Deviation for TSPM}
\label{section:H_1}

In this section, to assess result stability, we present the mean and standard deviation of TSPM on the MUSIC-AVQA dataset. These results slightly differ from those reported in Tab. \ref{tab:car_music_avqa} in the main paper, which were based on a single random seed. Tab. \ref{tab:std_mean} reports performance across five random seeds, showing low variance across selection functions (standard deviation from 0.04 to 0.05). Given the computational cost of multi-seed evaluation, we verified that QA-TIGER and ST-AVQA exhibit similar stability across different seeds, so we report single-run results for these architectures as in Tab. \ref{tab:car_music_avqa}. This ensures our conclusions are not artifacts of random initialization.

\begin{table}[h]
\centering
\setlength{\tabcolsep}{3pt}
\caption{Mean and standard deviations for risk-coverage, AURC, and ECE metrics for different selection functions with TSPM backbone on MUSIC-AVQA dataset. All in \%.}
\label{tab:std_mean}
\scalebox{1.0}{
\begin{tabular}{llcccccc}
\toprule
Model $f$ & Sel. Func. $g$ & $\mathcal{C}$@1\% & $\mathcal{C}$@5\% & $\mathcal{C}$@10\% & $\mathcal{C}$@20\% & AURC & ECE \\
\midrule
\multirow{5}{*}{TSPM}     & MSP                   & 8.60 $\pm$ 0.66     & 31.06 $\pm$ 0.69       & 49.64 $\pm$ 0.27     & 87.56 $\pm$ 0.30       & 10.46 $\pm$ 0.05     & 4.45 $\pm$ 0.23     \\
                          & MCD                       & 8.77 $\pm$ 0.29      & 29.90 $\pm$ 0.46      & 50.25 $\pm$ 0.48       & 87.30 $\pm$ 0.54       & 10.47 $\pm$ 0.05     & 4.20 $\pm$ 0.44     \\
                          & VS            & 6.70 $\pm$ 0.34      & 30.13 $\pm$ 0.69      & 50.42 $\pm$ 0.36       & 87.54 $\pm$ 0.35       & 10.46 $\pm$ 0.05     & 4.02 $\pm$ 0.43     \\
                          & Doctor                   & 8.63 $\pm$ 0.57     & 31.02 $\pm$ 0.61       & 49.57 $\pm$ 0.33     & 87.64 $\pm$ 0.31       & 10.47 $\pm$ 0.05     & 2.19 $\pm$ 0.20     \\ \hdashline
                          & ACR (Ours) & 8.92 $\pm$ 0.48      & 35.53 $\pm$ 0.38      & 54.22 $\pm$ 0.50       & 88.67 $\pm$ 0.34       & 9.74 $\pm$ 0.04     & 1.64 $\pm$ 0.11     \\
\bottomrule
\end{tabular}}
\vspace{-2mm}
\end{table}

\section{Relation to Calibration and Conformal Prediction }
\label{section:I}

\textbf{Relation to Calibration}. Unlike calibration methods \cite{platt1999probabilistic, zadrozny2002transforming, guo2017calibration}, which learn monotonic transformations of logits to minimize the Expected Calibration Error (ECE), Adaptive Confidence Ranking (ACR) directly optimizes selective prediction by training a discriminative selector on intermediate features. Calibration focuses on probability matching, ensuring that samples with a given confidence level are correct with a proportional fraction of the time. In contrast, selective prediction requires optimal ranking, where correct predictions are assigned higher confidence than incorrect ones. These two objectives can diverge: a well-calibrated model may still rank samples suboptimally, whereas ACR can reorder samples by leveraging richer feature representations beyond the output layer.

\textbf{Relation to Conformal Prediction}. Our Adaptive Confidence Refinement (ACR) framework is based on a foundational principle shared with conformal prediction \cite{vovk2005algorithmic, angelopoulos2023conformal}: both methods use a held-out calibration set to improve potentially miscalibrated model outputs and yield more reliable uncertainty estimates. In conformal prediction, nonconformity scores computed from the calibration set are used to construct prediction sets that provide formal coverage guarantees (e.g., the true label is included in the prediction set with probability at least $1 - \alpha$). Similarly, ACR employs a held-out selection set \( D_{\text{select}} \) to train the selector head, ensuring that it learns from the natural error patterns of the base model rather than simply memorizing training examples. 
However, the two approaches differ in their objectives and guarantees. While conformal prediction provides distribution-free coverage guarantees for prediction sets, ACR focuses on learning a discriminative confidence function tailored for ranking-based selective prediction to maximize coverage at a specific risk level. Our approach can be seen as a learned, task-specific alternative that sacrifices formal guarantees in exchange for improved empirical performance in selective prediction. Importantly, ACR complements conformal methods: the refined confidence scores generated by our selector could serve as enhanced nonconformity scores within a conformal framework, offering an intriguing avenue for future research.

\section{Additional Discussion and Future Work}
\label{section:I_1}

Selective prediction aims to enhance the reliability of machine learning systems by allowing models to abstain from making predictions when they are likely to be incorrect. Our work contributes to this goal by exploring confidence estimation amid multimodal uncertainty and proposing an input-adaptive confidence fusion strategy for audio-visual question answering. To the best of our knowledge, we are the first to formalize the \textit{Reliable} Audio-Visual Question Answering problem. By addressing misalignment among modalities heterogeneous confidence signals, our approach improves the trade-offs between selective risk and coverage without altering the underlying task predictor. This is particularly beneficial in safety-critical or user-facing multimodal AI applications, such as assistive systems, educational tools, and multimedia retrieval, where inaccurate answers can mislead users and undermine their trust.

However, selective prediction systems may create information access imbalances if abstentions disproportionately affect certain input types or user groups, especially in multimodal contexts where data quality can vary across modalities. While our method does not eliminate these risks, it aims to reduce arbitrary or overconfident failures by improving confidence ranking in the face of modality misalignment or disagreement. We emphasize the importance of transparency regarding abstention behavior and failure modes in the deployment of selective models. Overall, this work advances the understanding of reliability in multimodal learning and encourages further research on principled uncertainty estimation and the responsible deployment of selective prediction systems.

The current formulation of Adaptive Confidence Refinement (ACR) is designed for classification-style AVQA models with a fixed and finite answer space. Both the Maximum Softmax Probability (MSP) and the Residual Risk Head rely on logits defined over a closed set of candidate answers. However, ACR is implemented for discrete answer selection; however, its fundamental principle, which is input-adaptive confidence refinement, is not limited to classification tasks. A promising avenue for future exploration is extending ACR to support free-form generation. This could involve applying residual confidence correction at the token or sequence level, utilizing generation probabilities, entropy, or self-consistency signals. We will address a thorough investigation of these extensions in future work.

Although ACR substantially improves the separation between correct and incorrect predictions, our analysis in Appendix~\ref{section:B_1} shows that a subset of incorrect predictions still receive high confidence scores. These failure cases often arise from complex multimodal ambiguities, such as conflicting audio-visual cues or temporally subtle requirements, where both the base AVQA model and the selector struggle to detect unreliability. 

An important direction for future work is extending Adaptive Confidence Refinement beyond closed-set AVQA to multimodal large language models (MLLMs) with open-ended generation.
In such settings, confidence estimation cannot rely on a fixed answer space and must instead account for uncertainty in reasoning chains, the confidence at the token or sequence level, and self-consistency signals across generated outputs.




\section{Broader Impact}
\label{section:J}

Reliable uncertainty estimation is crucial for deploying trustworthy AI, particularly in safety-critical applications, where incorrect predictions can have severe consequences. Our Adaptive Confidence Refinement (ACR) framework helps achieve this by enabling models to more accurately identify when their predictions are likely to be incorrect, thereby enabling informed abstention or deferral to human judgment.

In the field of Audio-Visual Question Answering, enhanced selective prediction could greatly benefit accessibility applications, such as assistive systems for users who are visually or hearing-impaired. This ensures that the system provides responses only when it is confident, rather than risking the delivery of misleading information.

The principle of learning to abstain is applicable to various high-stakes domains, including medical diagnosis, autonomous navigation, and content moderation, where the cost of a wrong prediction is significantly greater than that of deferring to human expertise. 
However, we recognize potential risks in this approach. Practitioners may overreli on confidence scores without fully understanding their limitations, and confidence estimators can perform inconsistently across demographic or content subgroups, which could lead to unequal rates of abstention.

We urge practitioners to assess ACR's behavior across various subpopulations before deployment and to consider confidence scores as one factor among many in decision-making, rather than a definitive measure of correctness. Our work does not introduce new capabilities for generating harmful content; instead, it provides tools to make existing AI systems more reliable and aware of their limitations.

\end{document}